\newcommand*{\addFileDependency}[1]{
  \typeout{(#1)}
  \@addtofilelist{#1}
  \IfFileExists{#1}{}{\typeout{No file #1.}}
}
\theoremstyle{plain}
\newtheorem{theorem}{Theorem}[section]
\newtheorem{proposition}[theorem]{Proposition}
\newtheorem{lemma}[theorem]{Lemma}
\theoremstyle{definition}
\newtheorem{definition}[theorem]{Definition}
\newtheorem{assumption}[theorem]{Assumption}
\theoremstyle{remark}
\newtheorem{remark}[theorem]{Remark}
\title{The Actor-Critic Update Order Matters for PPO in Federated Reinforcement Learning}
\author{%
  Zhijie Xie \\
  Department of Electronic and Computer Engineering\\
  The Hong Kong University of Science and Technology\\
  Hong Kong, China \\
  \texttt{zhijie.xie@connect.ust.hk} \\
  \And
  Shenghui Song \\
  Department of Electronic and Computer Engineering\\
  The Hong Kong University of Science and Technology\\
  Hong Kong, China \\
  \texttt{eeshsong@ust.hk} \\
}
\begin{document}

\maketitle

\begin{abstract}
In the context of Federated Reinforcement Learning (FRL), applying Proximal Policy Optimization (PPO) faces challenges related to the update order of its actor and critic due to the aggregation step occurring between successive iterations. In particular, when local actors are updated based on local critic estimations, the algorithm becomes vulnerable to data heterogeneity. As a result, the conventional update order in PPO (critic first, then actor) may cause heterogeneous gradient directions among clients, hindering convergence to a globally optimal policy. To address this issue, we propose FedRAC, which reverses the update order (actor first, then critic) to eliminate the divergence of critics from different clients. Theoretical analysis shows that the convergence bound of FedRAC is immune to data heterogeneity under mild conditions, i.e., bounded level of heterogeneity and accurate policy evaluation. Empirical results indicate that the proposed algorithm obtains higher cumulative rewards and converges more rapidly in five experiments, including three classical RL environments and a highly heterogeneous autonomous driving scenario using the SUMO traffic simulator. 
\end{abstract}

\section{Introduction\label{sec:Introduction}}


As one of the most prominent Reinforcement Learning (RL) algorithms, Actor-Critic methods are equipped with rich theory, ranging from the Policy Gradient (PG) theorem \cite{NIPS1999_6449f44a,NIPS1999_464d828b} to the convergence analysis of neural Proximal Policy Optimization (PPO) and Trust Region Policy Optimization (TRPO) \cite{NEURIPS2019_227e072d}.
However, the theoretical properties of the federated Actor-Critic framework remained largely unexplored for an extended period until recently. FedPG-BR tackled the fault tolerance issue of federated Policy Gradient (PG) methods, but they assume homogeneous clients \cite{fan2021fault}. PAvg provided the first convergence proof for federated PG methods with environment heterogeneity \cite{pmlr-v151-jin22a}. FedKL proposed a Kullback-Leibler (KL) divergence-regularized objective for heterogeneous FRL clients and achieved a similar monotonicity property as the original TRPO for centralized RL \cite{10038492}. More recently, FedNPG/FedNAC investigated the convergence properties of federated natural PG and Actor-Critic with neural network parameterization \cite{NEURIPS2024_dbdea785}.

Despite the above progress, many challenges still need to be tackled before we can fully exploit the power of FRL, especially those inherited from either FL or RL. Notably, the data heterogeneity issue persists, as different clients may operate under different environment dynamics, leading to various local objectives and consequently different gradient estimations \cite{10038492}. Furthermore, weight divergence is observed even in scenarios with Independent and Identically Distributed (IID) data \cite{DBLP:journals/corr/abs-1806-00582}. More importantly, the unique structure of the Actor-Critic framework complicates the synchronization of different models, especially in terms of the update order of the actor and critic.

To this end, a pertinent question arises: should the actor or the critic be updated first in FRL? This is not a critical question for centralized PPO, as alternatively updating the actor and critic does not affect the performance \cite{NIPS1999_6449f44a,10160088}. However, choosing the update order is nontrivial for federated PPO as the aggregation step breaks the update sequence of actors and critics into successive rounds, resulting in different sequences for different update orders: 1) when clients update their critics first, their actors are updated by locally accurate but globally heterogeneous critics. This facilitates local learning but may hinder global consensus; 2) when clients update their actors first, their actors are updated by the global critic obtained from the previous round. This may hinder local learning but facilitate global consensus. FedNAC \cite{NEURIPS2024_dbdea785} noticed this breakdown and necessitated a two-stage communication scheme, i.e., one for synchronizing the global critic and another for synchronizing the global actor. The two-stage communication scheme is free from choosing the update order but is exceedingly costly for real-world applications, e.g., edge computing and vehicular communication. As far as the author knows, the question regarding the optimal update order for federated Actor-Critic remains unanswered, motivating us to study the update sequence of PPO's actor and critic within one communication round. The main contributions of this work are summarized as follows:
\begin{itemize}
\item We propose FedRAC for training PPO learners in heterogeneous networks, where the widely utilized update order (critic first, then actor) is reversed. In particular, every client updates its actor based on the same gradient estimator, i.e., the global critic from the last communication round, to mitigate the negative effect of data heterogeneity.

\item We prove that FedRAC can converge at the same rate of $\mathcal{O}(\log \left\vert \mathcal{A} \right\vert / T)$ as the baseline, but is less sensitive to data heterogeneity. Our analysis covers both Softmax and Gaussian policies and investigates them in a unified view.
Moreover, the theoretical analysis indicates that FedRAC is sensitive to the accuracy of the critic estimation.

\item Experiment results show that FedRAC outperforms the baseline across various environments, including a continuous autonomous vehicle training problem. Additionally, it is demonstrated that integrating FedRAC with other FL/FRL algorithms, such as FedAvg, FedProx, and SCAFFOLD, only requires minor modifications. However, the efficacy of FedRAC is significantly influenced by the quality of policy evaluation.
\end{itemize} 

\section{Problem Formulation \label{sec:problem_formulation}}

In this section, we introduce some preliminaries and formulate the FRL problem.

\subsection{RL Problem, Actor-Critic Methods, and PPO}

We formulate the RL problem in the $n$-th client as the discrete-time continuous Markov Decision Process (MDP), which is denoted as a 6-tuple $(\mathcal{S},\mathcal{A},\mu,P_{n},\mathcal{R},\gamma)$, where $\mathcal{S}$, $\mathcal{A}$, and $\mu$ are the state space, the action space, and the initial state distribution, respectively. The transition function $P_{n}(s^{\prime}\vert s,a):\mathcal{S}\times\mathcal{S}\times\mathcal{A}\to\mathbb{R}$ gives the probability that the MDP transits from state $s$ to $s^{\prime}$ after taking action $a$, the reward function $\mathcal{R}(s,a):\mathcal{S}\times\mathcal{A}\to\left[ -R_{\max},R_{\max} \right]$ is bounded by $R_{\max} < \infty$ and gives the expected immediate reward for the state-action pair $(s,a)$, and $\gamma\in(0,1)$ is the discount factor. An agent interacts with the environment to collect data and optimize its control policy $\pi$ in terms of the expected discounted reward
\begin{alignat}{1}
\eta_{n}(\pi) &= \mathbb{E}_{s_{0} \thicksim \mu,a_{t} \thicksim \pi,s_{t+1} \thicksim P_{n}} \left[\sum_{t=0}^{\infty}\gamma^{t}\mathcal{R}(s_{t},a_{t})\right],\label{eq:rlobj}
\end{alignat}
where the notation $\mathbb{E}_{s_{0} \thicksim \mu_{n},a_{t} \thicksim \pi,s_{t+1} \thicksim P_{n}}$ indicates that the reward is averaged over all states and actions according to the initial state distribution, the transition probability, and the policy. While (\ref{eq:rlobj}) is a concise definition, it is not immediately useful for optimization. Alternatively, almost all model-free RL methods involve estimating the value functions. We define the state-value function of policy $\pi$ as
$V^{\pi}_{n}(s) =\mathbb{E}_{\pi,P_{n}}\left[\sum_{t=0}^{\infty} \gamma^{t}\mathcal{R}(s_{t},a_{t}) \vert s_{0}=s\right],$
where the expectation is taken over actions sampled from policy $\pi$ and states sampled from the transition probability $P_{n}$. Then, we can rewrite (\ref{eq:rlobj}) as $\eta_{n}(\pi) = \mathbb{E}_{s_{0} \thicksim \mu} \left[ V^{\pi}_{n}(s_{0}) \right]$. In addition, we define the action-value function as: $Q^{\pi}_{n}(s,a) = \mathbb{E}_{\pi,P_{n}} \left[\sum_{t=0}^{\infty} \gamma^{t}\mathcal{R}(s_{t},a_{t}) \vert s_{0}=s,a_{0}=a\right]$.
By the Policy Gradient (PG) theorem \cite{NIPS1999_464d828b}, the gradient of (\ref{eq:rlobj}) with respect to the parameters $\theta$ of a differentiable policy $\pi^{\theta}$ is defined as $\hat{g} = \frac{1}{1 - \gamma} \mathbb{E}_{\rho_{\pi^{\theta},n},\pi^{\theta}}\left[ \nabla_{\theta} \log \pi^{\theta}(a\vert s)Q^{\pi^{\theta}}_{n}(s,a)\right]$, where $\rho_{\pi^{\theta},n}(s) = \sum_{t=0}^{\infty}\gamma^{t} \mbox{Pr}(s_{t}=s|\pi^{\theta},\mu,P)$
denotes the (unnormalized) discounted state visitation frequency, $\nabla_{\theta}$ denotes the derivative with respect to $\theta$, and $\mathbb{E}_{\rho_{\pi^{\theta},n},\pi^{\theta}}\left[ \dots \right]$ indicates that states are sampled from the (normalized) discounted visitation frequency $(1 - \gamma)\rho_{\pi^{\theta},n}$.


One popular class of algorithm for solving (\ref{eq:rlobj}) is the Actor-Critic methods \cite{NIPS1999_6449f44a, NIPS1999_464d828b}, where the actor refers to the policy $\pi$ and the critic refers to the value function. The Actor-Critic methods iteratively update the policy as follows: 1) Interacting with the environment to collect data; 2) Consulting the critic about the gradient in $\hat{g}$; and 3) Updating the policy with $\hat{g}$. To optimize the parameterized policy $\pi^{\theta}$, a parameterized critic $Q^{w}$ is usually trained to estimate how the policy should be updated. In other words, the critic approximates the action-value function $Q^{\pi^{\theta}}_{n}$ in $\hat{g}$. Among all Actor-Critic methods, PPO has seen success in a wide range of applications. It utilizes a theoretically justified KL-regularized objective function which prevents the updated policy from going too far away from the current policy. We will introduce this objective in Section \ref{subsec:policy_improvement}.

It is worth noticing that the above update order (critic first, then actor) is not the only option. Although this update order is theoretically justified \cite{Degris2012LinearOA,NIPS1999_6449f44a}, it is not a critical decision for centralized RL unless parameters are shared between the policy and the value function \cite{pmlr-v139-cobbe21a,DBLP:journals/corr/SchulmanWDRK17}.
This is mainly because the generated sequence of policies and value functions remains the same as long as the actor and critic are updated in an alternative manner. However, as we will point out in Section \ref{sec:Methods} and \ref{sec:TheoreticalAnalysis}, the order of updates can significantly affect the learning dynamics of FRL because of the aggregation step.

\subsection{FRL Problem}

In the common FRL setting, one central server federates the learning process of $N$ clients, which are equipped with RL agents to handle their local tasks. Server and clients collaboratively maximize the following objective function:
\begin{alignat}{1}
f(\theta) &= \sum_{n=1}^{N} q_{n} \eta_{n}(\theta) = \sum_{n=1}^{N} q_{n} \mathbb{E}_{s_{0} \thicksim \mu} \left[ V^{\pi^{\theta}}_{n}(s_{0}) \right] = \mathbb{E}_{n \thicksim \mathcal{C}, s_{0} \thicksim \mu} \left[ V^{\pi^{\theta}}_{n}(s_{0}) \right],
\label{eq:flobj}
\end{alignat}
where $\mathbb{E}_{n \thicksim \mathcal{C}} \left[ \dots \right]$ denotes expectation over clients sampled according to weights $q_{n}$, $\eta_{n}(\theta) = \eta_{n}(\pi^{\theta})$ is the local objective function of the $n$-th client with the policy parameters $\theta$, and $V_{n}^{\pi^{\theta}}$ is the state-value function of the policy $\pi^{\theta}$ in the n-th client. $q_{n}$ is a weighting factor often set to $q_{n}=l_{n}/\sum_{i=1}^N l_{i}$, with $l_n$ denoting the number of data points collected at the $n$-th client.


\section{Federated Neural PPO with Reversed Actor-Critic Learners\label{sec:Methods}}

In this section, we explore the implementation of federated neural PPO and propose a new approach to tackling the data heterogeneity issue. In section \ref{subsec:parameterization}, we introduce the neural network parameterization for theoretical analysis. In sections \ref{subsec:policy_improvement} and \ref{subsec:policy_evaluation}, we briefly review policy improvement and policy evaluation, respectively, which are the fundamental components of Actor-Critic algorithms. In section \ref{subsec:fedrac}, we propose FedRAC based on above components.
 
\subsection{Value Function and Policy Parameterization \label{subsec:parameterization}}

We utilize the following two-layer ReLU-activated neural networks to parameterize the state value function and policy, respectively, as
\begin{alignat}{1}
u_{w}(s) = \frac{1}{\sqrt{m}} \sum_{i}^{m} b_{i} \cdot \sigma(w_{i}^{T} (s) ), \quad f_{\theta}(s,a) = \frac{1}{\sqrt{m}} \sum_{i}^{m} b_{i} \cdot \sigma(\theta_{i}^{T} (s,a) ), \forall a \in \mathbb{R}^{d_{a}}, s \in \mathbb{R}^{d_{s}}, \label{eq:nn}
\end{alignat}
where $m$ is the width of the network, $\theta = (\theta_{i}, \dots \theta_{m})^{T} \in \mathcal{R}^{m (d_{s}+d_{a})}$, $w = (w_{i}, \dots w_{m})^{T} \in \mathcal{R}^{m d_{s}}$ denotes the input weights, and $b_{i} \in \{-1, 1\}, \forall i \in [m]$ is the output weight. Let $\vartheta$ be the placeholder for $\theta$ and $w$, and define $d_{\theta} = d_{s} + d_{a}, d_{w} = d_{s}$. We randomly initialize the parameters as follows:
\begin{alignat}{1}
\mathbb{E}_{\text{init}} \left[ \vartheta^{0}_{i,j} \right] = 0, \mathbb{E}_{\text{init}} \left[ \left( \vartheta^{0}_{i,j} \right)^{2} \right] = \frac{1}{d \cdot m}, b_{i} \thicksim \text{Unif}(\{-1,1\}), 0 < \left\Vert \vartheta^{0}_{i} \right\Vert_{2} \le \hat{R}_{\vartheta}, \label{eq:initialization}
\end{alignat}
$\forall i \in [m], j \in [d_{\vartheta}], 0 < \hat{R}_{\vartheta} < \infty$, where $\mathbb{E}_{\text{init}} \left[ \dots \right]$ denotes the expectation over random initialization. We adopt the technique from \cite{NEURIPS2019_98baeb82,wang2019neural,NEURIPS2019_227e072d} to simplify the analysis and ensure convergence. In particular, we assume that $\left\Vert (s,a) \right\Vert_{2} \le 1, \forall s \in \mathcal{S}, a \in \mathcal{A}$, keep the second layer $b_{i}, \forall i \in [m]$ fixed and only update the first layer $\vartheta$. After applying gradient descent, we project $\vartheta$ to the closest point in a $l_{2}$-ball centered at the initialization point, i.e., $\vartheta \in \mathcal{B}_{\mathcal{R_{\vartheta}}} = \left\{ \vartheta: \left\Vert \vartheta - \vartheta^{0} \right\Vert_{2} \le \mathcal{R}_{\vartheta} \right\}$.

With the above neural network parameterization, we introduce two commonly used stochastic policies: 1) Softmax policy for MDP with discrete action space \cite{pmlr-v70-haarnoja17a}, i.e., $\pi^{\theta^{t}}(a \vert s) = \frac{\exp(\tau^{-1}_{t} f_{\theta^{t}}(s,a))}{\sum_{a^{\prime} \in \mathcal{A}} \exp(\tau^{-1}_{t} f_{\theta^{t}}(s,a^{\prime}))}$,
where $\tau_{t} > 0$ is the temperature parameter which is monotonically decreasing with respect to the number of rounds $t$ and not trainable; 2) Gaussian policy for MDP with continuous action space \cite{DBLP:journals/corr/SchulmanWDRK17}, i.e., $\pi^{\theta^{t}}(a \vert s) \propto \frac{1}{\sqrt{2 \pi} \sigma_{t}} \exp(\frac{(a - f_{\theta^{t}}(s,a))^{2}}{2\sigma^{2}_{t}})$,
where $\sigma_{t}$ is the standard deviation, which is monotonically decreasing with respect to the number of rounds $t$ and not trainable. With a slight abuse of notation, we use $f_{\theta^{t}}(s,a)$ to denote $f_{\theta^{t}}(s)$ for Gaussian policy. Without loss of generality, we restrict our discussion to MDPs with 1-dimensional action spaces when we analyze continuous action spaces.

\subsection{Policy Improvement \label{subsec:policy_improvement}}

In this section, we present the policy improvement problem of the $n$-th client. Given an estimation of the action-value function $Q^{w}$, we consider the following objective as in neural PPO \cite{DBLP:journals/corr/SchulmanLMJA15,NEURIPS2019_227e072d}: 
\begin{alignat}{1}
L_{n}(\theta) &= \mathbb{E}_{s \thicksim \rho_{\pi^{\theta^{t}},n}} \left[ \left< Q^{w}(s,\cdot), \pi^{\theta}(\cdot \vert s) \right> - \beta_{t} \cdot D_{KL}(\pi^{\theta}(\cdot \vert s) \Vert \pi^{\theta^{t}}(\cdot \vert s)) \right],
\label{eq:policy_improvement_obj}
\end{alignat}
where $D_{KL}(\pi(\cdot \vert s) \Vert \pi^{\prime}(\cdot \vert s)) = \sum_{a} \pi(a|s) \log \frac{\pi(a|s)}{\pi^{\prime}(a|s)}$ denotes the KL divergence between two discrete probability distributions $\pi(\cdot\vert s)$ and $\pi^{\prime}(\cdot\vert s)$. For the continuous case, we replace the summation by integration. Note that we do not require the parameterized action-value function $Q^{w}$ to be the estimator of the action-value function $Q^{\pi^{\theta}}$ of any policy $\pi^{\theta}$. Moreover, (\ref{eq:policy_improvement_obj}) uses the expectation of the (normalized) discounted visitation frequency such that it is rescaled by $1 - \gamma$ compared with the original version of TRPO. In the original theorem of TRPO, the KL-regularized term stems from the total variation divergence, so the order of $\pi^{\theta}$ and $\pi^{\theta^{t}}$ in the KL divergence is interchangeable. Neural PPO \cite{NEURIPS2019_227e072d} has proven a weakened version of the following lemma. See Appendix \ref{proof:lemma:minimizer} for the proof.
\begin{lemma}
(Minimizer of (\ref{eq:policy_improvement_obj})).
\label{lemma:minimizer}
The following problem of minimizing the MSE
\begin{alignat}{1}
\theta^{t+1}_{n} &\leftarrow \arg\min_{\theta \in \mathcal{B}_{\mathcal{R_{f}}}} \mathbb{E}_{s \thicksim \rho_{\pi^{\theta^{t}},n}, a \thicksim \pi^{\theta^{t}}} \left[ \left( \log \pi^{\theta} - \left( \beta_{t} Q^{w} + \log \pi^{\theta^{t}} \right) \right)^{2} \right] \label{eq:policy_improvement}
\end{alignat}
is a minimizer of (\ref{eq:policy_improvement_obj}) with either Softmax or Gaussian policy.
\end{lemma}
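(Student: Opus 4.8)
The plan is to prove that the regression problem \eqref{eq:policy_improvement} and the policy-improvement problem \eqref{eq:policy_improvement_obj} have the \emph{same} solution, by exhibiting a single closed-form optimizer for both. First I would optimize \eqref{eq:policy_improvement_obj} over the space of all policies, state by state: for a fixed $s$ the integrand $\langle Q^w(s,\cdot),\pi(\cdot|s)\rangle-\beta_t D_{KL}(\pi(\cdot|s)\Vert\pi^{\theta^t}(\cdot|s))$ is strictly concave in $\pi(\cdot|s)$ on the simplex, so its unique maximizer is pinned down by one first-order (Lagrange/KKT) condition enforcing $\sum_a\pi(a|s)=1$ (an integral constraint in the Gaussian case). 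Solving it gives the Gibbs-type tilt
\[
\pi^\star(a|s)\ \propto\ \pi^{\theta^t}(a|s)\exp\!\big(\beta_t Q^w(s,a)\big),\qquad \log\pi^\star(a|s)=\beta_t Q^w(s,a)+\log\pi^{\theta^t}(a|s)-\log Z(s),
\]
with $Z(s)$ the state-dependent normalizer. The crucial structural fact is that the bracketed target in \eqref{eq:policy_improvement}, namely $\beta_t Q^w+\log\pi^{\theta^t}$, is exactly $\log\pi^\star$ plus the $a$-independent term $\log Z(s)$.

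Second I would show that the minimizer of \eqref{eq:policy_improvement} is this same $\pi^\star$. The key is that for the policy classes of Section~\ref{subsec:parameterization} the log-density is an affine image of the energy $f_\theta$ together with a purely state-dependent log-partition, so the \emph{normalized} policy is invariant under adding any function of $s$ alone to the regression target. Therefore fitting $\log\pi^\theta$ to $\beta_t Q^w+\log\pi^{\theta^t}$ in mean square and normalizing returns precisely $\pi^\star$: at $\pi^\theta=\pi^\star$ the residual $\log\pi^\star-(\beta_t Q^w+\log\pi^{\theta^t})=-\log Z(s)$ is constant in $a$ and is absorbed by the normalization (equivalently by the free additive state-constant of the energy), whereas any deviation of the energy's $a$-profile away from $\beta_t Q^w$ necessarily injects an irreducible $a$-dependent residual. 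Since \eqref{eq:policy_improvement_obj} is uniquely maximized at $\pi^\star$ and \eqref{eq:policy_improvement} is minimized at the same $\pi^\star$, the two problems share their optimizer, which is the assertion of the lemma.

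I would carry out these two steps for both policies at once, delivering the promised unified view, by writing each as a one-parameter exponential family carried by $f_\theta$: for the Softmax case $\log\pi^\theta(a|s)=\tau_t^{-1}f_\theta(s,a)-\log\sum_{a'}\exp(\tau_t^{-1}f_\theta(s,a'))$, and for the Gaussian case $\log\pi^\theta(a|s)=-\tfrac{(a-f_\theta(s))^2}{2\sigma_t^2}-\log(\sqrt{2\pi}\sigma_t)$, which is quadratic in $a$ with fixed curvature $1/\sigma_t^2$ and mean $f_\theta(s)$. In both representations the ``invariance under a state-only shift'' property invoked above holds verbatim, so a single derivation suffices.

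I expect the Gaussian instance of the second step to be the main obstacle. Realizing $\pi^\star\propto\pi^{\theta^t}\exp(\beta_t Q^w)$ inside the Gaussian family forces the target's dependence on $a$ to be at most quadratic with exactly the prescribed curvature $1/\sigma_t^2$, so the match cannot be arbitrary; I would have to use the parameterization assumptions of Section~\ref{subsec:parameterization}---the fixed, non-trainable $\sigma_t$ and the compatible form of the critic $Q^w$---to argue that $\beta_t Q^w+\log\pi^{\theta^t}$ reduces, modulo $a$-independent terms, to a quadratic whose vertex defines the updated mean $f_\theta(s)$, thereby collapsing the regression onto the single mean parameter. A second point needing care is that the mean-square error is taken under $a\sim\pi^{\theta^t}$ while the $D_{KL}$ term of \eqref{eq:policy_improvement_obj} lives under $\pi^\theta$; this is reconciled precisely because the optimal residual $-\log Z(s)$ is constant in $a$, so the sampling law is immaterial at the minimizer.
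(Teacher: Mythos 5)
Your first step (the state-wise Lagrangian yielding the Gibbs tilt $\pi^\star \propto \pi^{\theta^t}\exp(\beta_t Q^w)$) coincides with the paper's Proposition \ref{proposition:policy_update_relation}. The gap is in your second step: the claim that, because the residual at $\pi^\star$ is the state-only constant $-\log Z(s)$, the weighted MSE in (\ref{eq:policy_improvement}) is minimized exactly at $\pi^\star$. This is false in general. The MSE is weighted by $\rho_{\pi^{\theta^t},n} \times \pi^{\theta^t}$, while the normalization constraint on $\log\pi^\theta$ is an \emph{unweighted} sum over actions; under non-uniform weights the cheapest way to satisfy normalization is not a constant residual but one tilted onto low-probability actions. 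Concretely, take one state, two actions, $\pi^{\theta^t}=(0.9,\,0.1)$, and a critic with $\beta_t Q^w + \log\pi^{\theta^t} \equiv 0$ (allowed, since $Q^w$ need not be any policy's value function), so $\pi^\star=(1/2,1/2)$ and $Z=2$. The objective of (\ref{eq:policy_improvement}) at $\pi^\star$ equals $(\log 2)^2 \approx 0.48$, whereas the policy $(0.67,\,0.33)$ attains $0.9(\log 0.67)^2 + 0.1(\log 0.33)^2 \approx 0.27$; the actual minimizer is near $(0.71,\,0.29) \ne \pi^\star$. Your closing remark that ``the sampling law is immaterial at the minimizer'' presupposes that the minimizer has a constant residual, which is precisely what fails, so that part of the argument is circular. (Exact coincidence is also unattainable for the constrained class $\mathcal{B}_{\mathcal{R}_{f}}$, which in general does not contain $\pi^\star$ at all.)

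The paper sidesteps this by never regressing normalized log-densities: its Lemma \ref{lemma:minimizer_pre} recasts the fit in \emph{energy space}, comparing $\tau_{t+1}^{-1} f_{\theta^{t+1}_n}$ (resp. the Gaussian quadratic form) against $\beta_t^{-1}Q^w + \tau_t^{-1} f_{\theta^t}$, where adding state-only functions genuinely is free because the Softmax/Gaussian form renormalizes automatically, and the exact energy target reproduces $\pi^\star$. It then proves an \emph{approximation} statement rather than an exactness statement: by the mean value theorem, the policy's derivative with respect to its energy is bounded ($\tau_{t+1}^{-1}/4$ for Softmax, $\sup\vert\upsilon_{t+1}^{-1}\vert$ for Gaussian), so an energy-space regression error of $\epsilon_n^{t+1}$ forces $\pi^{\theta^{t+1}_n}$ to lie within $\mathcal{O}(\epsilon_n^{t+1})$ of the true maximizer $\pi^{t+1}_n$ of (\ref{eq:policy_improvement_obj}) in $L^2$. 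That quantitative perturbation bound is the real content of Lemma \ref{lemma:minimizer} and is what gets used downstream in Lemma \ref{lemma:error_propagation}; to repair your proof you would need to replace the absorption argument with a bound of this type.
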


\subsection{Policy Evaluation \label{subsec:policy_evaluation}}

There are many iterative methods for estimating the action-value function of a given policy $\pi^{\theta}$, e.g., neural Temporal Difference \cite{sutton1988learning}, Deep Q Network \cite{mnih2015human}, and Deep Double Q-network \cite{van2016deep}. We omit the implementation in this work as our study is not limited to any policy evaluation method. Instead, we explicitly state the local learning target of the $n$-th client given the $t$-th global policy $\pi^{\theta^{t}}$ as,
\begin{alignat}{1}
w^{t}_{n} &\leftarrow \arg\min_{w \in \mathcal{B}_{\mathcal{R_{Q}}}} \mathbb{E}_{s \thicksim \rho_{\pi^{\theta^{t}},n}, a \thicksim\pi^{\theta^{t}}} \left[ \left( Q^{w}(s,a) - Q^{\pi^{\theta^{t}}}_{n}(s,a) \right)^{2} \right]. \label{eq:policy_evaluation}
\end{alignat}

\subsection{FedRAC \label{subsec:fedrac}}

In centralized Actor-Critic, the policy evaluation step is supposed to obtain a sufficiently accurate estimation $Q^{w}$ of the exact action-value function $Q^{\pi}$ before the policy improvement step. It is straightforward to migrate this scheme to the FRL setting where local clients perform Actor-Critic with their local environment dynamics, and we call this scheme the baseline. However, we notice that the objective function of FRL in (\ref{eq:flobj}) suggests to optimize a joint value-function in the following form $\sum_{n=1}^{N} q_{n} \mathbb{E}_{s_{0} \thicksim \mu} \left[ V^{\pi^{\theta}}_{n}(s_{0}) \right]$, which indicates that the local estimation of gradient direction may be different from the globally desired one. To reduce the impact of data heterogeneity, the local update should be guided by the global estimation of the gradient instead of the local one. 

In light of this idea, we propose Federated Reinforcement Learning with Reversed Actor-Critic (FedRAC) algorithm as illustrated in Algorithm \ref{alg:FedRAC}, which switches the update order (lines 6-7) as compared to the conventional update rules of Actor-Critic (lines 5-6). In contrast to (\ref{eq:policy_evaluation}), the ideal policy evaluation problem for FedRAC is the following one, as will be shown in the next section:
\begin{alignat}{1}
w^{t}_{n} &\leftarrow \arg\min_{w \in \mathcal{B}_{\mathcal{R_{Q}}}} \mathbb{E}_{s \thicksim \rho_{\pi^{\theta^{t}},n}, a \thicksim\pi^{\theta^{t}}} \left[ \left( Q^{w^{t}} - \sum_{i} \frac{q_{i} \rho_{\pi^{\ast},i}(s)}{Z_{\pi^{\ast}}(s)} Q^{\pi^{\theta^{t}}}_{i} \right)^{2} \right], \label{eq:policy_evaluation_fedrac}
\end{alignat}
where $Z_{\pi^{\ast}}(s) = \sum_{n} q_{n} \rho_{\pi^{\ast},n}(s)$ is a state-wise weight that integrates information from all clients and satisfies $\sum_{s} Z_{\pi^{\ast}}(s) = 1$. It can be observed that FedRAC's critics aim to approximate weighted averages of all local critics, which vary in the sampling distribution $\rho_{\pi^{\theta^{t}},n}$. Intuitively, (\ref{eq:policy_evaluation_fedrac}) forces clients to move toward a similar gradient direction to address the data heterogeneity issue. Note that information about other clients is inaccessible to one client in practice, and we only consider (\ref{eq:policy_evaluation_fedrac}) for theoretical analysis purposes.
In Section \ref{subsec:error_propagation}, we formulate the learning target of FedRAC's policy evaluation using only local information as in (\ref{eq:policy_evaluation}), but derive the condition required for bounding (\ref{eq:policy_evaluation_fedrac}).

It is worth noting that \citet{10160088} investigated the problem of training time allocation between actor and critic. In particular, the authors proposed Critic-Actor to reverse the time scales of the actor and critic. This differs from changing the update order of actor and critic as considered in FedRAC.


\begin{algorithm}[H]
\caption{\label{alg:FedRAC}FedRAC (Blue) and the Baseline (Red).}
\begin{algorithmic}[1]
	\REQUIRE{$T,K,B,E,\theta^0,w^{0}$}
	\FOR{round t = 0,1,...T}
        \STATE{Synchronize the global parameters $\theta^{t}$ and $w^{t}$ to a random subset $\mathcal{K}$ of $K$ clients.}
            
        \FOR{client $k \in \mathcal{K}$}
            \STATE{Run policy $\pi^{\theta^{t}}$ to collect B timesteps.}
            \color{red}
            \STATE{Baseline: Optimize $w^{t}$ for $E$ epochs to obtain $w_{k}^{t+1}$ in (\ref{eq:policy_evaluation}).}
            \color{black}
            \STATE{Common: Optimize $\theta^{t}$ for $E$ epochs to obtain $\theta_{k}^{t+1}$ in (\ref{eq:policy_improvement}).}
            \color{blue}
            \STATE{FedRAC: Optimize $w^{t}$ for $E$ epochs to obtain $w_{k}^{t+1}$ in (\ref{eq:policy_evaluation}).}
            \color{black}
            \STATE{Upload $\theta_{k}^{t+1}$, $w_{k}^{t+1}$, and $l_k$ to the central server.}
        \ENDFOR
        \STATE{The central server perform aggregation: $\theta^{t+1}=\sum_{n=1}^{N}\frac{l_n}{L}\theta_{n}^{t+1}$ and $w^{t+1}=\sum_{n=1}^{N}\frac{l_n}{L}w_{n}^{t+1}$.}
	\ENDFOR
\end{algorithmic}
\end{algorithm}

\section{Theoretical Analysis\label{sec:TheoreticalAnalysis}}

In this section, we demonstrate the advantage of the proposed update order and establish the global convergence of FedRAC and the baseline. In our analysis framework, FRL has two sources of error: 1) the approximation error of policy evaluation and improvement, and 2) the error induced by policy aggregation.
We bound 1) and 2) in Section \ref{subsec:error_propagation} and \ref{subsec:aggregation_error}, respectively,
After bounding these errors, we obtain the global convergence rate of FedRAC in section \ref{subsec:global_convergence}. 
For ease of illustration, we collectively define a few terms and variables in Table \ref{tab:term-table}.

\begin{table}[!b]
  \caption{Useful Terms/Variables}
  \label{tab:term-table}
  \centering
  \begin{tabular}{rcl}
    \toprule
 $D_{KL}^{\ast,0}$ & $=$ & $\mathbb{E}_{\text{init}, n \thicksim \mathcal{C}, s \thicksim \rho_{\pi^{\ast},n}} \left[ D_{KL}\left( \pi^{\theta^{\ast}}(\cdot \vert s) \Vert \pi^{\theta^{0}}(\cdot \vert s) \right) \right]$ \\
 $\psi_{n}^{t}$ & $=$ & $\mathbb{E}_{s \thicksim \rho_{\pi^{\theta^{t}}, n}, a \thicksim \pi^{\theta^{t}}} \left[ \left\vert \frac{\mathrm{d} \left( \rho_{\pi^{\ast}, n} \pi^{\ast} \right)}{\mathrm{d} \left( \rho_{\pi^{\theta^{t}}, n} \pi^{\theta^{t}} \right)} - \frac{\mathrm{d} \rho_{\pi^{\ast}, n}}{\mathrm{d} \rho_{\pi^{\theta^{t}}, n}} \right\vert^{2} \right]^{1/2}$ \\
 $\Xi_{n}^{t}$ & $=$ & $\mathbb{E}_{i \thicksim \mathcal{C}, s \thicksim \rho_{\pi^{\theta^{t}},i}} \left[ \left\vert \frac{\rho_{\pi^{\ast},n}(s)}{\rho_{\pi^{\ast},i}(s)} \right\vert^{2} \right] \cdot \mathbb{E}_{i \thicksim \mathcal{C}, s \thicksim \rho_{\pi^{\theta^{t}},i}} \left[ \left\vert \frac{\rho_{\pi^{\ast},i}(s)}{Z_{\pi^{\ast}}(s)} \right\vert^{8} \right]^{1/2}$ \\ 
 $\mathcal{E}^{t}$ & $=$ & $\mathbb{E}_{i \thicksim \mathcal{C},s \thicksim \rho_{\pi^{\theta^{t}},i},a \thicksim \pi^{\theta^{t}}} \left[ \left\vert Q^{\pi^{\theta^{t}}}_{i}(s,a) \right\vert^{4} \left\vert 1 -\frac{\rho_{\pi^{\ast},i}(s)}{Z_{\pi^{\ast}}(s)} \right\vert^{4} \right] \cdot \mathbb{E}_{i \thicksim \mathcal{C},s \thicksim \rho_{\pi^{\theta^{t}},i}} \left[ \left\vert \frac{\rho_{\pi^{\ast},i}(s)}{Z_{\pi^{\ast}}(s)} \right\vert^{8} \right]^{-1/2}$ \\ 
 $\phi^{t+1}_{n}$ & $=$ & $\mathbb{E}_{\text{init}, s \thicksim \rho_{\pi^{\theta^{t}},n}, a \thicksim \pi^{\theta^{t}}} \left[ \left( \log \frac{\sigma_{t}}{\sigma_{t+1}} + \frac{1}{2}\left( \frac{1}{\sigma_{t+1}^{2}} - \frac{1}{\sigma_{t}^{2}} \right)\left( f_{\theta^{0}}^{2}(s,a) - a^{2} \right) \right)^{2} \right]$ \\
 $\phi^{t+1}_{n,\max}$ & $=$ & $\mathbb{E}_{\text{init}, s \thicksim \rho_{\pi^{\theta^{t}},n}} \left[ \max_{a \in \mathcal{A}}\left( \log \frac{\sigma_{t}}{\sigma_{t+1}} + \frac{1}{2}\left( \frac{1}{\sigma_{t+1}^{2}} - \frac{1}{\sigma_{t}^{2}} \right)\left( f_{\theta^{0}}^{2}(s,a) - a^{2} \right) \right)^{2} \right]$ \\
    \bottomrule
  \end{tabular}
\end{table}

\subsection{Error Propagation \label{subsec:error_propagation}}

The following assumption is related to the concentrability coefficient in previous works \cite{NIPS2010_65cc2c82,NIPS2007_da0d1111,pmlr-v70-tosatto17a,JMLR:v9:munos08a,pmlr-v120-yang20a,wang2020statisticallimitsofflinerl,NEURIPS2019_227e072d}. It states that the state visitation frequency of every local MDP under the optimal and parameterized policy should be similar enough for a well-defined FRL objective (\ref{eq:flobj}).
\begin{assumption}
\label{assumption:concentrability_coefficient}
(Bounded Concentrability Coefficient). We assume the Radon-Nikodym derivative of $\rho_{\pi^{\ast}, n} \pi^{\ast}$ w.r.t. $\rho_{\pi^{\theta^{t}}, n} \pi^{\theta^{t}}$ and that of $\rho_{\pi^{\ast}, n}$ w.r.t. $\rho_{\pi^{\theta^{t}}, n}$ are bounded for all parameterized policies $\pi^{\theta}$ in the function class $\mathcal{F}_{R_{\theta},m} = \left\{ \pi^{\theta}:\theta \in \mathcal{B}_{R_{\theta}} \right\}$. In consequence, the squared difference between these two quantities is bounded, i.e., $\psi_{n}^{t} < \infty, \forall n \in [N]$.
\end{assumption}
The following lemma shows that the concentrability coefficient is useful for understanding how the error introduced by policy evaluation and improvement propagates into the policy space.
\begin{lemma}
\label{lemma:error_propagation}
(Error Propagation). Suppose that Assumption \ref{assumption:concentrability_coefficient} holds. Define $\upsilon_{t}(s,a) = \frac{\sigma_{t}^{2}}{a - f_{\theta^{0}}(s,a)}$, 
$\tilde{\epsilon}_{n}^{t+1} = \sqrt{\left( \frac{\sup \left\vert \upsilon_{t+1}^{-1}(s,a) \right\vert \epsilon_{n}^{t+1}}{3} \right)^{2} - \phi^{t+1}_{n} - \frac{R_{\theta}^{4}}{4 \sigma_{t+1}^{4}}}$, and $
\hat{\epsilon}_{n}^{t+1} = \left( (\dot{\epsilon}_{n}^{t+1})^{2} - 8 R_{w}^{2} \right) \left( \Xi_{n}^{t} \right)^{-\frac{1}{2}} - \left( \mathcal{E}^{t} \right)^{\frac{1}{2}}$. For a given action-value function $Q^{w}$, suppose that the policy improvement error satisfies $\mathbb{E}_{s \thicksim \rho_{\pi^{\theta^{t}},n}, a \thicksim \pi^{\theta^{t}}} \left[ \left( \left( \beta_{t}^{-1} Q^{w}(s,a) + c_{t}^{-1} f_{\theta^{t}}(s,a) \right) - c_{t+1}^{-1} f_{\theta^{t+1}_{n}}(s,a) \right)^{2} \right]^{1/2} \le c_{t+1}^{-1} e_{n}^{t+1}$, where $c_{t}(s,a) = \tau_{t},e_{n}^{t+1} = \epsilon_{n}^{t+1}$ for Softmax policy, and $c_{t}(s,a) = \upsilon_{t}(s,a),e_{n}^{t+1} = \tilde{\epsilon}^{t+1}_{n}$ for Gaussian policy. Suppose that the policy evaluation error satisfies
\begin{align}
& \mathbb{E}_{s \thicksim \rho_{\pi^{\theta^{t}},n}, a \thicksim \pi^{\theta^{t}}} \left[ \left( Q^{w^{t}_{n}}(s,a) - Q^{\pi^{\theta^{t}}}_{n}(s,a) \right)^{2} \right]^{1/2} \le \dot{\epsilon}_{n}^{t+1}, \label{eq:policy_evaluation_error_baseline}
\end{align}
and
\begin{alignat}{1}
& \mathbb{E}_{s \thicksim \rho_{\pi^{\theta^{t}},i}, a \thicksim \pi^{\theta^{t}}} \left[ \left( Q^{w^{t}_{i}}(s,a) - Q^{\pi^{\theta^{t}}}_{i}(s,a) \right)^{8} \right]^{1/4} \le \hat{\epsilon}_{n}^{t+1}, \forall i \in [N], \label{eq:policy_evaluation_error_fedrac}
\end{alignat}
for the baseline and FedRAC, respectively. Then, we have
\begin{alignat}{1}
& \left\vert \mathbb{E}_{s \thicksim \rho_{\pi^{\ast},n}} \left[ \left< \log \left( \pi^{\theta^{t+1}_{n}}(\cdot \vert s) / \pi^{t+1}_{n}(\cdot \vert s) \right), \pi^{\ast}(\cdot \vert s) - \pi^{\theta^{t}}(\cdot \vert s) \right> \right] \right\vert \le \varepsilon_{n}^{t}, \label{eq:lemma:error_propagation}
\end{alignat}
where $\varepsilon_{n}^{t} = \sup \left\vert c_{t+1}^{-1}(s,a) \right\vert \epsilon^{t+1}_{n} \psi^{t}_{n} + \beta_{t}^{-1} \dot{\epsilon}^{t+1}_{n} \psi^{t}_{n}$.
\end{lemma}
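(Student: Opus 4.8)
The plan is to read (\ref{eq:lemma:error_propagation}) as a one-step error estimate for an approximate mirror-descent (KL-proximal) update, where $\pi^{t+1}_{n}$ is the \emph{exact} maximizer of the policy-improvement objective (\ref{eq:policy_improvement_obj}) with the true local critic, i.e. $\log\pi^{t+1}_{n}=\log\pi^{\theta^{t}}+\beta_{t}^{-1}Q^{\pi^{\theta^{t}}}_{n}$ up to an $a$-independent normalizer. The first move is the decomposition
\[
\log\frac{\pi^{\theta^{t+1}_{n}}(\cdot\vert s)}{\pi^{t+1}_{n}(\cdot\vert s)}
= \underbrace{\Big(\log\pi^{\theta^{t+1}_{n}} - \beta_{t}^{-1}Q^{w} - \log\pi^{\theta^{t}}\Big)}_{A_{n}:\ \text{improvement error}}
+ \underbrace{\beta_{t}^{-1}\big(Q^{w} - Q^{\pi^{\theta^{t}}}_{n}\big)}_{B_{n}:\ \text{evaluation error}}
+ g(s),
\]
where $g(s)$ absorbs all $a$-independent log-normalizers. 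Because $\sum_{a}\big(\pi^{\ast}(a\vert s)-\pi^{\theta^{t}}(a\vert s)\big)=0$, pairing with $\pi^{\ast}-\pi^{\theta^{t}}$ annihilates $g(s)$; this is the one place where the specific structure of the inner product is used, and it is what lets the partition functions disappear.

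The conceptually central step is a change of measure that explains why $\psi_{n}^{t}$ is exactly the right quantity. For $h\in\{A_{n},B_{n}\}$ I would split the expectation as $\mathbb{E}_{s\sim\rho_{\pi^{\ast},n}}[\langle h,\pi^{\ast}\rangle]-\mathbb{E}_{s\sim\rho_{\pi^{\ast},n}}[\langle h,\pi^{\theta^{t}}\rangle]$ and push both pieces onto the common base measure $\rho_{\pi^{\theta^{t}},n}\otimes\pi^{\theta^{t}}$. The first piece carries the Radon--Nikodym derivative $\mathrm{d}(\rho_{\pi^{\ast},n}\pi^{\ast})/\mathrm{d}(\rho_{\pi^{\theta^{t}},n}\pi^{\theta^{t}})$ and the second the state-only derivative $\mathrm{d}\rho_{\pi^{\ast},n}/\mathrm{d}\rho_{\pi^{\theta^{t}},n}$; their difference is precisely the integrand whose second moment defines $\psi_{n}^{t}$. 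Hence each contribution equals $\mathbb{E}_{\text{base}}[\xi_{n}\,h]$ with $\xi_{n}$ the RN-derivative gap (the $a$-independent part of $h$ being killed, since $\mathbb{E}_{\text{base}}[\xi_{n}\,g(s)]=0$), and Cauchy--Schwarz gives $|\mathbb{E}_{\text{base}}[\xi_{n}h]|\le\psi_{n}^{t}\,\mathbb{E}_{\text{base}}[h^{2}]^{1/2}$. Applying this to $A_{n}$ and $B_{n}$ and using the triangle inequality reduces everything to bounding the two residual $L^{2}$ norms.

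The evaluation term is immediate: $\mathbb{E}_{\text{base}}[B_{n}^{2}]^{1/2}=\beta_{t}^{-1}\mathbb{E}_{\text{base}}[(Q^{w}-Q^{\pi^{\theta^{t}}}_{n})^{2}]^{1/2}\le\beta_{t}^{-1}\dot{\epsilon}_{n}^{t+1}$ by (\ref{eq:policy_evaluation_error_baseline}), giving the second summand of $\varepsilon_{n}^{t}$. For $A_{n}$ I would pass to the unified reparameterization $\log\pi^{\theta}=c^{-1}f_{\theta}+(\text{const in }a)$: for the Softmax policy $c_{t}=\tau_{t}$ is constant, so $A_{n}$ equals $-\big[(\beta_{t}^{-1}Q^{w}+c_{t}^{-1}f_{\theta^{t}})-c_{t+1}^{-1}f_{\theta^{t+1}_{n}}\big]$ up to the discarded constant, and the improvement hypothesis bounds its $L^{2}$ norm by $\sup\vert c_{t+1}^{-1}\vert\,\epsilon_{n}^{t+1}$, producing the first summand. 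The Gaussian case is where the real work sits, and is the reason $\phi^{t+1}_{n}$ and $\tilde{\epsilon}^{t+1}_{n}$ look the way they do: since the log-density is quadratic in $a$, reducing $A_{n}$ to the linear surrogate $\upsilon^{-1}f_{\theta}$ leaves an \emph{$a$-dependent} remainder $\log\frac{\sigma_{t}}{\sigma_{t+1}}+\tfrac12(\sigma_{t+1}^{-2}-\sigma_{t}^{-2})(f_{\theta^{0}}^{2}-a^{2})$ that does \emph{not} drop out under the pairing. Splitting $A_{n}$ into the hypothesis bracket, this remainder (whose square is $\phi^{t+1}_{n}$), and the in-ball deviation bounded by $R_{\theta}^{2}/\sigma_{t+1}^{2}$, then combining via $(x+y+z)^{2}\le 3(x^{2}+y^{2}+z^{2})$, is exactly what forces $\tilde{\epsilon}_{n}^{t+1}=\big[(\sup\vert\upsilon_{t+1}^{-1}\vert\epsilon_{n}^{t+1}/3)^{2}-\phi^{t+1}_{n}-R_{\theta}^{4}/(4\sigma_{t+1}^{4})\big]^{1/2}$, so that adding the remainders back recovers the clean budget $\sup\vert\upsilon_{t+1}^{-1}\vert\epsilon_{n}^{t+1}$.

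Finally, the FedRAC branch must reach the \emph{same} envelope through (\ref{eq:policy_evaluation_error_fedrac}): there the ideal critic target is the heterogeneity-weighted average $\sum_{i}(q_{i}\rho_{\pi^{\ast},i}/Z_{\pi^{\ast}})Q^{\pi^{\theta^{t}}}_{i}$ of (\ref{eq:policy_evaluation_fedrac}), so before running the argument above I would first upgrade the per-client eighth-moment bound of (\ref{eq:policy_evaluation_error_fedrac}) into a second-moment bound $\dot{\epsilon}_{n}^{t+1}$ on the weighted-average error. This is a H\"older/Minkowski computation: change measure between $\rho_{\pi^{\theta^{t}},i}$ and $\rho_{\pi^{\theta^{t}},n}$ (the state-ratio factor of $\Xi_{n}^{t}$), split each summand into a weight factor $\rho_{\pi^{\ast},i}/Z_{\pi^{\ast}}$ and a critic error, and pair the eighth moment of the weights (the remaining factor of $\Xi_{n}^{t}$) against the eighth moment of the errors from (\ref{eq:policy_evaluation_error_fedrac}); the irreducible mismatch $\vert 1-\rho_{\pi^{\ast},i}/Z_{\pi^{\ast}}\vert$ between $Q_{i}$ and the target contributes $\mathcal{E}^{t}$. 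Inverting the resulting inequality for $\dot{\epsilon}_{n}^{t+1}$ is precisely what yields the stated form of $\hat{\epsilon}_{n}^{t+1}$, i.e. the $-8R_{w}^{2}$, $(\Xi_{n}^{t})^{-1/2}$, and $-(\mathcal{E}^{t})^{1/2}$. I expect this moment bookkeeping for the weighted average --- keeping the H\"older exponents consistent so that exactly the eighth moments in (\ref{eq:policy_evaluation_error_fedrac}), $\Xi_{n}^{t}$, and $\mathcal{E}^{t}$ appear --- to be the main obstacle, with the $\psi_{n}^{t}$ change of measure and the Softmax improvement bound being routine by comparison.
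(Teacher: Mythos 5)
Your proposal follows essentially the same route as the paper's proof: the same decomposition of $\log\left(\pi^{\theta^{t+1}_{n}}/\pi^{t+1}_{n}\right)$ into an improvement residual and a critic-evaluation residual (with the $a$-independent normalizers annihilated by pairing against $\pi^{\ast}-\pi^{\theta^{t}}$), the same Cauchy--Schwarz change of measure producing $\psi_{n}^{t}$, the same Taylor-expansion plus $(x+y+z)^{2}\le 3(x^{2}+y^{2}+z^{2})$ treatment of the Gaussian case that forces the form of $\tilde{\epsilon}_{n}^{t+1}$, and the same H\"older bookkeeping (the $8R_{w}^{2}$ aggregation term, $\Xi_{n}^{t}$, and $\mathcal{E}^{t}$) that converts the eighth-moment FedRAC condition into the second-moment budget $\dot{\epsilon}_{n}^{t+1}$. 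It matches the paper's argument step for step, so there is nothing of substance to add.
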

See Appendix \ref{proof:lemma:error_propagation} for the proof. \begin{remark}
Lemma \ref{lemma:error_propagation} shows that the difference between the ideal and the learned policy of the $n$-th client is bounded by the approximation error in policy evaluation and improvement. A related result for centralized neural PPO with Softmax policy can be found in \cite{NEURIPS2019_227e072d}. For FedRAC, there are $N^{2}$ constraints for policy evaluation error, and they can be much stricter than those $\dot{\epsilon}_{n}^{t+1}$ for the baseline. To see this, we consider the IID case, where all clients have the same state visitation frequency $\rho_{\pi,n},\forall n \in [N]$ given the same behavior policy $\pi$. Under such circumstances, $\hat{\epsilon}_{n}^{t+1}$ becomes $\left( \dot{\epsilon}^{t+1}_{n} \right)^{2} - 8 R_{w}^{2}$ which is of the same magnitude as $\dot{\epsilon}_{n}^{t+1}$ and the number of constraints reduces to $N$. Then, as the heterogeneity level increases, the condition (\ref{eq:policy_evaluation_error_fedrac}) tends to be more stringent for the clients with a large value of $\Xi_{n}^{t}$. It is shown in the proof that (\ref{eq:policy_evaluation_fedrac}) is bounded when (\ref{eq:policy_evaluation_error_fedrac}) is met.
\end{remark}

Next, Lemma \ref{lemma:stepwise_energy_difference} characterizes the worst-case difference between successive policies in local clients.
\begin{lemma}
\label{lemma:stepwise_energy_difference}
(Worst-case, Stepwise Log Probability Difference). Let $p^{\prime}=0,p^{\dagger}=20 R_{\theta}^{2} + 24$ and $p^{\prime}=1,p^{\dagger}=12 R_{\theta}^{2}$ for Softmax and Gaussian policies, respectively. Define
\begin{alignat}{1}
\varphi^{t+1}_{n} &= \mathbb{E}_{\text{init}, s \thicksim \rho_{\pi^{\ast},n}} \left[ \max_{a \in \mathcal{A}} \left( \left( f_{\theta^{0}}^{2}(s,a) + R_{\theta}^{2} \right) ( c_{t+1}^{-1}(s,a) - c_{t}^{-1}(s,a) )^{2} \right) \right], \nonumber \\
M_{n}^{t+1} &= \max_{a \in \mathcal{A}} \left( p^{\dagger} c_{t+1}^{-2}(s,a) \right) + (8 - 2 p^{\prime}) \varphi_{n}^{t+1} + p^{\prime} \left( 3 \phi^{t+1}_{n,\max} / 2 + 3 R_{\theta}^{4} \sigma_{t+1}^{-4} / 4 \right). \nonumber
\end{alignat}
We have
\begin{alignat}{1}
& \mathbb{E}_{\text{init}, s \thicksim \rho_{\pi^{\ast},n}} \left[ \left\Vert \log \pi^{\theta^{t+1}_{n}}(\cdot \vert s) / \pi^{\theta^{t}}(\cdot \vert s) \right\Vert_{\infty}^{2} \right] \le 2 M_{n}^{t+1}.
\end{alignat}
\end{lemma}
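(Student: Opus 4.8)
The plan is to expand the per-step log-probability ratio $\log(\pi^{\theta^{t+1}_n}(a\vert s)/\pi^{\theta^t}(a\vert s))$ pointwise in $a$ for each policy class, reduce it to a bounded ``logit-type'' difference plus correction terms, and then collect everything through a power-mean inequality followed by $\max_{a\in\mathcal{A}}$ and $\mathbb{E}_{\text{init},s}$. The one deterministic estimate driving the whole argument is that for every $\theta\in\mathcal{B}_{R_\theta}$, the $1$-Lipschitzness of ReLU together with $\Vert(s,a)\Vert_2\le 1$ and Cauchy--Schwarz give $\vert f_\theta(s,a)-f_{\theta^0}(s,a)\vert \le m^{-1/2}\sum_i\Vert\theta_i-\theta_i^0\Vert_2 \le \Vert\theta-\theta^0\Vert_2 \le R_\theta$, and hence $\vert f_{\theta^{t+1}_n}-f_{\theta^t}\vert\le 2R_\theta$ and $f_{\theta^t}^2\le 2(f_{\theta^0}^2+R_\theta^2)$. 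Crucially, I would not bound $f_{\theta^0}$ deterministically (it scales with $m$); instead $f_{\theta^0}^2$ is kept inside $\mathbb{E}_{\text{init}}$, which is precisely why $\varphi^{t+1}_n$ and $\phi^{t+1}_{n,\max}$ retain the initialization expectation.

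For the Softmax policy I would write $\log\pi^\theta(a\vert s)=c_t^{-1}f_\theta(s,a)-L_t(s)$ with $c_t=\tau_t$ and log-partition $L_t(s)=\log\sum_{a'}\exp(c_t^{-1}f_\theta(s,a'))$, so the ratio equals $(c_{t+1}^{-1}f_{\theta^{t+1}_n}-c_t^{-1}f_{\theta^t})-(L_{t+1}-L_t)$. Since $L_{t+1}-L_t$ lies between the min and max over $a'$ of the logit difference (the log-sum-exp sandwich), one gets $\Vert\log(\pi^{\theta^{t+1}_n}/\pi^{\theta^t})\Vert_\infty\le 2\max_a\vert c_{t+1}^{-1}f_{\theta^{t+1}_n}-c_t^{-1}f_{\theta^t}\vert$. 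Splitting the logit difference as $c_{t+1}^{-1}(f_{\theta^{t+1}_n}-f_{\theta^t})+(c_{t+1}^{-1}-c_t^{-1})f_{\theta^t}$, bounding the first factor by $2R_\theta c_{t+1}^{-1}$ and using $f_{\theta^t}^2\le 2(f_{\theta^0}^2+R_\theta^2)$ in the second, then applying $(x+y)^2\le 2x^2+2y^2$, $\max_a$, and $\mathbb{E}_{\text{init},s}$, yields exactly the two surviving terms of $M_n^{t+1}$ at $p'=0$: the multiple $32R_\theta^2\max_a c_{t+1}^{-2}$, absorbed into $2\max_a(p^\dagger c_{t+1}^{-2})$ with $p^\dagger=20R_\theta^2+24$, and $16\varphi^{t+1}_n$, matching $2\cdot 8\,\varphi^{t+1}_n$.

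For the Gaussian policy the normalization constant $1/(\sqrt{2\pi}\sigma_t)$ is exact and $\theta$-independent, so over the bounded action space the ratio is the explicit function $\log(\sigma_t/\sigma_{t+1})-(a-f_{\theta^{t+1}_n})^2/(2\sigma_{t+1}^2)+(a-f_{\theta^t})^2/(2\sigma_t^2)$. Writing $a-f_\theta=(a-f_{\theta^0})-(f_\theta-f_{\theta^0})$ and recalling $c_t^{-1}(s,a)=(a-f_{\theta^0})/\sigma_t^2$, a direct expansion shows the exact identity
\begin{align}
\log\frac{\pi^{\theta^{t+1}_n}}{\pi^{\theta^t}} = \underbrace{\left(\log\tfrac{\sigma_t}{\sigma_{t+1}}+\tfrac12\big(\tfrac{1}{\sigma_{t+1}^2}-\tfrac{1}{\sigma_t^2}\big)(f_{\theta^0}^2-a^2)\right)}_{\text{(I)}}+\underbrace{\left(c_{t+1}^{-1}f_{\theta^{t+1}_n}-c_t^{-1}f_{\theta^t}\right)}_{\text{(II)}}+\underbrace{\left(\tfrac{(f_{\theta^t}-f_{\theta^0})^2}{2\sigma_t^2}-\tfrac{(f_{\theta^{t+1}_n}-f_{\theta^0})^2}{2\sigma_{t+1}^2}\right)}_{\text{(III)}}, \nonumber
\end{align}
where (I) is precisely the integrand of $\phi^{t+1}_{n,\max}$, (II) is the same logit difference handled in the Softmax case, and $\vert\text{(III)}\vert\le R_\theta^2/(2\sigma_{t+1}^2)$ using $\sigma_{t+1}\le\sigma_t$. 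Applying $(\sum_i x_i)^2\le 3\sum_i x_i^2$ to this three-way split, then the two-way split of (II) as above, and finally $\max_a$ and $\mathbb{E}_{\text{init},s}$, turns (I), (II), (III) into the $\phi^{t+1}_{n,\max}$, $\varphi^{t+1}_n$, and $R_\theta^4\sigma_{t+1}^{-4}$ terms of $M_n^{t+1}$ with the stated coefficients, which is exactly why $p'=1$ switches these on while $p^\dagger$ drops to $12R_\theta^2$.

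The main obstacle is establishing the exact Gaussian identity above and its alignment with the predefined quantities: one must verify that, after $f_{\theta^0}$ is introduced through $c_t^{-1}$, all spurious $f_{\theta^0}$ terms recombine so that (I) matches the $(f_{\theta^0}^2-a^2)$ form of $\phi^{t+1}_{n,\max}$ rather than the ``natural'' $-(a-f_{\theta^0})^2$ form --- concretely, the cross term $f_{\theta^0}(c_{t+1}^{-1}-c_t^{-1})$ must be absorbed into (II) --- and then track the power-mean constants and the order of $\max_a$ versus $\mathbb{E}_{\text{init}}$ so that every collected constant lands inside $2M_n^{t+1}$. The conceptual ingredients, by contrast, are light: the log-sum-exp sandwich controlling the Softmax normalization, the restriction to the bounded action space that makes the Gaussian $\Vert\cdot\Vert_\infty$ finite, and the single Lipschitz bound $\vert f_\theta-f_{\theta^0}\vert\le R_\theta$; once these are in place, the remainder is a careful but routine inequality chain unified through $c_t^{-1}$.
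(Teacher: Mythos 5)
Your proposal is correct, and it splits into one half that mirrors the paper and one half that is genuinely different. For the Gaussian case you reproduce the paper's argument almost exactly: your ``exact identity'' decomposing the log-ratio into (I), (II), (III) is precisely the paper's Lemma \ref{lemma:taylor_expansion} (the Taylor/recombination step in which the cross term $f_{\theta^{0}}(c_{t+1}^{-1}-c_{t}^{-1})$ is absorbed so that the $(f_{\theta^{0}}^{2}-a^{2})$ form of $\phi^{t+1}_{n,\max}$ emerges), followed by the same $(x+y+z)^{2}\le 3x^{2}+3y^{2}+3z^{2}$ split, the same bound $\vert(\mathrm{III})\vert\le R_{\theta}^{2}/(2\sigma_{t+1}^{2})$ from Lemma \ref{lemma:bounded_by_r} and monotone $\sigma_{t}$, and the same treatment of (II). For the Softmax case, however, you take a cleaner route than the paper: you bound the log-partition difference by the extremes of the logit difference (the log-sum-exp sandwich), giving $\Vert\log(\pi^{\theta^{t+1}_{n}}/\pi^{\theta^{t}})\Vert_{\infty}\le 2\max_{a}\vert c_{t+1}^{-1}f_{\theta^{t+1}_{n}}-c_{t}^{-1}f_{\theta^{t}}\vert$ pointwise, whereas the paper instead splits logits and log-partitions via $(x+y)^{2}\le 2x^{2}+2y^{2}$, controls the log-partition term with the log-sum inequality plus the log-sum-exp trick, and then must invoke its auxiliary Lemma \ref{lemma:changing_action_error} (the $\mathbb{E}_{\text{init}}$ bound $\sqrt{6R_{\vartheta}^{2}+12}$ on cross-action differences) --- which is exactly where the $+24$ in $p^{\dagger}=20R_{\theta}^{2}+24$ comes from. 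Your sandwich yields the tighter constant $32R_{\theta}^{2}\max_{a}c_{t+1}^{-2}$ in place of the paper's $(40R_{\theta}^{2}+48)\max_{a}c_{t+1}^{-2}$, with the identical $16\varphi^{t+1}_{n}$ term, so it implies the stated bound a fortiori; what the paper's heavier route buys is only that its intermediate machinery (log-sum inequality, Lemma \ref{lemma:changing_action_error}) is reused verbatim in the proof of Lemma \ref{lemma:omega}, whereas your argument is self-contained and dispenses with the initialization-dependent constant entirely.
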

See Appendix \ref{proof:lemma:stepwise_energy_difference} for the proof. The difference is bounded due to the neural network parameterization and projection, which can be reduced by using a sufficiently large temperature $\tau$ or a small radius $R_{\theta}$.

\subsection{Aggregation Error \label{subsec:aggregation_error}}

In addition to the above errors introduced by local training, the federated setting has its unique challenge, i.e., the parameter aggregation in the central server may distort the policy and value function. To quantify the dissimilarity between MDPs, we introduce the following quantity on the level of heterogeneity. Similar definitions are widely used in the FL/FRL literature \cite{MLSYS2020_38af8613,li2020on,10038492,xie2023client,pmlr-v151-jin22a}.
\begin{definition}
\label{def:impacthete}(Level of heterogeneity). We define the level of heterogeneity as the pairwise dissimilarity between action-value functions: $\kappa = \mathbb{E}_{n \thicksim \mathcal{C}, s \thicksim \rho_{\pi^{\ast},n}, a \thicksim \pi^{\ast}} \left[ \sum_{i} q_{i} \left\vert Q^{w^{t}_{n}}(s,a) - Q^{w^{t}_{i}}(s,a) \right\vert \right]$.
\end{definition}
Similar to Lemma \ref{lemma:error_propagation}, we are interested in the error in probability space as shown in the following lemma, which is particularly useful for the convergence analysis.
\begin{lemma}
\label{lemma:omega}
(Aggregation Error in Log Probability). Suppose that Assumption \ref{assumption:concentrability_coefficient} holds. Define $\Omega^{t} = \left\vert \mathbb{E}_{\text{init}, n \thicksim \mathcal{C}, s \thicksim \rho_{\pi^{\ast},n}, a \thicksim \pi^{\ast}} \left[ \log \left( \pi^{\theta^{t+1}}(a \vert s) / \pi^{\theta^{t+1}_{n}}(a \vert s) \right) \right] \right\vert$, we have
\begin{alignat}{1}
\Omega^{t} &\le 2 \mathbb{E}_{n \thicksim \mathcal{C}} \left[ \sup \left\vert c_{t+1}^{-1}(s,a) \right\vert e^{t+1}_{n} \xi^{t+1}_{n} \right] + p^{\prime} \frac{R_{\theta}^{2}}{\sigma_{t+1}^{2}} + (1 - p^{\prime} ) \frac{\sqrt{6} R_{\theta}}{\tau_{t+1}} + p \beta_{t}^{-1} \kappa  \nonumber \\
& \quad + \mathcal{O} \left( \sup \left\vert c_{t+1}^{-1}(s,a) \right\vert R_{\theta}^{6/5} m^{-1/10} \hat{R}_{\theta}^{2/5} \right),
\end{alignat}
where $c_{t}(s,a) = \tau_{t}, e_{n}^{t+1} = \epsilon_{n}^{t+1},p^{\prime} = 0$ for Softmax policies, $c_{t}(s,a) = \upsilon_{t}(s,a), e_{n}^{t+1} = \tilde{\epsilon}_{n}^{t+1},p^{\prime} = 1$ for Gaussian policies, $p = 1$ for the baseline, and $p = 0$ for FedRAC.
\end{lemma}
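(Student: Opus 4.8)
The plan is to exploit the neural-tangent-kernel (NTK) linearization of the two-layer networks so that parameter aggregation commutes with the feature map, and then to track how the policy-improvement relation underlying Lemma~\ref{lemma:minimizer} propagates into the log-ratio. First I would replace $f_{\vartheta}$ everywhere by its first-order expansion at initialization, $\bar f_{\vartheta}(s,a) = f_{\vartheta^0}(s,a) + \langle \nabla_{\vartheta} f_{\vartheta^0}(s,a), \vartheta - \vartheta^0 \rangle$. Because $\|(s,a)\|_2 \le 1$, $|b_i| = 1$, and the ReLU derivative is bounded by $1$, the Jacobian obeys $\|\nabla_{\vartheta} f_{\vartheta^0}\|_2 \le 1$, and the standard wide-network estimate from \cite{NEURIPS2019_98baeb82,wang2019neural,NEURIPS2019_227e072d} controls the linearization residual $|f_{\vartheta} - \bar f_{\vartheta}|$ over $\mathcal{B}_{R_\theta}$ by $\mathcal{O}(R_\theta^{6/5} m^{-1/10} \hat R_\theta^{2/5})$; after multiplication by $\sup|c_{t+1}^{-1}(s,a)|$ this becomes exactly the trailing $\mathcal{O}(\cdot)$ term. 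The crucial payoff is that $\bar f$ is affine in $\vartheta$, so the aggregation rule $\theta^{t+1} = \sum_n q_n \theta^{t+1}_n$ yields $\bar f_{\theta^{t+1}} = \sum_n q_n \bar f_{\theta^{t+1}_n}$ exactly.

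Next I would write both policies in the unified score form $\log \pi^{\vartheta}(a\vert s) = c_t^{-1}(s,a)\, f_{\vartheta}(s,a) + (\text{correction})$, so that, up to the linearization residual and a normalization/correction term,
\begin{alignat}{1}
\log \frac{\pi^{\theta^{t+1}}(a\vert s)}{\pi^{\theta^{t+1}_n}(a\vert s)} &= c_{t+1}^{-1}(s,a)\,\bigl( \bar f_{\theta^{t+1}}(s,a) - \bar f_{\theta^{t+1}_n}(s,a) \bigr) + (\cdots). \nonumber
\end{alignat}
Using $\bar f_{\theta^{t+1}} = \sum_i q_i \bar f_{\theta^{t+1}_i}$ rewrites the bracket as $\sum_i q_i(\bar f_{\theta^{t+1}_i} - \bar f_{\theta^{t+1}_n})$. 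I would then substitute the policy-improvement identity $c_{t+1}^{-1} f_{\theta^{t+1}_m} \approx \beta_t^{-1} Q^{w} + c_t^{-1} f_{\theta^t}$ for each $m \in \{i,n\}$: the $c_t^{-1} f_{\theta^t}$ pieces are common to all clients and cancel in the difference, leaving only the critic contributions and the per-client improvement residuals (each controlled in $L^2$ on $\rho_{\pi^{\theta^t},m}\times\pi^{\theta^t}$ by $e^{t+1}_m$).

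This is exactly where the two update orders part ways. For the baseline, client $m$ improves its actor against its \emph{own} critic $Q^{w^t_m}$, so the surviving critic term is $\beta_t^{-1}\sum_i q_i(Q^{w^t_i} - Q^{w^t_n})$; applying $|\cdot|$, the expectation $\mathbb{E}_{n\sim\mathcal C, s\sim\rho_{\pi^\ast,n}, a\sim\pi^\ast}$, and Definition~\ref{def:impacthete} bounds it by $\beta_t^{-1}\kappa$, i.e.\ the term $p\beta_t^{-1}\kappa$ with $p=1$. For FedRAC every client improves its actor against the \emph{same} global critic $Q^{w^t}$ inherited from the previous round, so $Q^{w^t} - Q^{w^t} = 0$ and the whole heterogeneity contribution disappears, giving $p = 0$. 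The improvement residuals survive in both cases: transporting the control of $e^{t+1}_m$ from its native measure $\rho_{\pi^{\theta^t},m}\times\pi^{\theta^t}$ to the evaluation measure $\rho_{\pi^\ast,n}\times\pi^\ast$ incurs a Radon--Nikodym/concentrability factor $\xi^{t+1}_n$ (the analogue of $\psi^t_n$ from Lemma~\ref{lemma:error_propagation}), and the triangle inequality over the two sources $i$ and $n$ produces the leading $2\,\mathbb{E}_{n\sim\mathcal C}[\sup|c_{t+1}^{-1}|\,e^{t+1}_n\,\xi^{t+1}_n]$.

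It remains to bound the normalization/correction term, which does not vanish for either order. For Softmax the leftover is the log-partition ratio $\log(Z_{\theta^{t+1}}(s)/Z_{\theta^{t+1}_n}(s))$; since log-sum-exp is $1$-Lipschitz in the sup-norm of its logits, this is at most $\tau_{t+1}^{-1}\max_a|\bar f_{\theta^{t+1}} - \bar f_{\theta^{t+1}_n}| \le \tau_{t+1}^{-1}\|\nabla f_{\theta^0}\|_2\|\theta^{t+1}-\theta^{t+1}_n\|_2$ with $\|\theta^{t+1}-\theta^{t+1}_n\|_2 \le 2R_\theta$ (both iterates lie in $\mathcal{B}_{R_\theta}$), giving the $(1-p')\sqrt{6}\,R_\theta/\tau_{t+1}$ contribution; for Gaussian the quadratic log-density expands so that the part of $c_{t+1}^{-1}\bar f$ not captured by the score scales as $R_\theta^2/\sigma_{t+1}^2$, giving $p'R_\theta^2/\sigma_{t+1}^2$. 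The main obstacle I anticipate is the unified bookkeeping across the two policy families: the Gaussian score $c_{t+1}^{-1}(s,a) = (a - f_{\theta^0}(s,a))/\sigma_{t+1}^2$ is state-action dependent, so one must keep $\sup|c_{t+1}^{-1}|$ outside every inner product and carefully absorb the extra quadratic corrections (the $\phi^{t+1}_n$ and $R_\theta^4/\sigma_{t+1}^4$ pieces already appearing in Lemma~\ref{lemma:error_propagation}) into the residual $e^{t+1}_n$, while simultaneously controlling the linearization residual uniformly in $(s,a)$. Confirming that these corrections are genuinely lower order, and in particular that none of them resurrects a heterogeneity dependence for FedRAC, is the delicate part.
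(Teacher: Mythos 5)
Your proposal is correct and follows essentially the same route as the paper: the paper likewise isolates the network-output aggregation error in a standalone estimate (Lemma \ref{lemma:aggregation_error}), where linearization of the two-layer network (Lemma \ref{lemma:linearization_error}) makes aggregation commute with the parameterization up to the $\mathcal{O}\left( \sup \left\vert c_{t+1}^{-1} \right\vert R_{\theta}^{6/5} m^{-1/10} \hat{R}_{\theta}^{2/5} \right)$ residual, the policy-improvement identity is substituted per client so that the shared global critic $Q^{w^{t}}$ cancels the heterogeneity term for FedRAC ($p=0$) while the baseline's per-client critics leave exactly $\beta_{t}^{-1}\kappa$ via Definition \ref{def:impacthete}, and the improvement residuals are transported to the evaluation measure through the concentrability factors $\xi_{n}^{t+1}$ with the factor $2$ from the triangle inequality, after which the Softmax log-partition and Gaussian quadratic corrections are handled exactly as you describe. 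The one immaterial deviation is the Softmax normalization term: the paper bounds it via the log-sum inequality and an initialization-statistics estimate (Lemma \ref{lemma:changing_action_error}), giving $\sqrt{6} R_{\theta}/\tau_{t+1} + \mathcal{O}(\tau_{t+1}^{-1})$, whereas you use $1$-Lipschitzness of log-sum-exp in the sup-norm together with the diameter bound $\left\Vert \theta^{t+1} - \theta^{t+1}_{n} \right\Vert_{2} \le 2 R_{\theta}$, which yields a bound of the same order.
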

See Appendix \ref{proof:lemma:omega} for the proof.
\begin{remark}
Lemma \ref{lemma:omega} shows that the aggregation step of FedRAC possesses a lower error bound free from the heterogeneity $\kappa$ of the system. Intuitively, estimating gradients with the same action-value function $Q^{w^{t}}$ stabilizes the learning and encourages clients to move in the same direction. However, they may still have different gradient directions because of varying state visitations.
\end{remark}

\subsection{Global Convergence \label{subsec:global_convergence}}

Lastly, we establish the global convergence rate by combining all the previous results.
\begin{theorem}
\label{theorem:global_convergence_rate}
(Global Convergence Rate). Define $\varepsilon^{t} = \sum_{n=1}^{N} q_{n} \varepsilon_{n}^{t+1}$, $M^{t} = \sum_{n=1}^{N} q_{n} M_{n}^{t+1}$, and suppose that Assumption \ref{assumption:concentrability_coefficient} holds. For the policy sequence attained by Algorithm \ref{alg:FedRAC}, we have
\begin{alignat}{1}
\min_{0 \le t \le T} \mathbb{E}_{\text{init}} \left[ \eta(\pi^{\ast}) - \eta(\pi^{\theta^{t}}) \right] &\le \frac{D_{KL}^{\ast,0} + \sum_{t=1}^{T} \left( \varepsilon^{t} + M^{t} + \Omega^{t} \right)}{\sum_{t=1}^{T}\beta_{t}^{-1}}.
\end{alignat}
\end{theorem}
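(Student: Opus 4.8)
The plan is to treat this as a mirror-descent-style potential argument organized around $\Phi_{t} = \mathbb{E}_{\text{init}, n \thicksim \mathcal{C}, s \thicksim \rho_{\pi^{\ast},n}}\left[ D_{KL}\left(\pi^{\ast}(\cdot\vert s) \Vert \pi^{\theta^{t}}(\cdot\vert s)\right)\right]$, for which $\Phi_{0} = D_{KL}^{\ast,0}$ and $\Phi_{t} \ge 0$. Everything reduces to a single \emph{one-step descent inequality} $\beta_{t}^{-1}\mathbb{E}_{\text{init}}\left[\eta(\pi^{\ast}) - \eta(\pi^{\theta^{t}})\right] \le (\Phi_{t} - \Phi_{t+1}) + \varepsilon^{t} + M^{t} + \Omega^{t}$. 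Once this is in hand I sum over $t$, telescope $\sum_{t}(\Phi_{t}-\Phi_{t+1})$ to the initial potential $D_{KL}^{\ast,0}$ minus a nonnegative final term which I drop, and finish with the elementary observation that the minimum is no larger than any positively weighted average, i.e. $\min_{t}\mathbb{E}_{\text{init}}[\eta(\pi^{\ast})-\eta(\pi^{\theta^{t}})] \le (\sum_{t}\beta_{t}^{-1})^{-1}\sum_{t}\beta_{t}^{-1}\mathbb{E}_{\text{init}}[\eta(\pi^{\ast})-\eta(\pi^{\theta^{t}})]$ with weights $\beta_{t}^{-1}>0$.

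To build the one-step inequality I start from the performance difference lemma, which rewrites each local gap as $\eta_{n}(\pi^{\ast})-\eta_{n}(\pi^{\theta^{t}}) = \frac{1}{1-\gamma}\mathbb{E}_{s \thicksim \rho_{\pi^{\ast},n}}\left[\left< Q^{\pi^{\theta^{t}}}_{n}(s,\cdot), \pi^{\ast}(\cdot\vert s) - \pi^{\theta^{t}}(\cdot\vert s)\right>\right]$; this is exactly why $\Phi_{t}$ and all three supporting lemmas are stated over the optimal visitation measure $\rho_{\pi^{\ast},n}$. I then use the exponential-weights structure of the ideal PPO update $\pi^{t+1}_{n} \propto \pi^{\theta^{t}}\exp(\beta_{t}^{-1} Q^{\pi^{\theta^{t}}}_{n})$: when paired against the centered direction $\pi^{\ast}-\pi^{\theta^{t}}$ the normalizer cancels, giving the exact identity $\beta_{t}^{-1}\left< Q^{\pi^{\theta^{t}}}_{n}, \pi^{\ast}-\pi^{\theta^{t}}\right> = \left<\log(\pi^{t+1}_{n}/\pi^{\theta^{t}}), \pi^{\ast}-\pi^{\theta^{t}}\right>$, which recasts the gap as a log-ratio of policies, the object all three lemmas control.

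The core algebraic step is the decomposition $\log(\pi^{\theta^{t+1}}/\pi^{\theta^{t}}) = \log(\pi^{\theta^{t+1}}/\pi^{\theta^{t+1}_{n}}) + \log(\pi^{\theta^{t+1}_{n}}/\pi^{t+1}_{n}) + \log(\pi^{t+1}_{n}/\pi^{\theta^{t}})$, which I pair with $\pi^{\ast}-\pi^{\theta^{t}}$ and average under $\mathbb{E}_{\text{init},n,s}$. The third (ideal) term delivers the gap via the identity above; the second (propagation) term is controlled by $\varepsilon_{n}^{t}$ through Lemma \ref{lemma:error_propagation}, which under Assumption \ref{assumption:concentrability_coefficient} already folds both the policy-improvement and policy-evaluation inaccuracies into $\varepsilon_{n}^{t}$; and the first (aggregation) term is controlled by $\Omega^{t}$ through Lemma \ref{lemma:omega}. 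Finally the identity $\left<\log(\pi^{\theta^{t+1}}/\pi^{\theta^{t}}), \pi^{\ast}-\pi^{\theta^{t}}\right> = \left<\log(\pi^{\theta^{t+1}}/\pi^{\theta^{t}}), \pi^{\ast}\right> + D_{KL}(\pi^{\theta^{t}}\Vert\pi^{\theta^{t+1}})$ splits off precisely the integrand of $\Phi_{t}-\Phi_{t+1}$ (the $\pi^{\ast}$ part) together with a nonnegative stepwise term that I bound by $M^{t}$ using Lemma \ref{lemma:stepwise_energy_difference}. Aggregating over clients with weights $q_{n}$ assembles the one-step inequality, and the placeholders $c_{t}, e_{n}^{t+1}$ let me run Softmax and Gaussian policies through the same argument.

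I expect the main obstacle to be the aggregation bookkeeping: the server averages \emph{parameters}, $\theta^{t+1}=\sum_{n}q_{n}\theta^{t+1}_{n}$, whereas the clean mirror-descent identity lives in \emph{policy} space, so $\pi^{\theta^{t+1}}\neq\sum_{n}q_{n}\pi^{\theta^{t+1}_{n}}$. Reconciling this mismatch is exactly the job of the first decomposition term and of Lemma \ref{lemma:omega}, whose bound carries the residual neural-network smoothness factor $\mathcal{O}(m^{-1/10})$; I must also verify that the stepwise term $D_{KL}(\pi^{\theta^{t}}\Vert\pi^{\theta^{t+1}})$ for the \emph{aggregated} policy is dominated by the per-client average $M^{t}=\sum_{n}q_{n}M_{n}^{t+1}$, which I obtain by convexity together with the uniform $\ell_{\infty}$ control of Lemma \ref{lemma:stepwise_energy_difference}. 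A secondary subtlety is tracking the $1-\gamma$ factor from the performance difference lemma so that the denominator appears as $\sum_{t}\beta_{t}^{-1}$ rather than $(1-\gamma)\sum_{t}\beta_{t}^{-1}$; this is absorbed into the convention for $\eta$ and the rescaling already applied in (\ref{eq:policy_improvement_obj}).
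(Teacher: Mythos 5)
Your overall architecture --- the potential $\Phi_{t} = \mathbb{E}_{\text{init}, n \thicksim \mathcal{C}, s \thicksim \rho_{\pi^{\ast},n}}\left[ D_{KL}(\pi^{\ast}(\cdot\vert s) \Vert \pi^{\theta^{t}}(\cdot\vert s))\right]$, a one-step descent inequality assembled from Lemmas \ref{lemma:error_propagation}, \ref{lemma:stepwise_energy_difference}, and \ref{lemma:omega}, telescoping, and the final ``min $\le$ weighted average'' step --- is exactly the paper's. But your bookkeeping has two genuine gaps, precisely at the spots you flagged. First, you pair the aggregation term $\log\left(\pi^{\theta^{t+1}}/\pi^{\theta^{t+1}_{n}}\right)$ with $\pi^{\ast}-\pi^{\theta^{t}}$ and assert it is controlled by $\Omega^{t}$. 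It is not: $\Omega^{t}$ in Lemma \ref{lemma:omega} is defined as an expectation with $a \thicksim \pi^{\ast}$ only, so it covers the $\langle\cdot,\pi^{\ast}\rangle$ half, and the leftover piece $\langle \log(\pi^{\theta^{t+1}}/\pi^{\theta^{t+1}_{n}}), \pi^{\theta^{t}}\rangle$ is bounded by no lemma in the paper. Second, your stepwise term is the \emph{global-to-global} KL $D_{KL}(\pi^{\theta^{t}}\Vert\pi^{\theta^{t+1}})$, which you propose to dominate by $M^{t}$ ``by convexity.'' Convexity is unavailable here: the server averages \emph{parameters}, so $\log\pi^{\theta^{t+1}} \neq \sum_{n} q_{n}\log\pi^{\theta^{t+1}_{n}}$; the discrepancy is again the aggregation error, now paired with $\pi^{\theta^{t}}$, which loops back to the first gap. (Lemma \ref{lemma:stepwise_energy_difference} only controls the per-client ratios $\log(\pi^{\theta^{t+1}_{n}}/\pi^{\theta^{t}})$.)

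The two leftovers do happen to merge, since $D_{KL}(\pi^{\theta^{t}}\Vert\pi^{\theta^{t+1}}) + \langle \log(\pi^{\theta^{t+1}}/\pi^{\theta^{t+1}_{n}}), \pi^{\theta^{t}}\rangle = D_{KL}(\pi^{\theta^{t}}\Vert\pi^{\theta^{t+1}_{n}})$, a per-client quantity; but even after this repair you would need an inequality of the form $D_{KL}(p\Vert q) \le \frac{1}{2}\Vert\log(p/q)\Vert_{\infty}^{2}$ to reach $M_{n}^{t+1}$, which is true but established nowhere in the paper. The paper avoids both problems by arranging the identity differently: it expands the pure $\pi^{\ast}$-pairing $\langle \log(\pi^{\theta^{t+1}}/\pi^{\theta^{t}}), \pi^{\ast}\rangle$ (which \emph{is} the integrand of $\Phi_{t}-\Phi_{t+1}$, so no correction KL ever appears), inserting $\pm\pi^{\theta^{t}}$ and $\pm\pi^{\theta^{t+1}_{n}}$ so that (i) the aggregation discrepancy pairs with $\pi^{\ast}$, matching $\Omega^{t}$ exactly; (ii) the ideal/error pieces pair with $\pi^{\ast}-\pi^{\theta^{t}}$, matching Lemma \ref{lemma:error_propagation} and the performance-difference reconstruction of the gap; and (iii) the stepwise contribution shows up as $-D_{KL}(\pi^{\theta^{t+1}_{n}}\Vert\pi^{\theta^{t}}) - \langle \log(\pi^{\theta^{t+1}_{n}}/\pi^{\theta^{t}}), \pi^{\theta^{t}}-\pi^{\theta^{t+1}_{n}}\rangle$, which Pinsker's inequality, H\"older's inequality, and $2xy - y^{2} \le x^{2}$ bound by $\frac{1}{2}\Vert \log(\pi^{\theta^{t+1}_{n}}/\pi^{\theta^{t}})\Vert_{\infty}^{2}$, i.e.\ by $M_{n}^{t+1}$ via Lemma \ref{lemma:stepwise_energy_difference}. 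A minor further omission: your ideal update $\pi^{t+1}_{n} \propto \pi^{\theta^{t}}\exp(\beta_{t}^{-1}Q_{n}^{\pi^{\theta^{t}}})$ is the baseline's; for the FedRAC branch the ideal target is $\sum_{i} \frac{q_{i}\rho_{\pi^{\ast},i}}{Z_{\pi^{\ast}}}Q_{i}^{\pi^{\theta^{t}}}$, and one must check (as the paper does) that averaging over $n \thicksim \mathcal{C}$, $s \thicksim \rho_{\pi^{\ast},n}$ collapses these weights so the same global gap is recovered.
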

See Appendix \ref{proof:theorem:global_convergence_rate} for the proof. As the width $m \to \infty$, the initial policy tends to a uniform policy, so $D_{KL}^{\ast,0}$ tends to $\log \left\vert \mathcal{A} \right\vert - \mathbb{E}_{n \thicksim \mathcal{C}, s \thicksim \rho_{\pi^{\ast},n}} \left[ H(\pi^{\ast}(\cdot \vert s)) \right]$, where $H(\cdot)$ denotes the entropy of a random variable. FedRAC achieves a tighter convergence upper bound than the baseline under both Softmax and Gaussian policies, with its performance independent of the heterogeneity level $\kappa$.


To gain insight into Theorem \ref{theorem:global_convergence_rate}, we examine the specific case with Softmax policy. A detailed derivation can be found in Appendix \ref{proof:eq:best_convergence_rate}. By setting $\beta_{t} = \beta \sqrt{T}$ and $\tau_{t+1} = \frac{\beta T^{2}}{t + 1}$, both algorithms converge at the rate of
$\frac{4 \sqrt{ \left( 5 R_{\theta}^{2} + 6 \right) \log \left\vert \mathcal{A} \right\vert}}{T} + \mathcal{O} \left( \frac{R_{\theta}^{6/5} m^{-1/10} \hat{R}_{\theta}^{2/5}}{\sqrt{T}}  + \frac{R_{\theta}}{\sqrt{T}}  \right) + p \kappa + \frac{\sum^{T}_{t=1} \sum^{N}_{n=1} q_{n} \epsilon_{n}^{t+1} \left( \psi^{t}_{n} + 2 \xi^{t}_{n} \right)}{T^{\frac{3}{2}}} + \frac{\sum^{T}_{t=1} \sum^{N}_{n=1} q_{n} \dot{\epsilon}^{t} \psi^{t}_{n}}{T} + \mathcal{O} \left( \frac{1}{T^{3}} \right)$
with the optimal penalty parameter $\beta_{t} = \beta \sqrt{T} = 2 \sqrt{\frac{ 5 R_{\theta}^{2} + 6 }{\log \left\vert \mathcal{A} \right\vert}}$, which is problem-specific and independent of $T$. This convergence rate consists of six terms: 1) the first $\mathcal{O} \left( 1/T \right)$ term same as that of centralized policy gradient with exact policy gradient and Softmax parameterization \cite{pmlr-v119-mei20b}; 2) the second $\mathcal{O} \left( 1/\sqrt{T} \right)$ term due to the impact of nonlinearity of neural network parameterization on aggregation; 3) the baseline may not converge to the globally optimal policy $\pi^{\ast}$ but its distance to $\pi^{\ast}$ is bounded by $\kappa$; 4) the accumulated policy improvement error; 5) the accumulated policy evaluation error which is changing slower than 4) w.r.t. $T$; and 6) a negligible term. With exact action-value functions, ideal policy update, and a homogeneous network, the first term dominates the rate. Thus, both algorithms converge to the optimal policy $\pi^{\ast}$ at the rate of $\mathcal{O} \left( \sqrt{\log \left\vert \mathcal{A} \right\vert} /T \right)$ when $m$ is sufficiently large. This rate is comparable to a recently developed rate for federated natural Actor-Critic \cite{NEURIPS2024_dbdea785}. 

\section{Experiments\label{sec:Experiments}}

In this section, we compare the proposed FedRAC with the conventional update order. All curves are averaged over five independent trials with different initialization seeds. Confidence intervals are reported by capturing randomness in model initialization, client selection, and environment dynamics. All experimental results are reproducible and can be accessed from the GitHub repository: https://github.com/ShiehShieh/FedRAC.

\subsection{Three Classical and One Autonomous Driving Environments \label{subsec:experiment_environments}}


We evaluate FedRAC on three federated environments based on OpenAI Gym \cite{DBLP:journals/corr/BrockmanCPSSTZ16}. We construct three heterogeneous networks by introducing heterogeneity into their transition probabilities. As a simple case, we vary actions by applying different constant shifts to a collection of 60 Mountain Car environments, called Mountain Cars. 
Actions are shifted by $\omega_{n}$ on all states before inputting to the $n$-th environment, where the constant $\omega_{n}$ is uniformly sampled from $[-1.5, 1.5]$. For Hopper and Half Cheetah, we vary the pole length so that the same action may lead to different outcomes for different clients \cite{pmlr-v151-jin22a}. Hoppers consist of 60 equally weighted Hopper-v3 environments, whose leg sizes are uniformly sampled from $[0.01, 0.10]$. Half Cheetahs is created in a similar way to Hoppers.

To further show the effectiveness of FedRAC, we construct a heterogeneous autonomous vehicle training system, called HongKongOSMs, by utilizing the Flow simulator \cite{pmlr-v87-vinitsky18a}, Eclipse SUMO (Simulation of Urban MObility), and OSM (OpenStreetMap) dataset \cite{OpenStreetMap, 9119753}. See Appendix \ref{sec:additionexperimentdetails} for details of the OSM datasets of ten areas. Each SUMO-based environment is based on one OSM dataset containing one RL-controlled vehicle and 20 IDM-controlled (Intelligent Driver Model) vehicles. We deploy two kinds of RL-controlled vehicles, differing in maximum acceleration/deceleration, so we have 20 environments. See Appendix \ref{sec:additionexperimentdetails} for more experimental and implementation details.

\subsection{Effectiveness of the Reversed Update Order \label{subsec:1}}

We implemented three base algorithms, including FedAvg, FedProx, and SCAFFOLD \cite{pmlr-v54-mcmahan17a,MLSYS2020_38af8613,pmlr-v119-karimireddy20a} with the conventional (the baseline) and the proposed update order (FedRAC) for actor and critic. As shown in Fig. \ref{fig:performance}, FedRAC constantly outperforms the baseline in all environments described in Section \ref{subsec:experiment_environments}. In some cases (Fig. \ref{fig:hopper} and \ref{fig:osm}), FedRAC’s performance remains invariant to the choice of the base algorithm, which empirically validates its capability to mitigate model divergence.

\subsection{Comparison of Different Parameterizations \label{subsec:2}}

We conducted a study on neural network parameterization, which was motivated by an observation from Section \ref{subsec:global_convergence}. In particular, the number of layers and neurons in the neural network influences the accuracy of value function estimation and policy update. These factors, in turn, directly impact the learning performance, as demonstrated in Fig. \ref{fig:parameterization}. While a deep network may achieve low training loss, it may also have a high generalization error. This is problematic in heterogeneous systems, as the local model may overfit its local environment, leading to diverging model updates. In Fig. \ref{fig:hopper-4layers} and \ref{fig:hopper-2layers-4layers}, FedRAC benefits from low training loss, as it helps to satisfy the constraints outlined in Lemma \ref{lemma:error_propagation}. In Fig. \ref{fig:hopper-2layers} and \ref{fig:hopper-4layers-2layers}, where networks are too narrow, FedRAC shows no advantage as it fails to meet those constraints. This coincides with our discussion at the end of Section \ref{subsec:global_convergence}.

\subsection{Comparison of Different Levels of Heterogeneity \label{subsec:3}}

In this section, we validate the robustness of FedRAC to different levels of heterogeneity. By varying the magnitude of action noise in Mountain Cars, we make three variants, including Low-Mountain Cars, Medium-Mountain Cars, and High-Mountain Cars. They sample the action noises $\omega_{n}$ from intervals $[-1.0, 1.0]$, $[-1.5, 1.5]$, and $[-2.0, 2.0]$, respectively. Intuitively, the heterogeneity level of the system increases as the action noise becomes more and more different among clients. As illustrated in Fig. \ref{fig:heterogeneity_level}, FedRAC can consistently outperform the baseline on all variants, except for FedRAC + SCAFFOLD vs. SCAFFOLD in Fig. \ref{fig:mcc_high_level_hete}. Nevertheless, both FedRAC + SCAFFOLD and SCAFFOLD achieve near-optimal performance in Fig. \ref{fig:mcc_high_level_hete}. However, we did not observe an increase in the performance gap between FedRAC and the baseline as the heterogeneity level increases. We conjecture that the attainable reward changes as the heterogeneity level changes, so the magnitudes of the performance gap between Fig. \ref{fig:mcc_low_level_hete}, \ref{fig:mcc_medium_level_hete}, and \ref{fig:mcc_high_level_hete} are not comparable.

\begin{figure}[!t]
\centering
\subfloat[]{\includegraphics[width=0.33\columnwidth]{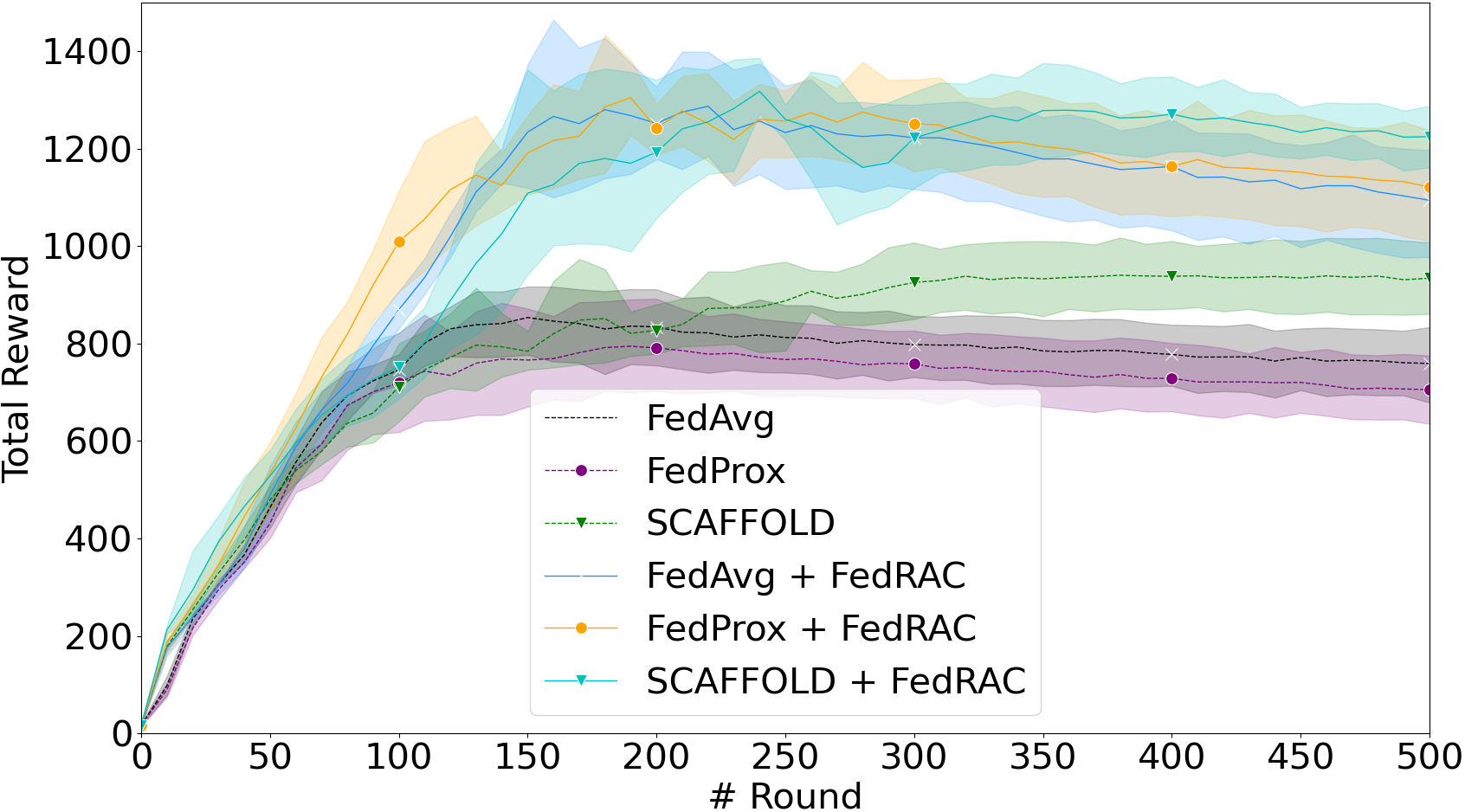}\label{fig:hopper}}
\hfil
\subfloat[]{\includegraphics[width=0.33\columnwidth]{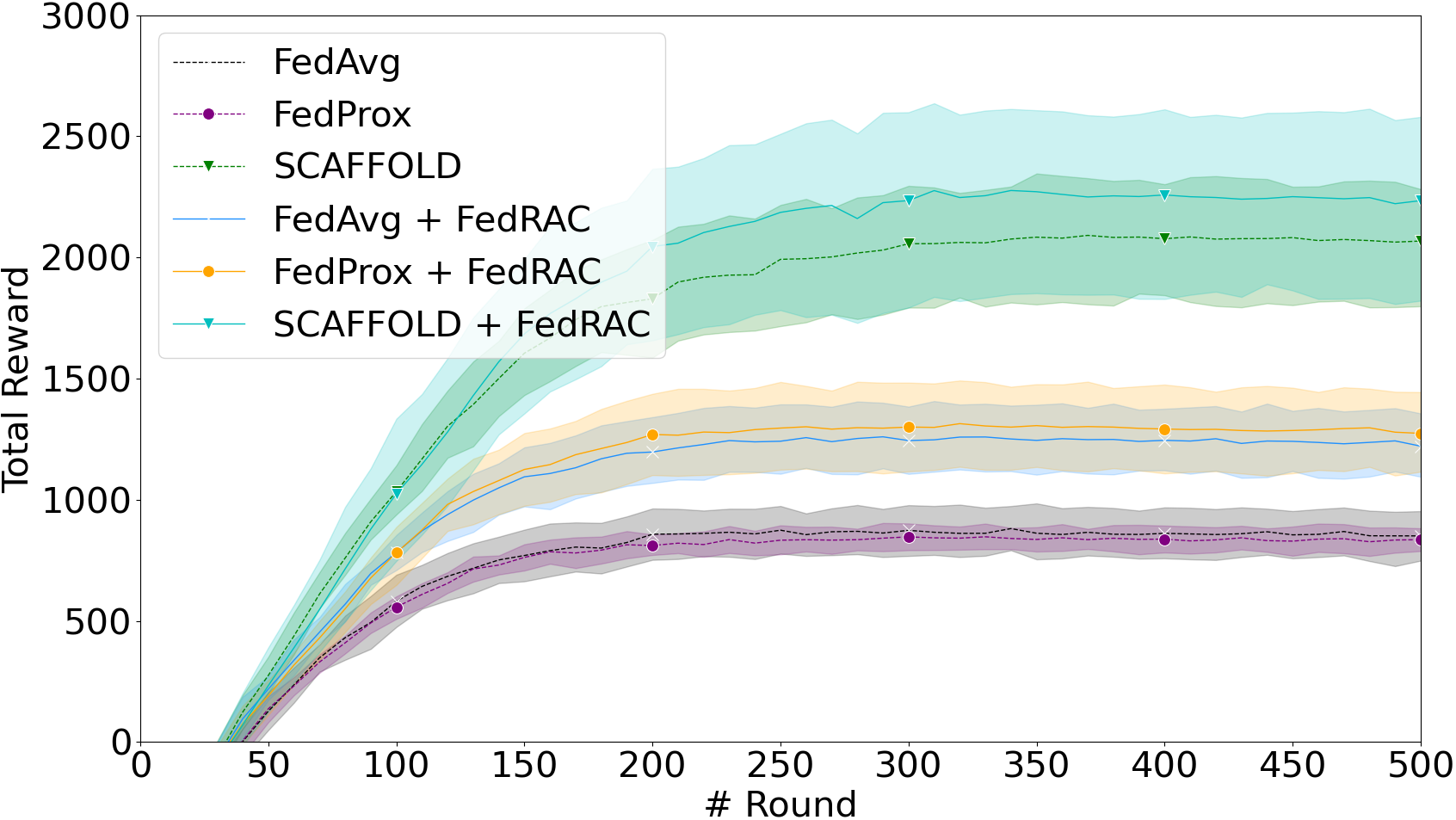}\label{fig:halfcheetah}}
\hfil
\subfloat[]{\includegraphics[width=0.33\columnwidth]{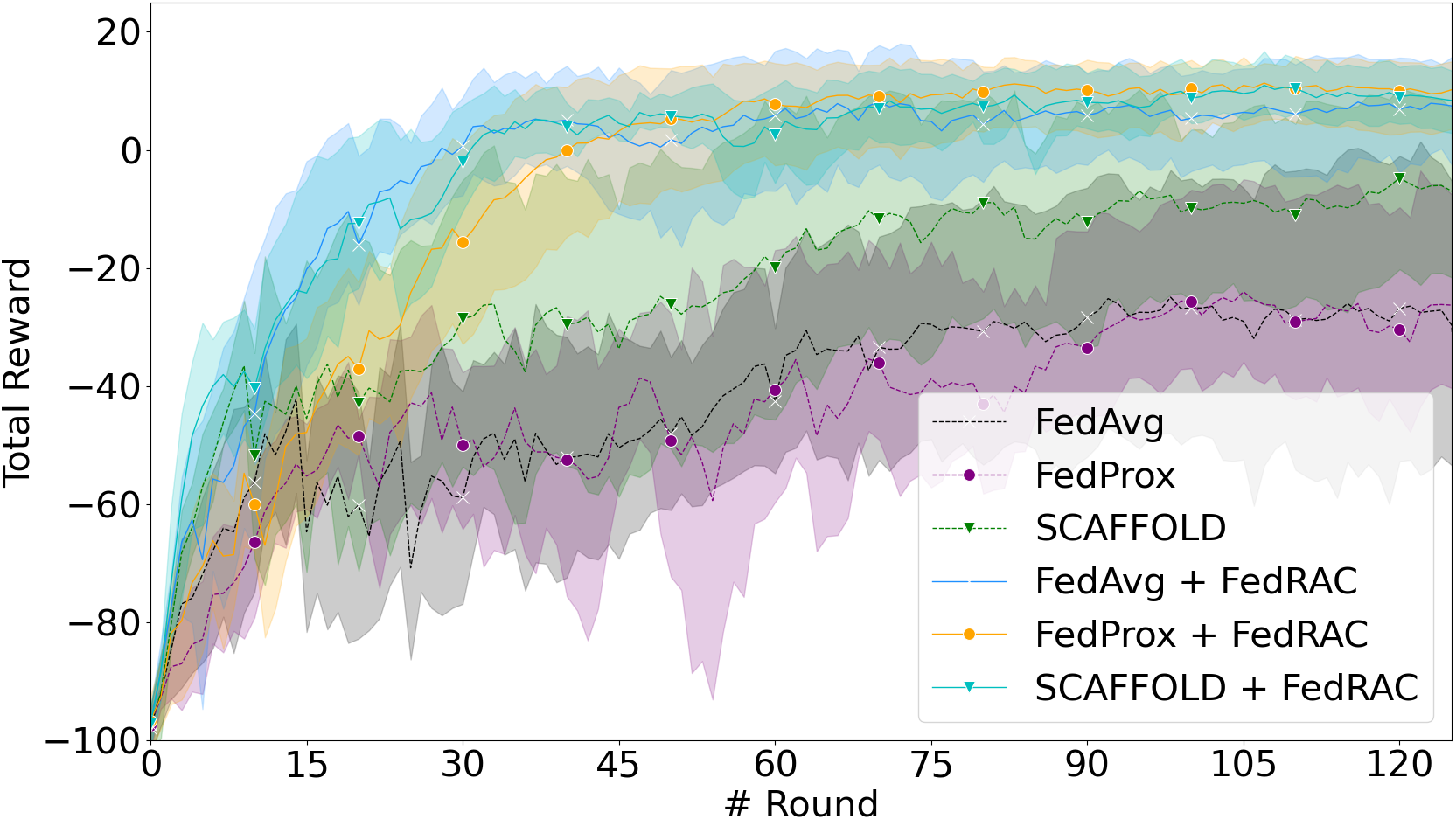}\label{fig:osm}}
\caption{Comparisons on different simulations. (a) Hoppers; (b) Half Cheetahs; (c) HongKongOSMs.}
\label{fig:performance}
\end{figure}

\begin{figure}[!t]
\centering
\subfloat[]{\includegraphics[width=0.24\columnwidth]{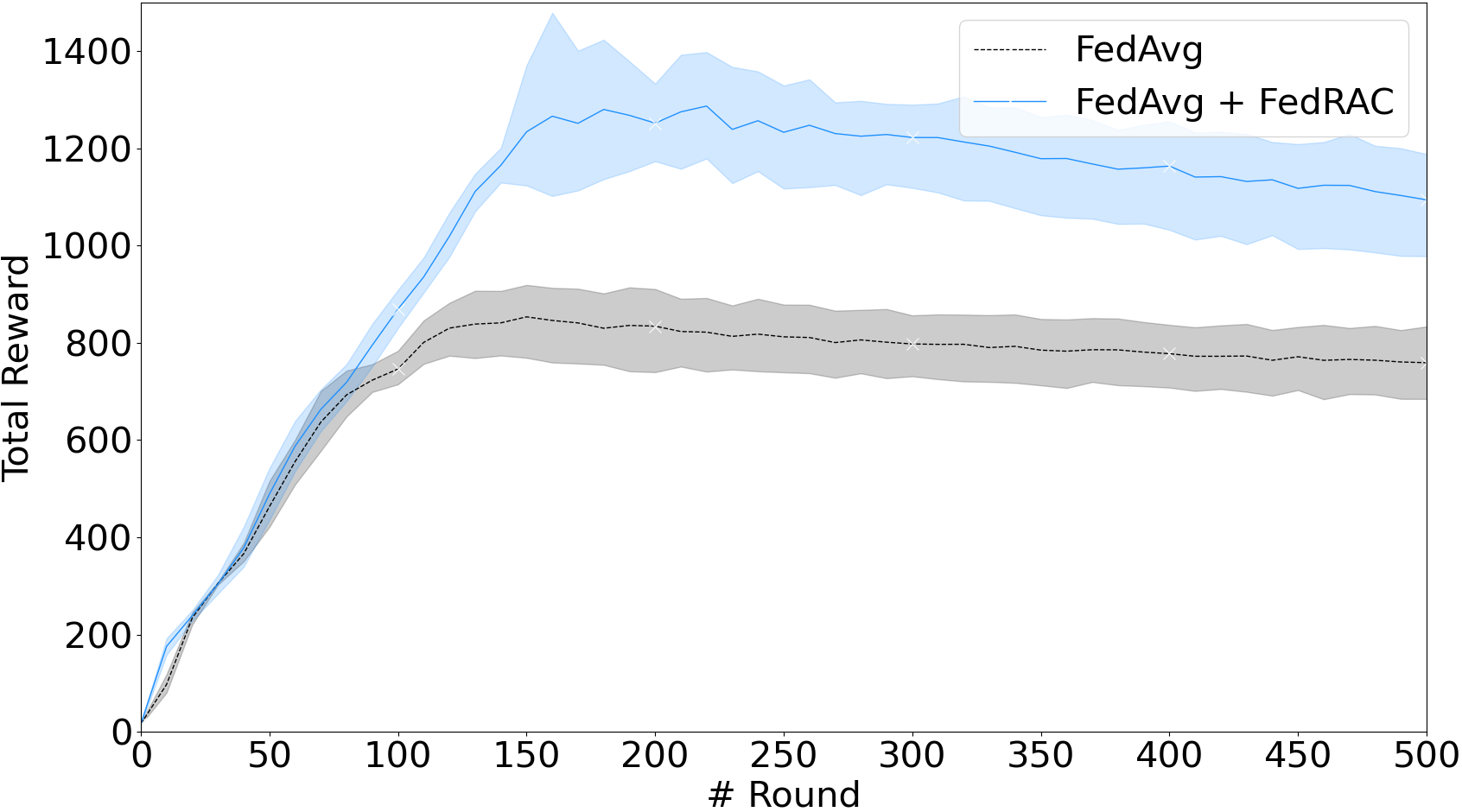}
\label{fig:hopper-4layers}}
\hfil
\subfloat[]{\includegraphics[width=0.24\columnwidth]{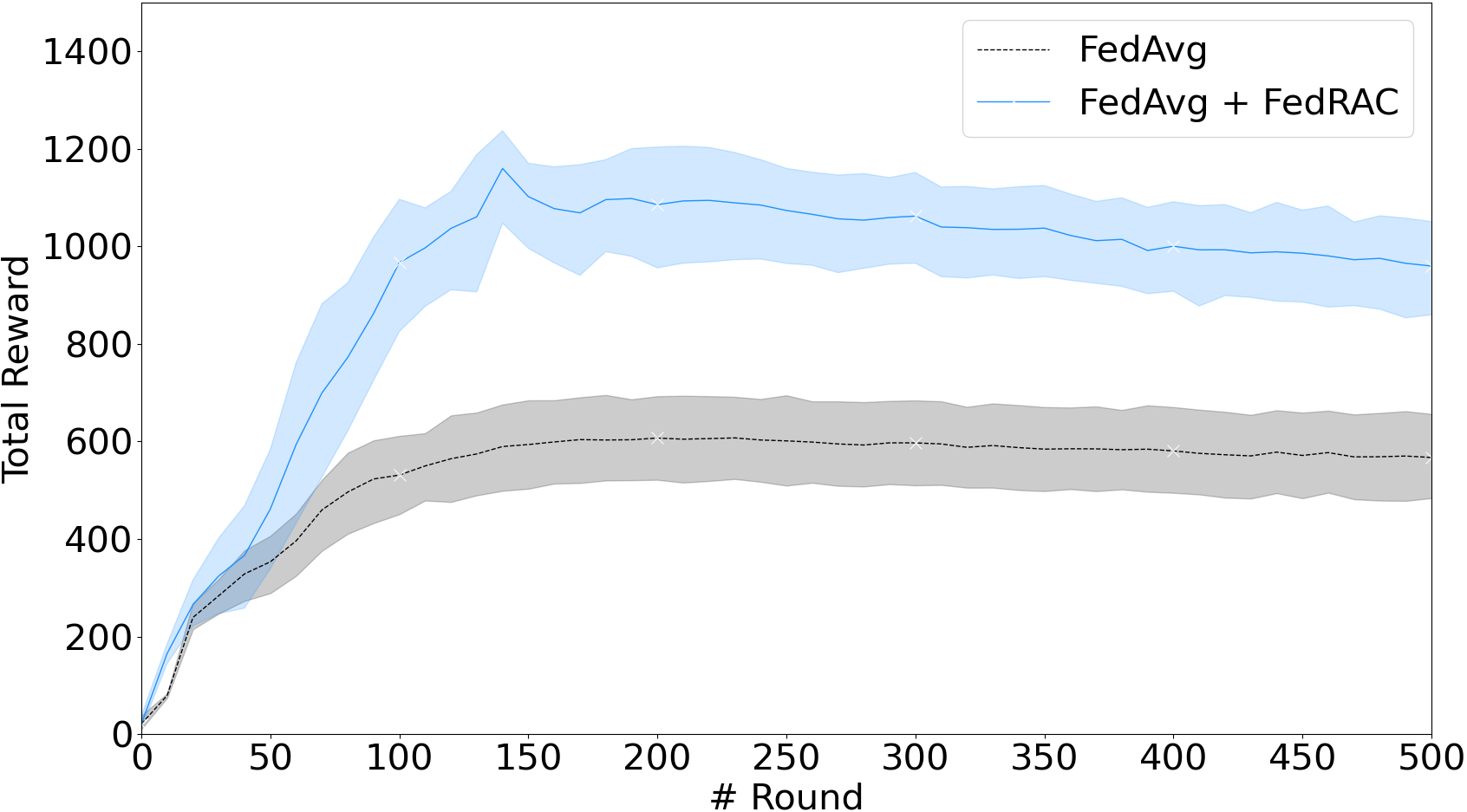}\label{fig:hopper-2layers-4layers}}
\hfil
\subfloat[]{\includegraphics[width=0.24\columnwidth]{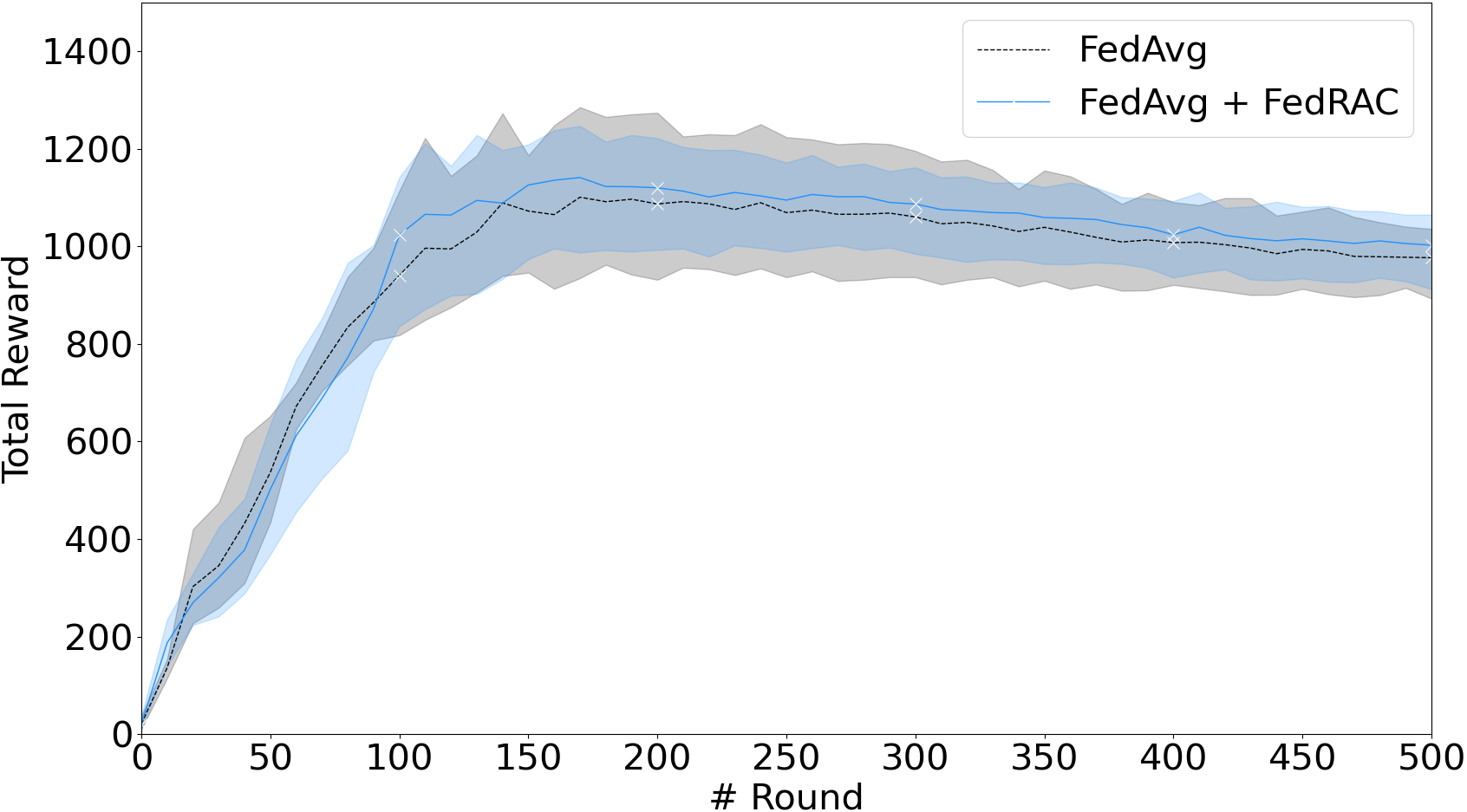}\label{fig:hopper-2layers}}
\hfil
\subfloat[]{\includegraphics[width=0.24\columnwidth]{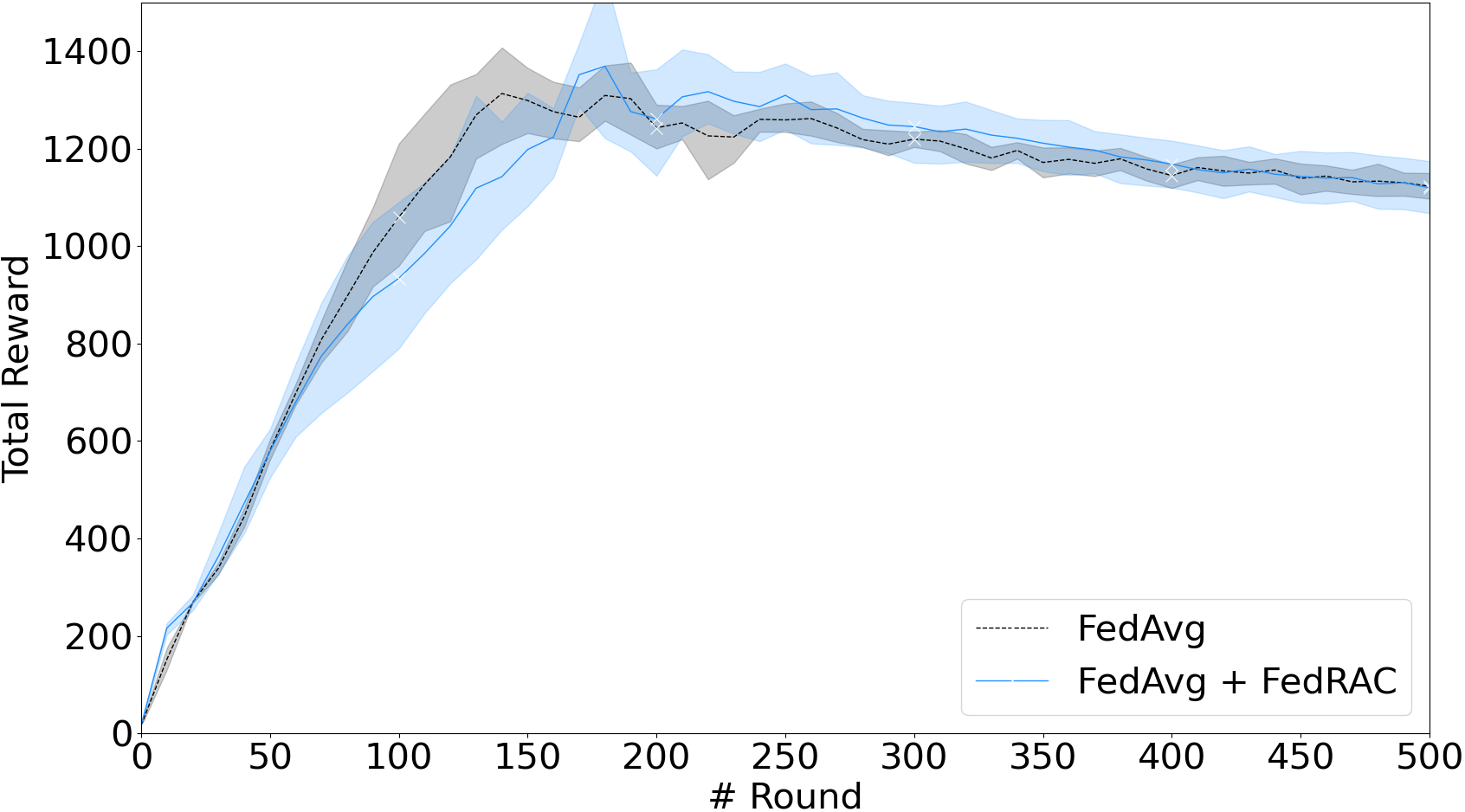}\label{fig:hopper-4layers-2layers}}
\caption{Learning curves on Hoppers with different neural network settings: (a) 4-layer MLPs for both; (b) 2-layer MLPs for actor and 4-layer MLPs for critic; (c) 2-layer MLPs for both; (d) 4-layer MLPs for actor and 2-layer MLPs for critic.}
\label{fig:parameterization}
\end{figure}

\begin{figure}[!t]
\centering
\subfloat[]{\includegraphics[width=0.24\columnwidth]{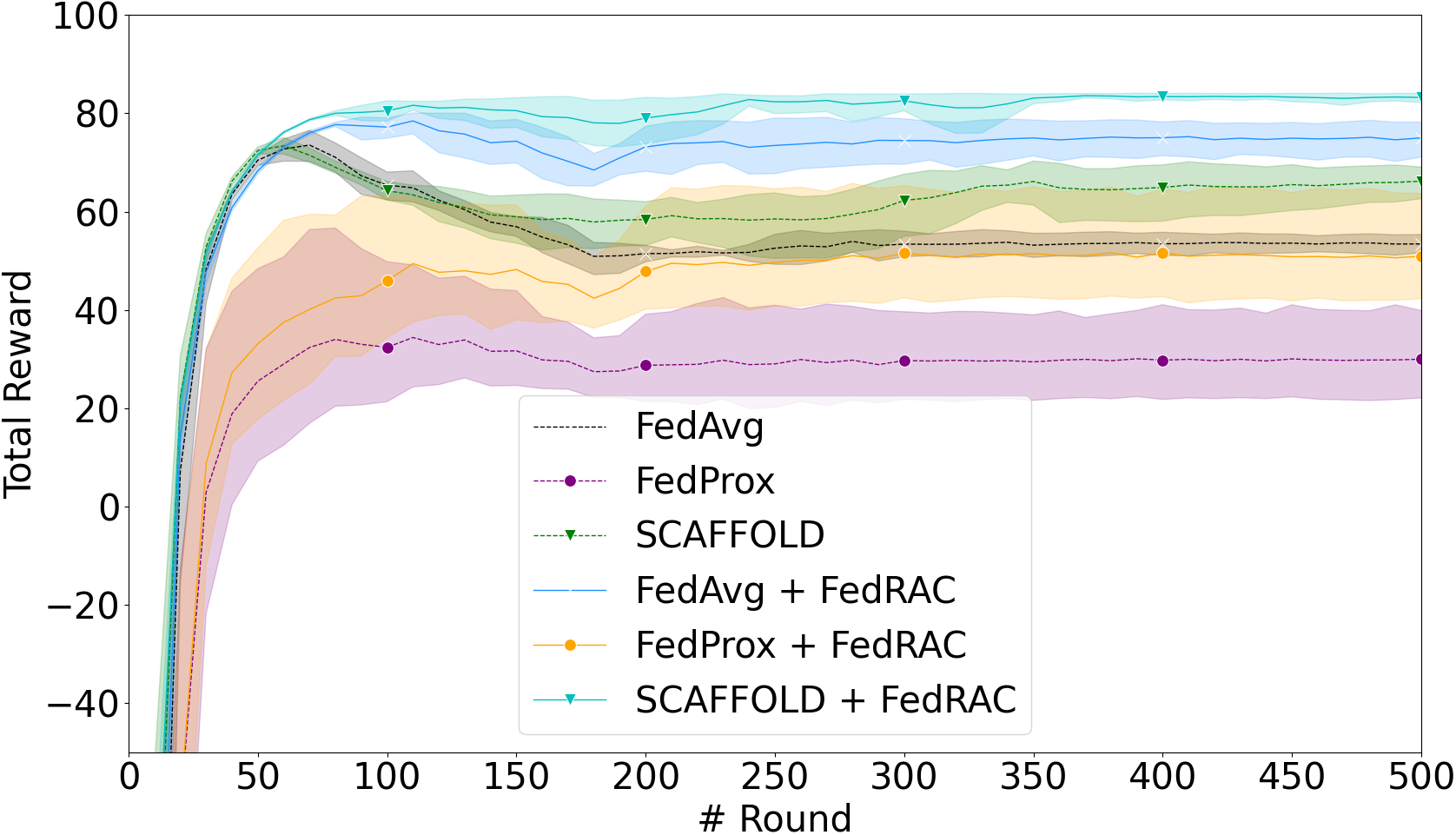}
\label{fig:mcc_low_level_hete}}
\hfil
\subfloat[]{\includegraphics[width=0.24\columnwidth]{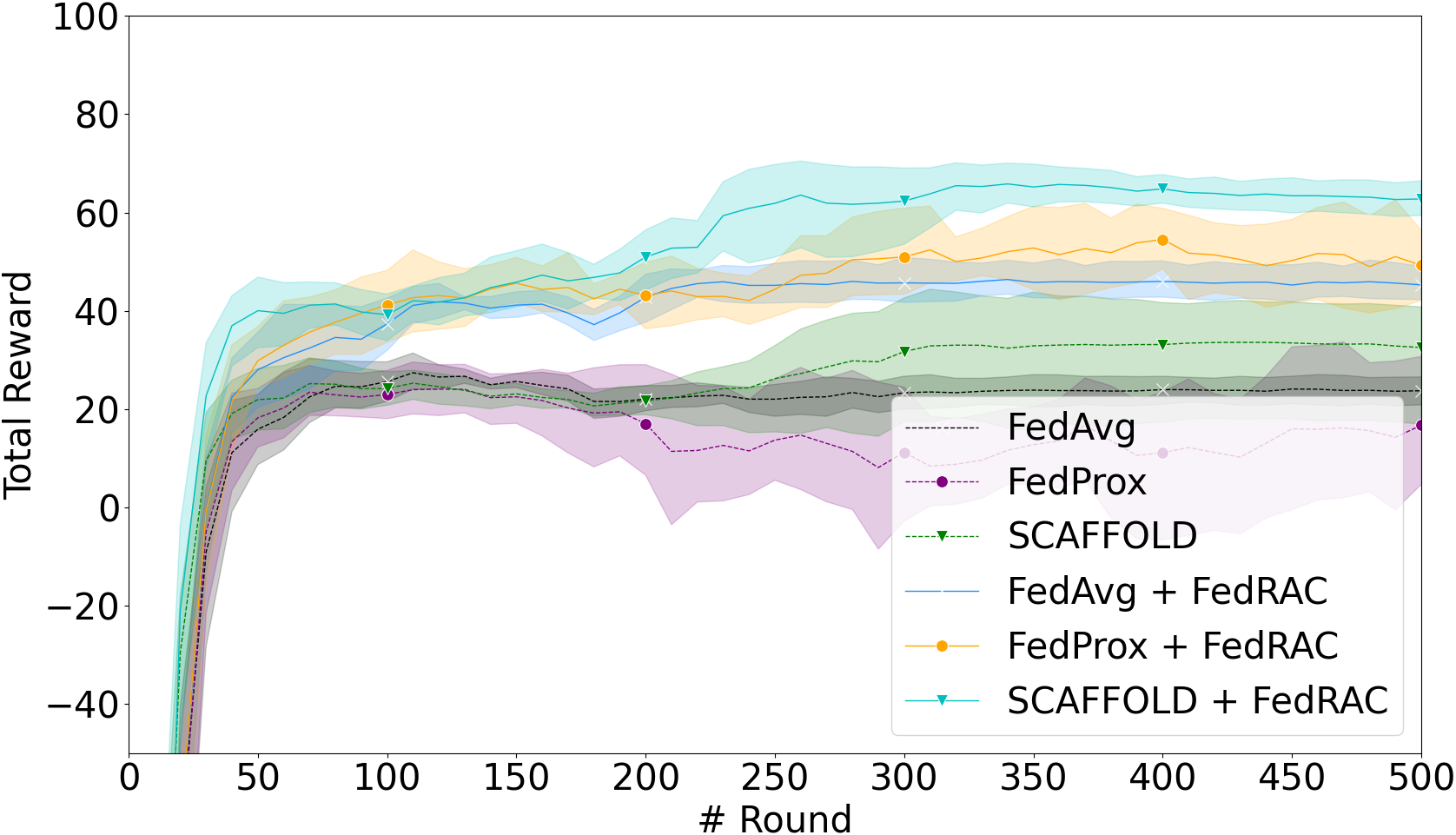}\label{fig:mcc_medium_level_hete}}
\hfil
\subfloat[]{\includegraphics[width=0.24\columnwidth]{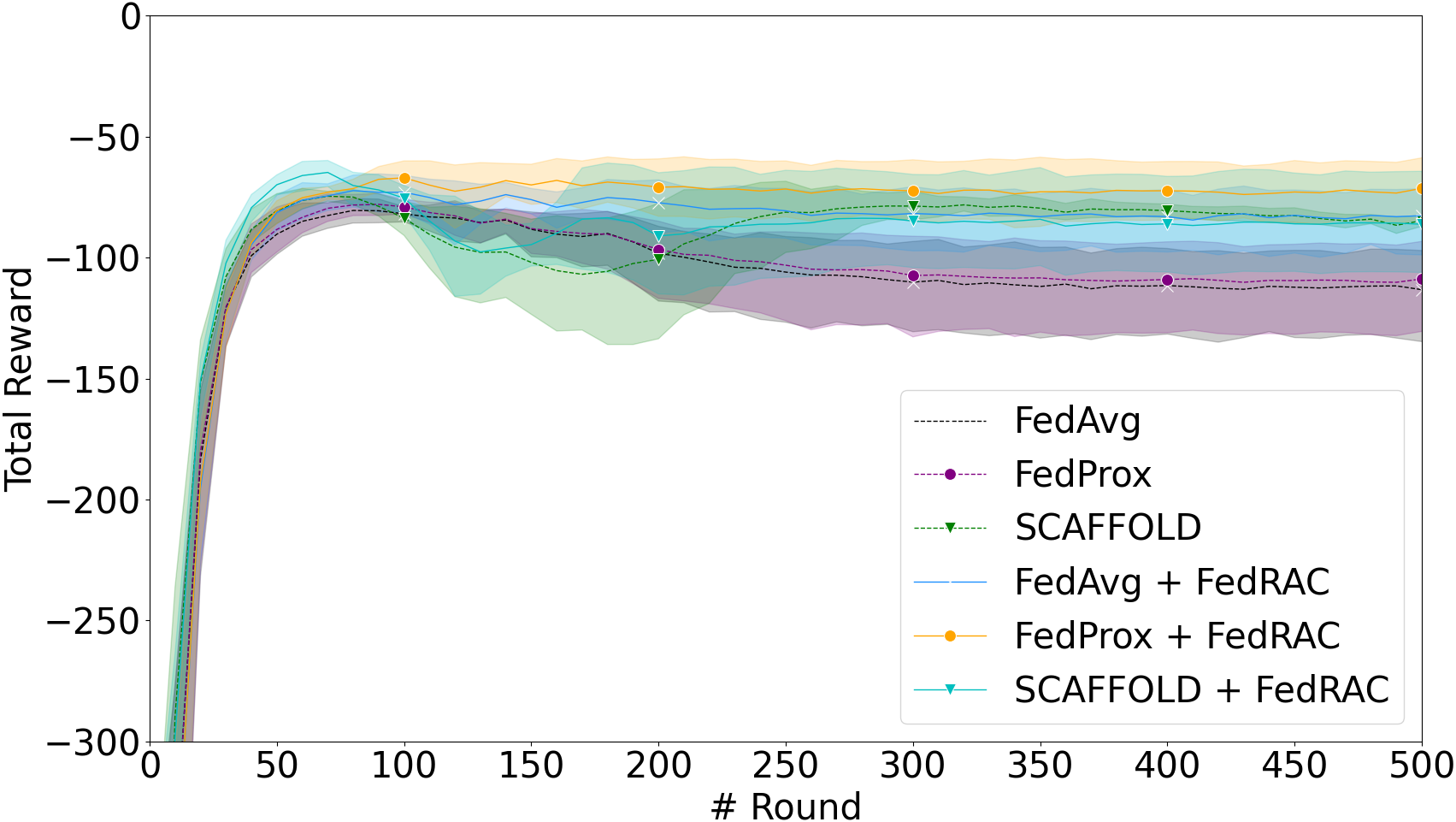}\label{fig:mcc_high_level_hete}}
\hfil
\subfloat[]{\includegraphics[width=0.24\columnwidth]{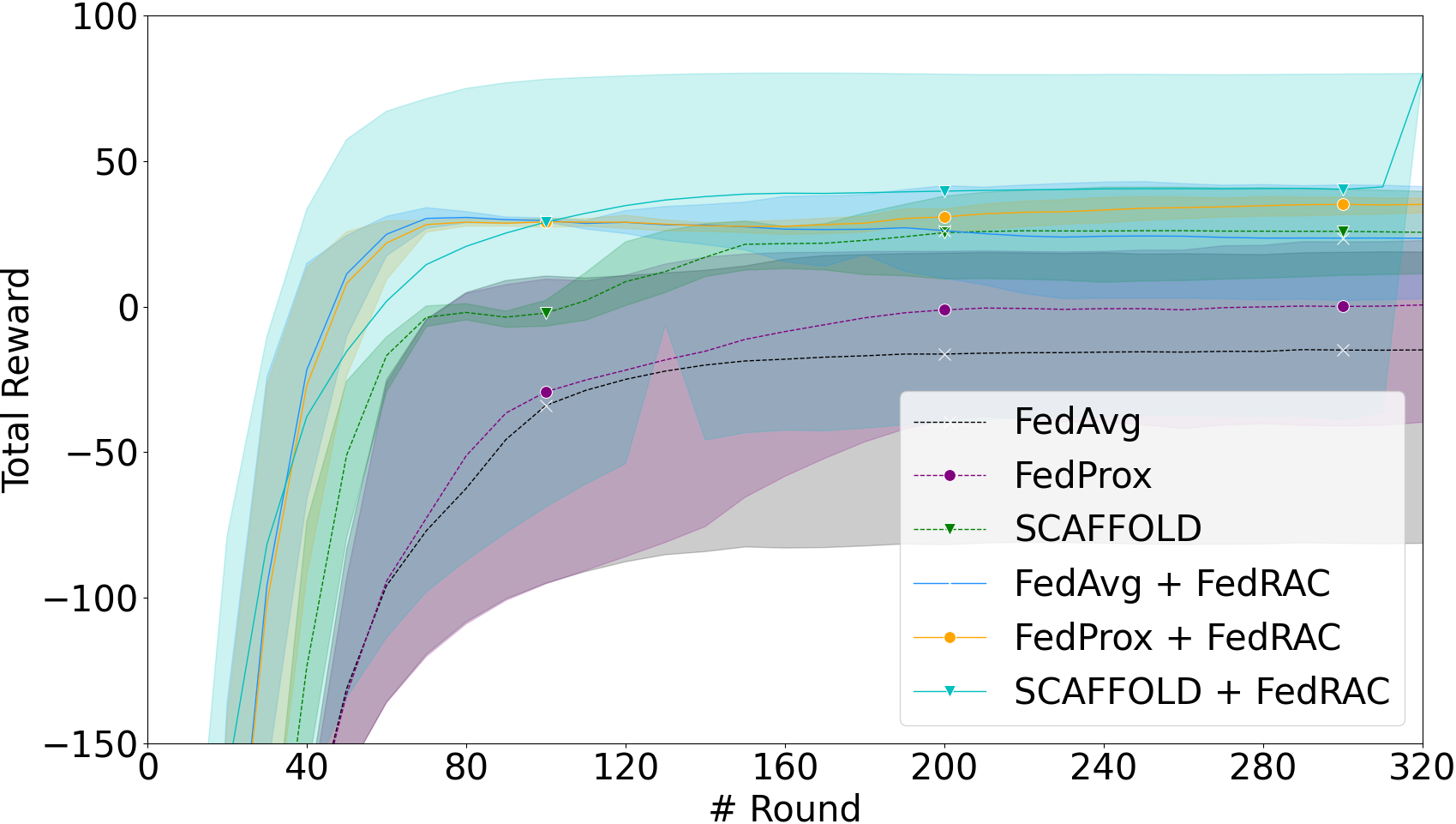}\label{fig:mcc_medium_level_hete_scalability}}
\caption{Learning curves on Mountain Cars with different levels of heterogeneity and a large-
scale scenario: (a) Low-Mountain Cars; (b) Medium-Mountain Cars; (c) High-Mountain Cars; (d)
Mountain Cars with 600 clients and a medium level of heterogeneity.}
\label{fig:heterogeneity_level}
\end{figure}


\subsection{Scalability \label{subsec:scalability}}

In Fig. \ref{fig:mcc_medium_level_hete_scalability}, we demonstrate the robustness of FedRAC for large-scale networks. We create a new Mountain Cars network with 600 clients and a medium level of heterogeneity, i.e., action shifts are uniformly sampled from $[-1.5, 1.5]$. FedRAC outperformed the baseline in this scenario, though SCAFFOLD demonstrated competitive performance even without reversing the update order.

%

\section{Conclusions and Limitations}

We explored the popular PPO algorithm in the FRL setting and proposed a novel yet simple approach (FedRAC) to mitigate the negative effect of data heterogeneity.
FedRAC only imposes lightweight modifications on the standard approach, promoting its potential application in real-world problems.
We provided the convergence analysis of FedRAC and the conventional approach, where the advantage of FedRAC is illustrated by a bound free from the data heterogeneity issue.
We have also investigated the impact of neural network/policy parameterization on convergence.
Experiment results showed that FedRAC outperforms the baseline on three heterogeneous networks based on classical RL environments. In addition, we evaluated FedRAC on an autonomous vehicles training problem to strengthen the connection between the theoretical setting and real-world applications.

Although the theory suggests that log-linear policy is desired, this finding is difficult to validate since there is a trade-off between approximation and aggregation error. In other words, although log-linear policies may minimize the error due to parameterization aggregation, they may suffer from a large approximation error due to locally under-fitted models. We leave this investigation to future work.

\bibliography{references}
\bibliographystyle{icml2024}


\newpage
\appendix
\section*{APPENDICES}

\section{Additional Implementation Details and Experiment Setting \label{sec:additionexperimentdetails}}

The hyperparameters for different experiments and the general FRL setting are shown in Table \ref{table:1} and Table \ref{table:4}, respectively. 

\begin{table}[ht!]
\caption{Hyperparameters for each environment.}
\label{table:1}
\centering
\begin{tabular}{c c c c c} 
 \hline
 Hyperparameter & Mountain Cars & Hoppers & Halfcheetahs & HongKongOSMs \\ [0.5ex] 
 \hline
 Learning Rate & 0.01 & 0.01 & 0.01 & 0.01 \\ 
 Learning Rate Decay & 0.98 & 0.98 & 0.98 & 0.98 \\
 Batch Size & 128 & 128 & 128 & 16 \\
 Timestep per Iteration (B) & 2048 & 2048 & 2048 & 256 \\
 Number of Epochs (E) & 10 & 10 & 10 & 10 \\
 Discount Factor ($\gamma$) & 0.99 & 0.99 & 0.99 & 0.99 \\
 Discount Factor for GAE & 0.95 & 0.95 & 0.95 & 0.95 \\
 KL Target & 0.001 & 0.01 & 0.01 & 1 \\
 $\mu$ for FedProx & 0.01 & 1 & 1 & 1\\ [1ex]
 \hline
\end{tabular}
\end{table}

\begin{table}[ht!]
\caption{General FRL setting. Refer to Section \ref{sec:problem_formulation} and Algorithm \ref{alg:FedRAC} for their definitions.}
\label{table:4}
\centering
\begin{tabular}{c c c c c} 
 \hline
 Environment & \#Client (N) & \#Participant (K) \\ [0.5ex] 
 \hline
 MountainCars & 60 & 6 \\
 Hoppers & 60 & 6 \\
 Halfcheetahs & 60 & 6 \\
 HongKongOSMs & 20 & 3 \\ [1ex]
 \hline
\end{tabular}
\end{table}

\begin{figure}[!b]
\centering
\subfloat[]{\includegraphics[width=0.14\columnwidth]{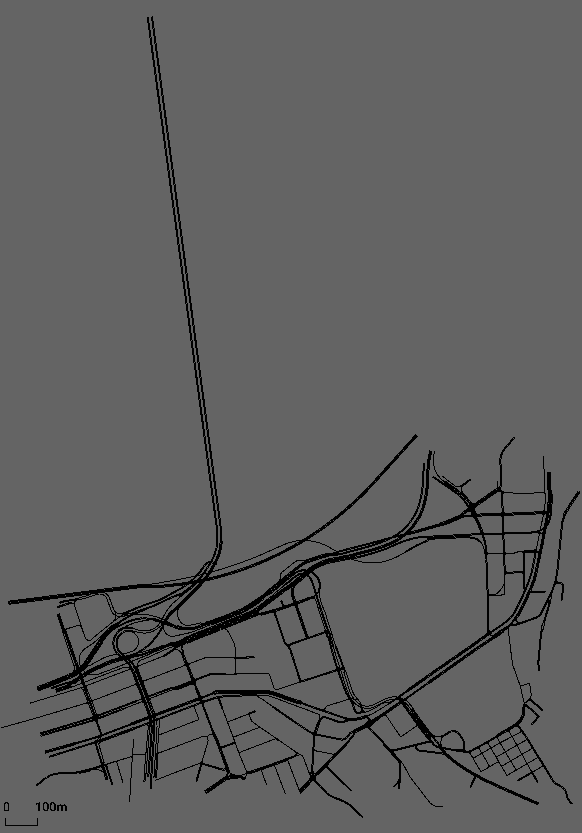}
\label{fig:causeway}}
\hfil
\subfloat[]{\includegraphics[width=0.14\columnwidth]{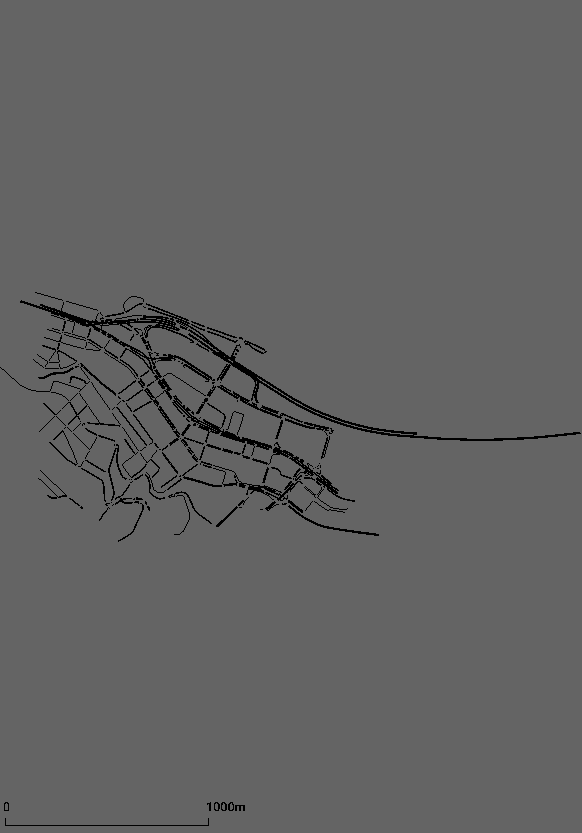}\label{fig:central}}
\hfil
\subfloat[]{\includegraphics[width=0.14\columnwidth]{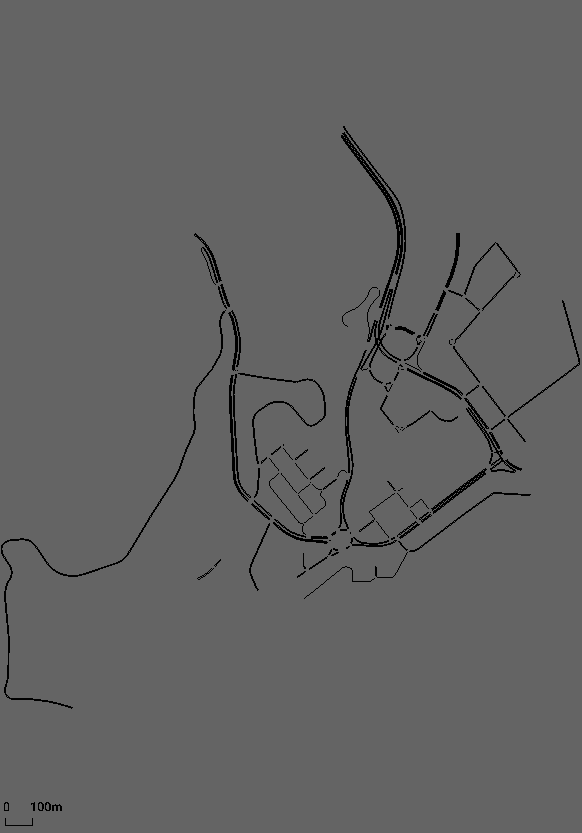}\label{fig:cw}}
\hfil
\subfloat[]{\includegraphics[width=0.14\columnwidth]{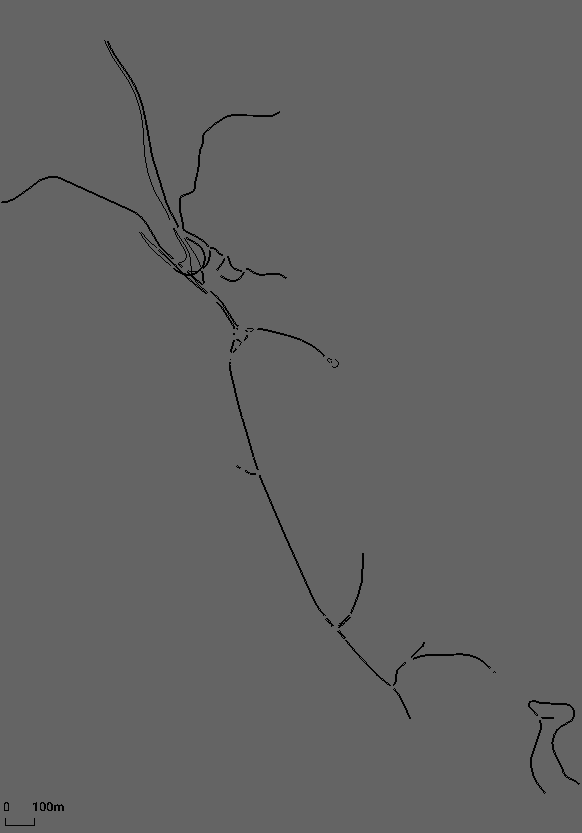}\label{fig:cwb}}
\hfil
\subfloat[]{\includegraphics[width=0.14\columnwidth]{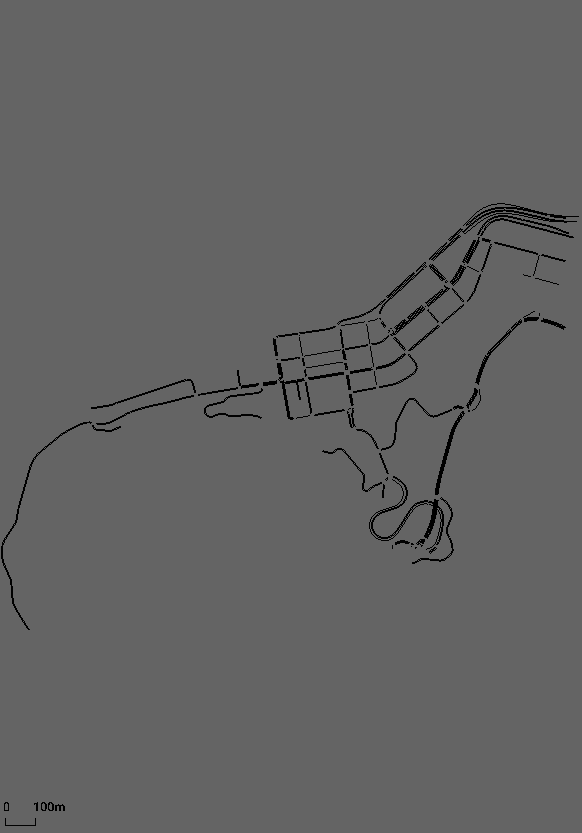}\label{fig:kennedy}}
\\
\subfloat[]{\includegraphics[width=0.14\columnwidth]{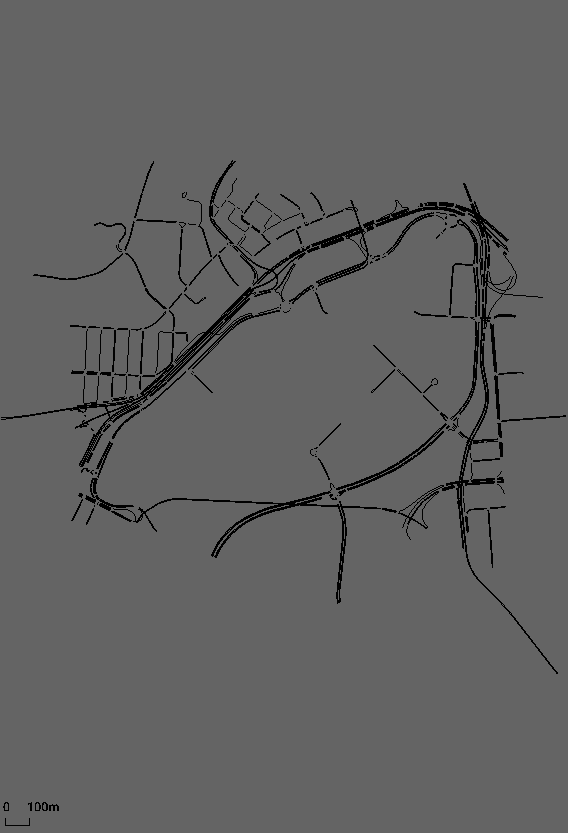}
\label{fig:kt}}
\hfil
\subfloat[]{\includegraphics[width=0.14\columnwidth]{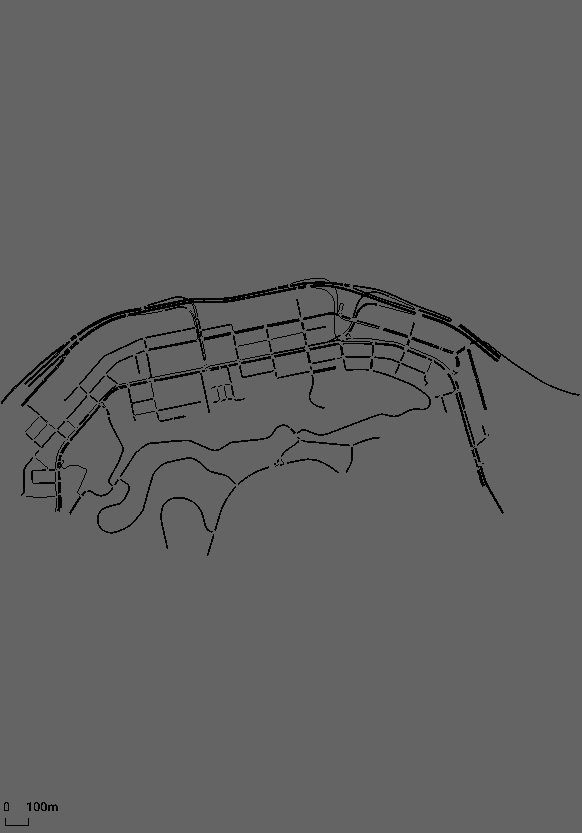}\label{fig:np}}
\hfil
\subfloat[]{\includegraphics[width=0.14\columnwidth]{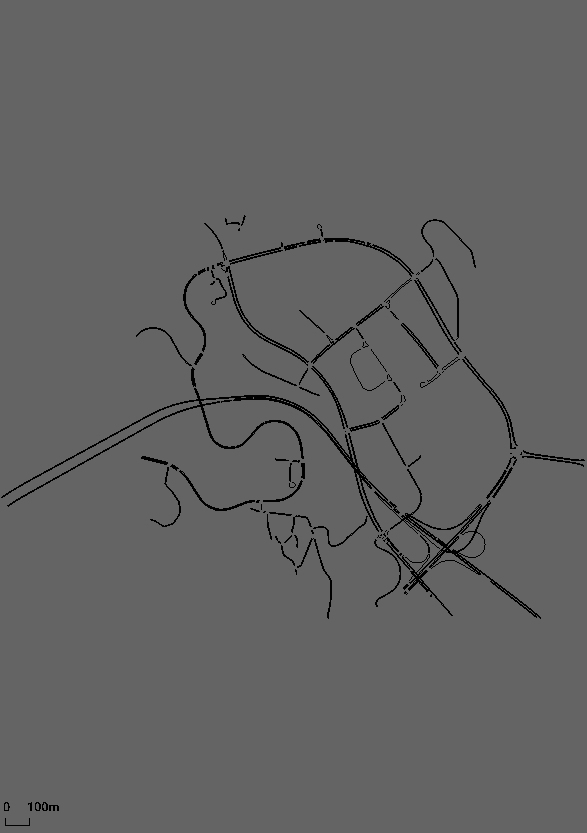}\label{fig:yt}}
\hfil
\subfloat[]{\includegraphics[width=0.14\columnwidth]{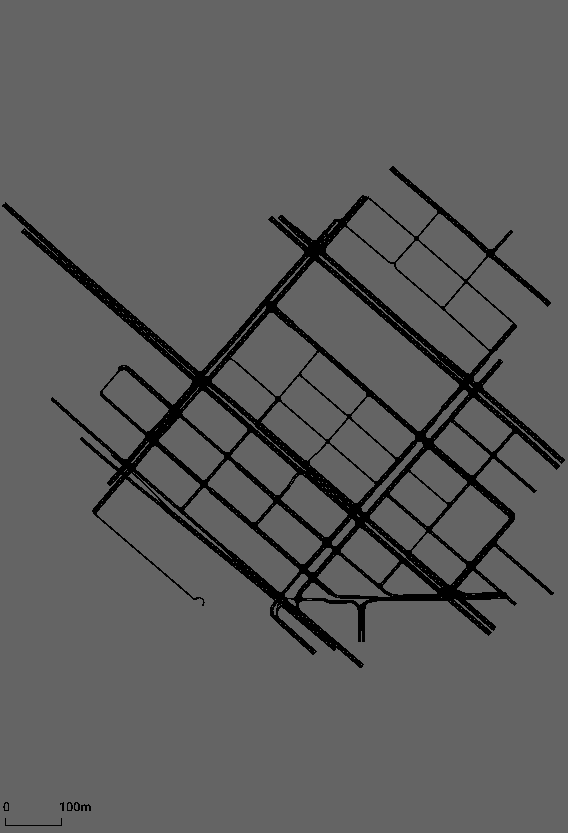}\label{fig:ssp}}
\hfil
\subfloat[]{\includegraphics[width=0.14\columnwidth]{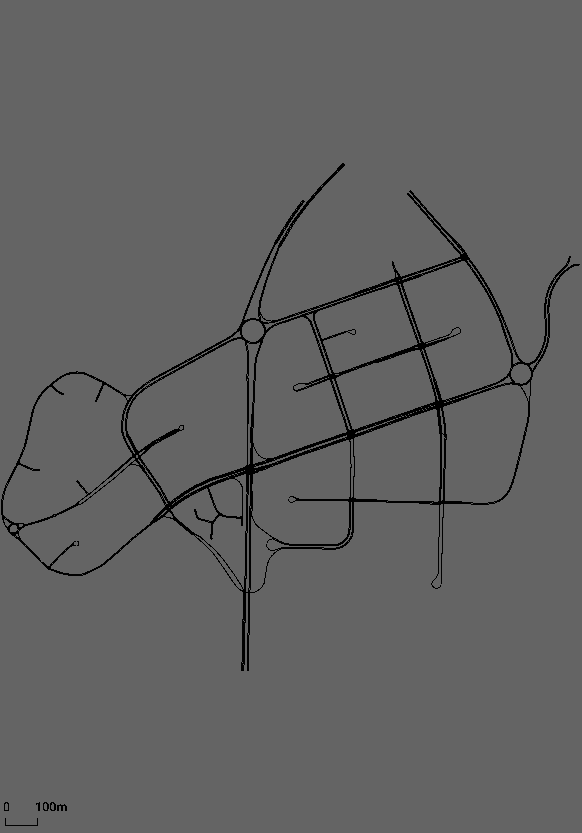}\label{fig:tko}}
\caption{OSM data of ten areas in Hong Kong. (a) Causeway; (b) Central; (c) Chai Wan; (d) Clear Water Bay; (e) Kennedy Town; (f) Kai Tak; (g) North Point; (h) Po Lam; (i) Sham Shui Po; (j) Tseung Kwan O.}
\label{fig:osms}
\end{figure}

\textbf{Implementation Details:} Since we are coping with continuous domains, we utilize Gaussian policies. Specifically, we use Multilayer Perceptrons (MLPs) with $\text{ReLU}$ non-linearity to output the mean of a Gaussian distribution, with a set of trainable standard deviations. We use two hidden layers (64, 64) for Mountain Cars and Half Cheetahs, and use four hidden layers (200, 100, 50, 25) for Hoppers and HongKongOSMs. 
We do not share parameters between the value function and policy, but they have the same neural network parameterization unless otherwise noted, e.g., Section \ref{subsec:2} and Fig. \ref{fig:parameterization}. We use the SGD optimizer with learning rate decay.

\textbf{Machines:} We simulate the federated learning experiments (1 server and N devices) on a commodity machine with 16 Intel(R) Xeon(R) Gold 6348 CPU @ 2.60GHZz. It took about 1, 5, and 30 minutes to finish one round of training for the Mountain Cars, Mujoco, and Sumo simulations, respectively.


\section{Proof of Lemma \ref{lemma:minimizer}\label{proof:lemma:minimizer}}

Neural PPO has proven the following policy update rule, which shows the relation between successive policies and suggests the subproblem of minimizing the MSE in (\ref{eq:policy_improvement}).
\begin{proposition}
\label{proposition:policy_update_relation}
\cite{NEURIPS2019_227e072d}. Given an arbitrary parameterized action-value function $Q^{w}$, the update
\begin{alignat}{1}
\pi^{t+1}_{n} &\leftarrow \arg\max_{\pi} \mathbb{E}_{s \thicksim \rho_{\pi^{\theta^{t}},n}} \left[ \left< Q^{w}(s,\cdot), \pi(\cdot \vert s) \right> - \beta_{t} \cdot D_{KL}(\pi(\cdot \vert s) \Vert \pi^{\theta^{t}}(\cdot \vert s)) \right] \nonumber
\end{alignat}
gives
\begin{alignat}{1}
\log \pi^{t+1}_{n} \propto \beta_{t} Q^{w} + \log \pi^{\theta^{t}}. \nonumber
\end{alignat}
\end{proposition}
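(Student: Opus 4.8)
The plan is to reduce the proposition to a collection of independent, finite-dimensional concave maximizations, one per state. The key observation is that the state distribution $\rho_{\pi^{\theta^t},n}$ in the objective is generated by the \emph{behavior} policy $\pi^{\theta^t}$ and is therefore fixed while $\pi$ is being optimized; moreover each conditional $\pi(\cdot\vert s)$ is constrained to the probability simplex independently of the conditionals at other states. Hence the outer expectation over $s$ decouples, and the joint maximizer is obtained by maximizing, for every $s$ in the support of $\rho_{\pi^{\theta^t},n}$, the per-state functional $J_s(p) = \langle Q^w(s,\cdot), p\rangle - \beta_t D_{KL}(p \,\Vert\, \pi^{\theta^t}(\cdot\vert s))$ over $p$ in the simplex. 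First I would note that $J_s$ is strictly concave: the inner-product term is linear in $p$, and $-D_{KL}(p\,\Vert\,\pi^{\theta^t}(\cdot\vert s))$ equals the entropy of $p$ plus a linear term, hence is strictly concave. Since the simplex is convex and compact, a unique maximizer exists.

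Next, for each fixed $s$ I would solve the constrained problem via Lagrange multipliers. The nonnegativity constraints are inactive at the optimum because the $\log p(a)$ term inside $D_{KL}$ drives the maximizer into the relative interior, so only the normalization constraint $\sum_a p(a)=1$ is binding. Introducing a multiplier $\lambda$ for this constraint and setting the partial derivative of the Lagrangian in $p(a)$ to zero gives
\begin{equation}
Q^w(s,a) - \beta_t\left( \log\frac{p(a)}{\pi^{\theta^t}(a\vert s)} + 1 \right) + \lambda = 0, \nonumber
\end{equation}
so that $\log p(a) = \log \pi^{\theta^t}(a\vert s) + \beta_t^{-1} Q^w(s,a) + (\lambda/\beta_t - 1)$. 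The final additive term does not depend on $a$ and is fixed by normalization, so collecting over all actions yields the claimed Gibbs form $\log \pi^{t+1}_n(\cdot\vert s) \propto \beta_t^{-1} Q^w(s,\cdot) + \log \pi^{\theta^t}(\cdot\vert s)$, i.e.\ the proportionality stated in the proposition up to the scaling convention chosen for the penalty coefficient $\beta_t$.

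For the continuous (Gaussian) action case I would carry out the same argument as a calculus-of-variations problem, replacing the sum in $D_{KL}$ by an integral: pointwise stationarity of the integrand of $J_s$ in the density $p(a)$ under the integral normalization constraint reproduces exactly the same stationarity condition, and hence the same exponential-tilting form. \textbf{The main obstacle} is not the algebra — which is the routine Gibbs-variational computation — but the two well-posedness checks surrounding it. First, one must justify exchanging the maximization with the outer expectation over $s$; this is precisely the decoupling of the first paragraph and is valid because the feasible set factorizes across states. Second, in the continuous case one must verify that the formal maximizer is a genuine probability density, i.e.\ that $\pi^{\theta^t}(\cdot\vert s)\exp(\beta_t^{-1}Q^w(s,\cdot))$ is integrable so the normalizing constant is finite; this follows from the boundedness of $Q^w$ inherited from the bounded reward $R_{\max}$ and the projection of $w$ onto $\mathcal{B}_{\mathcal{R_Q}}$, together with the Gaussian base measure. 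Once these are in place, the structural conclusion — that the ideal updated policy is an exponential tilt of $\pi^{\theta^t}$ by $Q^w$ — is exactly what Lemma \ref{lemma:minimizer} then projects onto the parameterized family through its MSE objective.
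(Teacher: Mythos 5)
Your proof is correct and takes essentially the same route as the paper's: a Lagrangian stationarity argument on the per-state normalization constraint, yielding the exponential-tilting (Gibbs) form of the maximizer. Yours is if anything more careful than the paper's version --- you add the per-state decoupling justification, strict concavity, the interior-optimum argument, and the continuous-case integrability check, and you correctly pin down the tilt as $\beta_t^{-1} Q^{w}$ (the convention used everywhere else in the paper, e.g.\ in Lemma \ref{lemma:minimizer_pre} and Lemma \ref{lemma:error_propagation}), whereas the proposition's displayed conclusion writes it loosely as $\beta_t Q^{w}$.
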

The following proof is a duplication from \cite{NEURIPS2019_227e072d} which shows the relation between successive policies, but we do not restrict the proof to the Softmax policy.
\begin{proof}
The subproblem of policy improvement for solving $\pi^{t+1}_{n}$ takes the form
\begin{alignat}{1}
& \max_{\pi} \mathbb{E}_{s \thicksim \rho_{\pi^{\theta^{t}},n}} \left[ \left< \pi(\cdot \vert s), Q^{w}(s,\cdot) \right> - \beta_{t} \left< D_{KL}(\pi(\cdot \vert s) \Vert \pi^{\theta^{t}}(\cdot \vert s) \right> \right] \nonumber \\
& \text{subject to} \sum_{a \in \mathbf{A}} \pi(a \vert s) = 1, \forall s \in \mathbf{S}. \nonumber
\end{alignat}
The Lagrangian of the above maximization problem takes the form
\begin{align}
\int_{s \in \mathbf{S}} \left( \left< \pi(\cdot \vert s), Q^{w}(s,\cdot) \right> - \beta_{t} \left< D_{KL}(\pi(\cdot \vert s) \Vert \pi^{\theta^{t}}(\cdot \vert s) \right> \right) \rho_{\pi^{\theta^{t}},n} \mathrm{d}s + \int_{s \in \mathbf{S}} \left( \sum_{a \in \mathbf{A}} \pi(a \vert s) - 1 \right) \lambda \mathrm{d}s. \nonumber
\end{align}
The optimality condition is
\begin{align}
Q^{w}(s,a) + \beta_{t} \log \pi^{\theta^{t}} (a \vert s) - \beta_{t} \pi(a \vert s) - \beta_{t} + \frac{\lambda(s)}{\rho_{\pi^{\theta^{t}},n}(s)} = 0, \nonumber
\end{align}
which indicates
\begin{align}
\log \pi^{t+1}_{n} \propto \beta_{t} Q^{w} + \log \pi^{\theta^{t}}. \nonumber
\end{align}
\end{proof}

We extend the proof in \cite{NEURIPS2019_227e072d} to show that the subproblem in (\ref{eq:policy_improvement}) is a minimization of (\ref{eq:policy_improvement_obj}) for both Softmax and Gaussian policies. Lemma \ref{lemma:minimizer} is a direct consequence of Lemma \ref{lemma:minimizer_pre}.

\begin{lemma}
\label{lemma:minimizer_pre}
Suppose that the policy improvement error satisfies
\begin{alignat}{1}
& \mathbb{E}_{s \thicksim \rho_{\pi^{\theta^{t}},n}, a \thicksim \pi^{\theta^{t}}} \left[ \left( \tau_{t+1}^{-1} f_{\theta^{t+1}_{n}}(s,a) - \left( \beta_{t}^{-1} Q^{w}(s,a) + \tau_{t}^{-1} f_{\theta^{t}}(s,a) \right) \right)^{2} \right]^{1/2} \le \tau_{t+1}^{-1} \epsilon_{n}^{t+1}, \nonumber
\end{alignat}
and
\begin{alignat}{1}
& \mathbb{E}_{s \thicksim \rho_{\pi^{\theta^{t}},n}, a \thicksim \pi^{\theta^{t}}} \left[ \left( - \frac{(a - f_{\theta_{n}^{t+1}}(s,a))^{2}}{2 \sigma_{t+1}^{2}} - \beta_{t}^{-1} Q^{w}(s,a) + \frac{(a - f_{\theta^{t}}(s,a))^{2}}{2 \sigma_{t}^{2}} + \log \frac{\sigma_{t}}{\sigma_{t+1}} \right)^{2} \right] \nonumber \\
&\le \left( \sup \left\vert \upsilon_{t+1}^{-1}(s,a) \right\vert \epsilon_{n}^{t+1} \right)^{2}, \nonumber
\end{alignat}
for Softmax and Gaussian policies, respectively. We have
\begin{alignat}{1}
\mathbb{E}_{s \thicksim \rho_{\pi^{\theta^{t}},n}, a \thicksim \pi^{\theta^{t}}} \left[ \left( \pi^{\theta_{n}^{t+1}}(a \vert s) - \pi^{t+1}_{n} (a \vert s) \right)^{2} \right] \le 
\frac{\tau_{t+1}^{-2} \left( \epsilon_{n}^{t+1} \right)^{2}}{16} \nonumber
\end{alignat}
and
\begin{alignat}{1}
\mathbb{E}_{s \thicksim \rho_{\pi^{\theta^{t}},n}, a \thicksim \pi^{\theta^{t}}} \left[ \left( \pi^{\theta_{n}^{t+1}}(a \vert s) - \pi^{t+1}_{n}(a \vert s) \right)^{2} \right] < \left( \sup \left\vert \upsilon_{t+1}^{-1}(s,a) \right\vert \right)^{2} \left( \epsilon_{n}^{t+1} \right)^{2} \nonumber
\end{alignat}
for Softmax and Gaussian policies, respectively.
\end{lemma}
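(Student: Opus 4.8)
The plan is to reduce each claimed probability-space bound to the corresponding energy-space (log-probability) bound supplied in the hypotheses, transferring from log-space to probability-space via a mean-value argument for the exponential map. The policy-specific constants ($1/16$ for Softmax, $(\sup|\upsilon_{t+1}^{-1}|)^2$ for Gaussian) will emerge from the curvature of the respective normalize-and-exponentiate operation.

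First I would use Proposition \ref{proposition:policy_update_relation} to write the ideal policy $\pi^{t+1}_n$ explicitly as the normalized exponential of the target energy $\beta_t^{-1}Q^{w} + \log\pi^{\theta^{t}}$, and to write the learned policy $\pi^{\theta^{t+1}_n}$ as the normalized exponential of its own energy ($\tau_{t+1}^{-1}f_{\theta^{t+1}_n}$ for Softmax and $-(a-f_{\theta^{t+1}_n})^2/2\sigma_{t+1}^2$ for Gaussian, up to the $\log(\sqrt{2\pi}\sigma)$ constant). Subtracting, the log-probability difference $\log\pi^{\theta^{t+1}_n}(a\vert s) - \log\pi^{t+1}_n(a\vert s)$ equals exactly the energy difference that appears inside the two hypotheses, modulo a state-only constant $c(s)$ coming from the two partition functions. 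This identification aligns the quantity controlled by the hypotheses with the quantity to be bounded, and in both cases the sampling law $\rho_{\pi^{\theta^{t}},n}\otimes\pi^{\theta^{t}}$ is identical on the two sides, so no re-weighting of the measure is required.

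Second, I would carry out the transfer. The elementary tool is $|e^{x}-e^{y}| \le e^{\max(x,y)}|x-y|$, applied along the interpolation $w_\lambda = \log\pi^{t+1}_n + \lambda\bigl(\log\pi^{\theta^{t+1}_n} - \log\pi^{t+1}_n\bigr)$. For Softmax, differentiating the normalized exponential gives $\frac{\mathrm d}{\mathrm d\lambda}\mathrm{softmax}(w_\lambda)(a) = \pi_\lambda(a)\bigl(\Delta(a) - \mathbb{E}_{\pi_\lambda}\Delta\bigr)$, where $\Delta$ is the energy difference; here the state constant $c(s)$ is absorbed into the centering term $\mathbb{E}_{\pi_\lambda}\Delta$, and it is precisely this centering together with the Jacobian bound $\pi_a(1-\pi_a)\le 1/4$ that produces the factor $1/16$ after squaring. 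For Gaussian, since the target $\pi^{t+1}_n$ is a Gaussian modulated by $e^{\beta_t^{-1}Q^{w}}$ rather than a genuine Gaussian, I would instead bound the density difference directly by the maximal density times the exponent difference; the factor $\sup|\upsilon_{t+1}^{-1}|$ is already present in the Gaussian hypothesis (it converts the mean-function error $\epsilon_n^{t+1}$ into the log-density error, as $\partial_f[-(a-f)^2/2\sigma_{t+1}^2]=\upsilon_{t+1}^{-1}$), so the residual transfer constant need only be shown to be at most one, which reproduces the stated strict bound. I would conclude each case by squaring, taking $\mathbb{E}_{s\sim\rho_{\pi^{\theta^{t}},n},\,a\sim\pi^{\theta^{t}}}$, and invoking the relevant hypothesis.

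The main obstacle will be this second step: handling the state-dependent normalizer $c(s)$ while extracting the sharp curvature constant. A naive exponential-Lipschitz bound only delivers a factor of one (since $\pi\le 1$), so obtaining $1/16$ for Softmax genuinely requires the logistic curvature $p(1-p)\le 1/4$ through the centered term rather than the crude $\max(p,q)\le 1$; a simple example (e.g. two actions with logits $(M,-M)$ versus $(0,0)$) shows the uncentered pointwise estimate $\tfrac14|\Delta(a)|$ fails, so the delicate part is controlling $\mathbb{E}_{a\sim\pi^{\theta^{t}}}\bigl[(\Delta(a)-\mathbb{E}_{\pi_\lambda}\Delta)^2\bigr]$ by $\mathbb{E}_{a\sim\pi^{\theta^{t}}}[\Delta(a)^2]$ despite the mismatch between the interpolating law $\pi_\lambda$ and the sampling law $\pi^{\theta^{t}}$. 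For Gaussian, the analogous subtlety is justifying that the maximal-density factor stays below one so that the transfer constant does not exceed unity; this is where the boundedness of the densities, via the maintained range of $\sigma_t$ and the projection radius $R_\theta$, must be invoked.
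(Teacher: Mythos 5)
Your reduction (step one) is exactly the paper's: invoke Proposition \ref{proposition:policy_update_relation} to write the ideal policy $\pi^{t+1}_{n}$ as the normalized exponential of the target energy $\beta_{t}^{-1}Q^{w}+\log\pi^{\theta^{t}}$, so that the hypotheses control precisely the energy mismatch under the common sampling law $\rho_{\pi^{\theta^{t}},n}\otimes\pi^{\theta^{t}}$. The gap is in step two, and you have named it yourself without closing it: the entire content of the lemma is the transfer constant ($1/16$ for Softmax, $(\sup\vert\upsilon_{t+1}^{-1}\vert)^{2}$ for Gaussian), and your proposal never produces it. Along your interpolation the derivative is $\frac{\mathrm{d}}{\mathrm{d}\lambda}\pi_{\lambda}(a\vert s)=\pi_{\lambda}(a\vert s)\bigl(\Delta(s,a)-\mathbb{E}_{a'\sim\pi_{\lambda}}[\Delta(s,a')]\bigr)$, and the centering couples the error at $a$ to the error at every other action, weighted by $\pi_{\lambda}$ rather than by $\pi^{\theta^{t}}$. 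To dominate $\mathbb{E}_{a\sim\pi^{\theta^{t}}}[(\Delta-\mathbb{E}_{\pi_{\lambda}}\Delta)^{2}]$ by $\mathbb{E}_{a\sim\pi^{\theta^{t}}}[\Delta^{2}]$ you would need a change-of-measure factor of the form $\sup_{a}\pi_{\lambda}(a\vert s)/\pi^{\theta^{t}}(a\vert s)$, which is unavailable under the stated hypotheses and would in any case destroy the constant $1/16$: the logistic factor $\pi_{\lambda}(a)(1-\pi_{\lambda}(a))\le 1/4$ multiplies only the diagonal part $\Delta(s,a)$, while the off-diagonal part $\pi_{\lambda}(a\vert s)\sum_{a'\neq a}\pi_{\lambda}(a'\vert s)\Delta(s,a')$ is not controlled by $\vert\Delta(s,a)\vert$ at all (your own two-action example exhibits exactly this, with the probability gap behaving like $\Delta/2$ rather than $\Delta/4$). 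So the honest multivariate route you committed to cannot reach the stated bound without additional assumptions; the step you defer as ``the delicate part'' is not a technicality but the whole proof.

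For contrast, the paper does not solve this difficulty --- it sidesteps it. Its proof applies the scalar mean value theorem treating the softmax probability at $(s,a)$ as a univariate function of the single logit $f(s,a)$, holding all other logits fixed; the displayed derivative is $\tau_{t+1}^{-1}\pi(a\vert s)(1-\pi(a\vert s))\le\tau_{t+1}^{-1}/4$, squaring gives $1/16$, and the hypothesis then applies verbatim. The Gaussian case is handled identically, differentiating the density with respect to its own exponent and asserting that this derivative is strictly less than one, which combined with the hypothesis (where $\sup\vert\upsilon_{t+1}^{-1}\vert$ already sits) yields the second claim --- this is essentially your Gaussian plan, including the unverified ``maximal density at most one'' step. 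Your observation that the coordinate-wise estimate ignores cross-action effects is therefore a legitimate criticism of the paper's own argument, but as a proof attempt your proposal is incomplete precisely at the point where the constants must be extracted.
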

\begin{proof}
For Softmax policy, let $\tau^{-1}_{t+1} f_{n}^{t+1} = \beta_{t}^{-1} Q^{w^{t+1}_{n}} + \tau^{-1}_{t} f_{\theta^{t}}$. Since it is continuous w.r.t. $f$, by the mean value theorem, we have
\begin{alignat}{1}
\left\vert \pi^{\theta_{n}^{t+1}}(a \vert s) - \pi^{t+1}_{n}(a \vert s) \right\vert &= \left\vert \frac{\exp\left( \tau^{-1}_{n} f_{\theta_{n}^{t+1}}(s,a) \right)}{\sum_{a^{\prime} \in \mathcal{A}} \exp \left( \tau^{-1}_{n} f_{\theta_{n}^{t+1}}(s,a) \right)} - \frac{\exp\left( \tau^{-1}_{n} f_{n}^{t+1}(s,a) \right)}{\sum_{a^{\prime} \in \mathcal{A}} \exp \left( \tau^{-1}_{n} f_{n}^{t+1}(s,a) \right)} \right\vert \nonumber \\
&= \left\vert \frac{\partial}{\partial f(s,a)} \left( \frac{\exp\left( \tau^{-1}_{n} \tilde{f}(s,a) \right)}{\sum_{a^{\prime} \in \mathcal{A}} \exp \left( \tau^{-1}_{n} \tilde{f}(s,a) \right)} \right) \right\vert \cdot \left\vert f_{\theta_{n}^{t+1}}(s,a) - f_{n}^{t+1}(s,a) \right\vert, \nonumber
\end{alignat}
where $\tilde{f}$ is a function determined  by $f_{\theta_{n}^{t+1}}$ and $f_{n}^{t+1}$. Furthermore, we have
\begin{alignat}{1}
\left\vert \frac{\partial}{\partial f(s,a)} \left( \frac{\exp\left( \tau^{-1}_{n} \tilde{f}(s,a) \right)}{\sum_{a^{\prime} \in \mathcal{A}} \exp \left( \tau^{-1}_{n} \tilde{f}(s,a) \right)} \right) \right\vert = \tau_{t+1}^{-1} \cdot \pi(a \vert s) ( 1 - \pi(a \vert s)) \le \tau_{t+1}^{-1} / 4. \nonumber
\end{alignat}
Therefore, we obtain
\begin{alignat}{1}
& \mathbb{E}_{s \thicksim \rho_{\pi^{\theta^{t}},n}, a \thicksim \pi^{\theta^{t}}} \left[ \left( \pi^{\theta_{n}^{t+1}}(a \vert s) - \pi^{t+1}_{n}(a \vert s) \right)^{2} \right] \nonumber \\
&\le \mathbb{E}_{s \thicksim \rho_{\pi^{\theta^{t}},n}, a \thicksim \pi^{\theta^{t}}} \left[ \left( \tau_{t+1}^{-1} f_{\theta^{t+1}_{n}}(s,a) - \left( \beta_{t}^{-1} Q^{w}(s,a) + \tau_{t}^{-1} f_{\theta^{t}}(s,a) \right) \right)^{2} \right] / 16 \nonumber \\
&\le \frac{\tau_{t+1}^{-2} \left( \epsilon_{n}^{t+1} \right)^{2}}{16}. \nonumber
\end{alignat}

For Gaussian policy, let $-\frac{\left( a - f_{n}^{t+1}(s,a) \right)^{2}}{2\sigma
^{2}_{t+1}} - \log \sqrt{2\pi}\sigma_{t+1} = \beta_{t}^{-1} Q^{w^{t+1}_{n}} -\frac{\left( a - f_{\theta_{t}}(s,a) \right)^{2}}{2\sigma
^{2}_{t}} - \log \sqrt{2\pi}\sigma_{t}$. Since $e^{x}$ is continuous w.r.t. $x$, by the mean value theorem, we have
\begin{alignat}{1}
& \left\vert \pi^{\theta_{n}^{t+1}}(a \vert s) - \pi^{t+1}_{n}(a \vert s) \right\vert \nonumber \\
&= \left\vert \frac{1}{\sqrt{2\pi}\sigma_{t+1}}\exp\left( -\frac{\left( a - f_{\theta_{n}^{t+1}}(s,a) \right)^{2}}{2\sigma
^{2}_{t+1}} \right) - \frac{1}{\sqrt{2\pi}\sigma_{t+1}}\exp\left( -\frac{\left( a - f_{n}^{t+1}(s,a) \right)^{2}}{2\sigma
^{2}_{t+1}} \right) \right\vert \nonumber \\
&= \left\vert \frac{\partial \left( \frac{1}{\sqrt{2\pi}\sigma_{t+1}}\exp\left( -\frac{\left( a - \tilde{f}(s,a) \right)^{2}}{2\sigma^{2}_{t+1}} \right) \right)}{\partial -\frac{\left( a - f(s,a) \right)^{2}}{2\sigma^{2}_{t+1}}} \right\vert \cdot \left\vert \frac{\left( a - f_{n}^{t+1}(s,a) \right)^{2}}{2\sigma
^{2}_{t+1}} -\frac{\left( a - f_{\theta_{n}^{t+1}}(s,a) \right)^{2}}{2\sigma^{2}_{t+1}} \right\vert, \nonumber
\end{alignat}
where $\tilde{f}$ is a function determined  by $f_{\theta_{n}^{t+1}}$ and $f_{n}^{t+1}$. Furthermore, we have
\begin{alignat}{1}
\left\vert \frac{\partial}{\partial -\frac{\left( a - f(s,a) \right)^{2}}{2\sigma^{2}_{t+1}}} \left( \frac{1}{\sqrt{2\pi}\sigma_{t+1}}\exp\left(\frac{\left( a - f_{n}^{t+1}(s,a) \right)^{2}}{- 2\sigma
^{2}_{t+1}} \right) \right) \right\vert = \pi(a \vert s) - \frac{\sigma_{t+1}}{\sqrt{2 \pi} \left( a - f(s,a) \right)^{2}} < 1. \nonumber
\end{alignat}
Therefore, we obtain
\begin{alignat}{1}
& \mathbb{E}_{s \thicksim \rho_{\pi^{\theta^{t}},n}, a \thicksim \pi^{\theta^{t}}} \left[ \left( \pi^{\theta_{n}^{t+1}}(a \vert s) - \pi^{t+1}_{n}(a \vert s) \right)^{2} \right] \nonumber \\
&< \mathbb{E}_{s \thicksim \rho_{\pi^{\theta^{t}},n}, a \thicksim \pi^{\theta^{t}}} \left[ \left( - \frac{(a - f_{\theta_{n}^{t+1}}(s,a))^{2}}{2 \sigma_{t+1}^{2}} - \beta_{t}^{-1} Q^{w}(s,a) + \frac{(a - f_{\theta^{t}}(s,a))^{2}}{2 \sigma_{t}^{2}} + \log \frac{\sigma_{t}}{\sigma_{t+1}} \right)^{2} \right] \nonumber \\
&< \left( \sup \left\vert \upsilon_{t+1}^{-1}(s,a) \right\vert \right)^{2} \left( \epsilon_{n}^{t+1} \right)^{2}. \nonumber
\end{alignat}
\end{proof}

In the following, we present a few useful lemmas that the remaining proofs will frequently use. In particular, Lemma \ref{lemma:linearization_error} was proven by FedPOHCS \cite{xie2023client}, which measures the difference in output due to linearizing the two-layer neural network. With this error measurement, we can simplify the analysis of parameter aggregation by transforming two-layer neural networks to linear function approximators.
\begin{lemma}
\label{lemma:linearization_error}
Let $h_{\vartheta}$ be the placeholder for $f_{\theta}$ and $u_{w}$. Define the linearization of the two-layer neural network at its initialization point as:
\begin{alignat}{1}
h_{\vartheta^{t}}^{0}(s,a) = \frac{1}{\sqrt{m}} \sum_{i}^{m} b_{i} \cdot \mathbf{1} \left\{ \left(\vartheta^{0}_{i} \right)^{T}(s,a) > 0 \right\} \left( \vartheta^{t}_{i} \right)^{T}(s,a). \nonumber
\end{alignat}
Then, for all $\vartheta \in \mathcal{B}_{R_{\vartheta}}$, policy $\pi$, and client $n$, we have
\begin{alignat}{1}
\mathbb{E}_{\text{init},s \thicksim \rho_{\pi,n}, a \thicksim \pi} \left[ \left\vert h_{\vartheta}(s,a) - h_{\vartheta}^{0}(s,a) \right\vert \right] &= \mathcal{O} \left( R_{\vartheta}^{6/5} m^{-1/10} \hat{R}_{\vartheta}^{2/5} \right). \nonumber
\end{alignat}
\end{lemma}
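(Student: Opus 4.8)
The plan is to reduce the gap between the ReLU network and its linearization to the event that a neuron's pre-activation changes sign between the initialization $\vartheta^{0}$ and the current iterate $\vartheta$, and then to argue that such sign flips are rare. Writing $x=(s,a)$ and using $\sigma(z)=\mathbf 1\{z>0\}\,z$, the two expressions carry the \emph{same} weights $\vartheta_{i}$ and differ only in which activation pattern is used, so
\begin{alignat}{1}
h_{\vartheta}(x)-h_{\vartheta}^{0}(x)=\frac{1}{\sqrt{m}}\sum_{i=1}^{m}b_{i}\left(\mathbf 1\{\vartheta_{i}^{T}x>0\}-\mathbf 1\{(\vartheta_{i}^{0})^{T}x>0\}\right)\vartheta_{i}^{T}x. \nonumber
\end{alignat}
Since $|b_{i}|=1$ and a sign flip of neuron $i$ forces $0$ to lie between $(\vartheta_{i}^{0})^{T}x$ and $\vartheta_{i}^{T}x$, on the flip event we simultaneously have $|\vartheta_{i}^{T}x|\le|(\vartheta_{i}-\vartheta_{i}^{0})^{T}x|\le\|\vartheta_{i}-\vartheta_{i}^{0}\|_{2}$ (using $\|x\|_{2}\le 1$) and $|(\vartheta_{i}^{0})^{T}x|\le\|\vartheta_{i}-\vartheta_{i}^{0}\|_{2}$. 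This caps each flipped term by the local displacement, giving $|h_{\vartheta}(x)-h_{\vartheta}^{0}(x)|\le m^{-1/2}\sum_{i}\mathbf 1\{\mathrm{flip}_{i}\}\,\|\vartheta_{i}-\vartheta_{i}^{0}\|_{2}$.

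Next I would introduce a free threshold $\delta>0$ and split the neurons by their displacement. For the large-displacement neurons, $\|\vartheta_{i}-\vartheta_{i}^{0}\|_{2}>\delta$, the projection constraint $\|\vartheta-\vartheta^{0}\|_{2}\le R_{\vartheta}$, i.e. $\sum_{i}\|\vartheta_{i}-\vartheta_{i}^{0}\|_{2}^{2}\le R_{\vartheta}^{2}$, caps their number by $R_{\vartheta}^{2}/\delta^{2}$, and a Cauchy--Schwarz step bounds their total contribution by $R_{\vartheta}^{2}/(\delta\sqrt{m})$. For the small-displacement neurons, $\|\vartheta_{i}-\vartheta_{i}^{0}\|_{2}\le\delta$, a flip can occur only if the \emph{initial} pre-activation falls in a thin band, $|(\vartheta_{i}^{0})^{T}x|\le\delta$; applying Cauchy--Schwarz across these neurons together with the same $\ell_{2}$ budget bounds their contribution by $R_{\vartheta}\big(m^{-1}\sum_{i}\mathbf 1\{|(\vartheta_{i}^{0})^{T}x|\le\delta\}\big)^{1/2}$. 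The point of the threshold is to decouple the band width, which depends on the trained iterate through $\|\vartheta_{i}-\vartheta_{i}^{0}\|_{2}$, from a purely random small-ball estimate on the initialization.

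The crux is then an anti-concentration (small-ball) estimate for the random pre-activations: taking the expectation over the initialization in (\ref{eq:initialization}) and the data, and using Jensen's inequality, I would bound $\mathbb{E}\big[\mathbf 1\{|(\vartheta_{i}^{0})^{T}x|\le\delta\}\big]=\Pr\big[|(\vartheta_{i}^{0})^{T}x|\le\delta\big]$ by a power of $\delta$ times a constant controlled by the initialization scale (which enters through $\hat{R}_{\vartheta}$ and the width $m$), uniformly over the direction $x$. Substituting this into the two-term bound $\mathbb{E}[|h_{\vartheta}(x)-h_{\vartheta}^{0}(x)|]\le R_{\vartheta}\,\Pr[\cdot]^{1/2}+R_{\vartheta}^{2}/(\delta\sqrt{m})$ and optimizing the single variable $\delta$ to balance the thin-band term against the large-displacement term produces the stated polynomial rate $\mathcal{O}(R_{\vartheta}^{6/5}m^{-1/10}\hat{R}_{\vartheta}^{2/5})$, with the exponents $6/5,\,1/10,\,2/5$ emerging from that one-dimensional minimization. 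The hard part is precisely the anti-concentration step: one must pin down the density/small-ball behavior of the randomly initialized pre-activation near the origin with the correct quantitative dependence on $m$ and $\hat{R}_{\vartheta}$, since everything downstream is a mechanical Cauchy--Schwarz and threshold optimization. This linearization analysis follows FedPOHCS \cite{xie2023client}.
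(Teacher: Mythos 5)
Your deterministic skeleton — the sign-flip decomposition, capping each flipped term by $\|\vartheta_{i}-\vartheta_{i}^{0}\|_{2}$, the count bound $R_{\vartheta}^{2}/\delta^{2}$ and the resulting $R_{\vartheta}^{2}/(\delta\sqrt{m})$ term for large-displacement neurons, and the Cauchy--Schwarz bound $R_{\vartheta}\bigl(m^{-1}\sum_{i}\mathbf{1}\{|(\vartheta_{i}^{0})^{T}x|\le\delta\}\bigr)^{1/2}$ for the thin band — is correct and is the standard machinery from the neural TD/PPO literature. But note that the paper offers no proof to compare against: its entire proof of Lemma \ref{lemma:linearization_error} is a citation to \cite{xie2023client}. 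So the whole burden falls on the step you yourself call the crux, and that step, as you leave it, cannot be closed from what this paper assumes. The initialization (\ref{eq:initialization}) fixes only the mean, the per-coordinate variance $1/(d\,m)$, and the support bound $0<\|\vartheta_{i}^{0}\|_{2}\le\hat{R}_{\vartheta}$; moment and support conditions alone admit discrete initialization laws and arbitrary state-action distributions for which $\Pr\bigl[|(\vartheta_{i}^{0})^{T}x|\le\delta\bigr]$ has no nontrivial bound. Worse, at the stated scale the pre-activations satisfy $\mathbb{E}\bigl[((\vartheta_{i}^{0})^{T}x)^{2}\bigr]\le 1/(d\,m)$, i.e.\ they are of order $m^{-1/2}$ — the same order as the per-neuron displacement budget $R_{\vartheta}/\sqrt{m}$ — so a constant fraction of neurons sits inside the flippable band: perturbing exactly the neurons with $b_{i}=+1$ and $(\vartheta_{i}^{0})^{T}x_{0}\in(0,R_{\vartheta}/\sqrt{m}]$ by $-\tfrac{R_{\vartheta}}{\sqrt{m}}x_{0}$ stays in $\mathcal{B}_{R_{\vartheta}}$, makes every flipped contribution positive (no cancellation), and produces an error of order $R_{\vartheta}$ independent of $m$. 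Hence the missing ingredient is not a deferred computation but a substantive extra hypothesis (a regularity/anti-concentration condition on the state-action distribution and an initialization law at the right scale) that lives inside \cite{xie2023client}; you never identify it.

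Second, even granting some small-ball estimate, your claim that the exponents $6/5,\,1/10,\,2/5$ ``emerge from the one-dimensional minimization'' is not correct for any standard, linear-in-$\delta$ bound. Balancing your own two terms with $\Pr[\cdot]\le C\delta$ gives
\begin{alignat}{1}
\min_{\delta>0}\left( R_{\vartheta}\sqrt{C\delta}+\frac{R_{\vartheta}^{2}}{\delta\sqrt{m}}\right) &= \Theta\left( R_{\vartheta}^{4/3}\,C^{1/3}\,m^{-1/6}\right), \nonumber
\end{alignat}
a rate of a different shape. Reverse-engineering the lemma, the stated exponents force precisely $\Pr\bigl[|(\vartheta_{i}^{0})^{T}x|\le\delta\bigr]\lesssim\hat{R}_{\vartheta}\,\delta^{1/2}$ — a square-root small-ball bound whose constant is the initialization radius $\hat{R}_{\vartheta}$ and carries no $m$-dependence (despite the $m$-dependent initialization scale). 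Nothing in your sketch derives, or even hints at, an estimate of that form; with it the lemma follows from your skeleton, and without it your argument proves a different bound, not Lemma \ref{lemma:linearization_error}. That specific estimate is the genuine content here, and it is exactly the piece the paper outsources.
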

\begin{proof}
Please refer to \cite{xie2023client} for the proof.
\end{proof}

\begin{lemma}
\label{lemma:bounded_by_r}
Let $h_{\vartheta}$ be the placeholder for $f_{\theta}$ and $u_{w}$. For all $\vartheta \in \mathcal{B}_{R_{\vartheta}}$, $t > 0$, $s \in \mathcal{S}$, and $a \in \mathcal{A}$, we have
\begin{alignat}{1}
\left( h_{\vartheta^{t}}(s,a) - h_{\vartheta^{0}}(s,a) \right)^{2} \le R_{\vartheta}^{2}, \label{eq:u_bounded_by_r} \\
\left( h_{\vartheta^{t}}^{0}(s,a) - h_{\vartheta^{0}}(s,a) \right)^{2} \le R_{\vartheta}^{2}. \label{eq:u0_bounded_by_r}
\end{alignat}
\end{lemma}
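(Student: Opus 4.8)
The plan is to reduce both inequalities to a single Cauchy–Schwarz estimate, leaning on the two structural facts available from the setup: the input normalization $\| (s,a) \|_2 \le 1$ and the projection constraint $\vartheta^t \in \mathcal{B}_{R_\vartheta}$, i.e. $\| \vartheta^t - \vartheta^0 \|_2 \le R_\vartheta$. Throughout I abbreviate $x = (s,a)$ and $\Delta_i = \vartheta_i^t - \vartheta_i^0$, so that stacking the $m$ blocks gives $\sum_{i=1}^m \| \Delta_i \|_2^2 = \| \vartheta^t - \vartheta^0 \|_2^2 \le R_\vartheta^2$.

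For (\ref{eq:u0_bounded_by_r}) I would first note that the linearization $h^0_{\vartheta^t}$ reproduces the network at the initialization point, i.e. $h^0_{\vartheta^0} = h_{\vartheta^0}$, because the ReLU obeys $\sigma(z) = z \cdot \mathbf{1}\{z > 0\}$. Subtracting, the difference collapses to the purely linear expression $h^0_{\vartheta^t}(x) - h_{\vartheta^0}(x) = \frac{1}{\sqrt m} \sum_{i=1}^m b_i \mathbf{1}\{(\vartheta_i^0)^T x > 0\} \, \Delta_i^T x$. Setting $c_i = b_i \mathbf{1}\{(\vartheta_i^0)^T x > 0\}$ with $c_i^2 \le 1$, Cauchy–Schwarz yields $|h^0_{\vartheta^t}(x) - h_{\vartheta^0}(x)| \le \frac{1}{\sqrt m} \left( \sum_i c_i^2 \right)^{1/2} \left( \sum_i (\Delta_i^T x)^2 \right)^{1/2}$; the first factor is at most $\sqrt m$, while $(\Delta_i^T x)^2 \le \| \Delta_i \|_2^2 \| x \|_2^2 \le \| \Delta_i \|_2^2$, so the product is bounded by $R_\vartheta$ and squaring gives the claim.

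For (\ref{eq:u_bounded_by_r}) the argument is the same once the nonlinearity is absorbed: writing $d_i = \sigma((\vartheta_i^t)^T x) - \sigma((\vartheta_i^0)^T x)$, the $1$-Lipschitzness of the ReLU gives $|d_i| \le |\Delta_i^T x|$. Applying Cauchy–Schwarz to $h_{\vartheta^t}(x) - h_{\vartheta^0}(x) = \frac{1}{\sqrt m} \sum_i b_i d_i$ with $b_i^2 = 1$ bounds the left side by $\left( \sum_i d_i^2 \right)^{1/2} \le \left( \sum_i (\Delta_i^T x)^2 \right)^{1/2} \le R_\vartheta$, so again squaring closes the estimate. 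I do not expect any genuine obstacle here, as the lemma is a routine norm bound; the only points needing care are the identity $h^0_{\vartheta^0} = h_{\vartheta^0}$ that makes the linearized difference exactly linear in $\Delta$, and the bookkeeping by which the $1/\sqrt m$ prefactor cancels the $\sqrt m$ coming from $\sum_i c_i^2 \le m$ (resp. $\sum_i b_i^2 = m$), so that the width $m$ disappears from the final bound.
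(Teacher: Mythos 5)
Your proof is correct and follows essentially the same route as the paper's: both arguments reduce the claim to $1$-Lipschitz continuity of $h_{\vartheta}$ and $h^{0}_{\vartheta}$ with respect to $\vartheta$ (a consequence of the input normalization $\left\Vert (s,a) \right\Vert_{2} \le 1$) and then invoke the projection constraint $\left\Vert \vartheta^{t} - \vartheta^{0} \right\Vert_{2} \le R_{\vartheta}$, with your per-neuron Cauchy--Schwarz computation being the paper's gradient-norm bound unrolled by hand. If anything, you are slightly more careful on (\ref{eq:u0_bounded_by_r}), where you make explicit the identity $h^{0}_{\vartheta^{0}} = h_{\vartheta^{0}}$ that is needed to compare the linearization at $\vartheta^{t}$ against the true network at $\vartheta^{0}$; the paper leaves this step implicit behind ``the proof of (\ref{eq:u0_bounded_by_r}) is similar.''
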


\begin{proof}
We first prove the 1-Lipschitz continuity of $h_{\vartheta^{t}}(s,a)$ and $h_{\vartheta^{t}}^{0}(s,a)$ and the results follows.

Note that the gradient of $h_{\vartheta^{t}}(s,a)$ w.r.t. to $\vartheta$ takes the form
\begin{alignat}{1}
\nabla_{\vartheta} h_{\vartheta^{t}}(s,a) & = \frac{1}{\sqrt{m}} \left( b_{1} \cdot \mathbf{1} \left\{ \vartheta_{1}^{T}(s,a) > 0 \right\} \cdot (s,a)^{T}, \dots, b_{m} \cdot \mathbf{1} \left\{ \vartheta_{m}^{T}(s,a) > 0 \right\} \cdot (s,a)^{T} \right)^{T}, \nonumber  
\end{alignat}
which yields
$\left\Vert \nabla_{\vartheta} h_{\vartheta}(s,a) \right\Vert^{2}_{2} = \frac{1}{m} \sum_{i=1}^{m} \mathbf{1} \left\{ \vartheta_{i}^{T}(s,a) > 0 \right\} \left\Vert (s,a) \right\Vert^{2}_{2} \le 1.$
Therefore, $h_{\vartheta}(s,a)$ is 1-Lipschitz continuous w.r.t. $\vartheta$. Similarly, it is easy to verify that $h_{\vartheta}^{0}(s,a)$ is 1-Lipschitz continuous w.r.t. $\vartheta$. By definition, we have
\begin{alignat}{1}
\left( h_{\vartheta^{t}}(s,a) - h_{\vartheta^{0}}(s,a) \right)^{2} \le \left\Vert \vartheta^{t} - \vartheta^{0} \right\Vert_{2}^{2} \le R_{\vartheta}^{2}. \nonumber
\end{alignat}
This completes the proof of (\ref{eq:u_bounded_by_r}), and the proof of (\ref{eq:u0_bounded_by_r}) is similar.
\end{proof}

\section{Proof of Lemma \ref{lemma:error_propagation}\label{proof:lemma:error_propagation}}

We start with a useful lemma.
\begin{lemma}
\label{lemma:taylor_expansion}
For all $t > 0, s \in \mathcal{S}, a \in \mathcal{A}$ and an arbitrary $X$, e.g., $X = \beta_{t}^{-1} Q^{w}(s,a)$, we have
\begin{alignat}{1}
& \left( - \frac{(a - f_{\theta_{n}^{t+1}}(s,a) )^{2}}{2 \sigma_{t+1}^{2}} - \log \sigma_{t+1} - X + \frac{(a - f_{\theta^{t}}(s,a) )^{2}}{2 \sigma_{t}^{2}} + \log \sigma_{t} \right)^{2} \nonumber \\
& \le 3 \left( \frac{f_{\theta^{t+1}_{n}}(s,a)}{\upsilon_{t+1}(s,a) } - \frac{f_{\theta^{t}}(s,a)}{\upsilon_{t}(s,a)} - X \right)^{2} + 3 \left( \log \frac{\sigma_{t}}{\sigma_{t+1}} + \frac{1}{2} \left( \frac{1}{\sigma_{t+1}^{2}} - \frac{1}{\sigma_{t}^{2}} \right) ( f_{\theta^{0}}^{2}(s,a) - a^{2} ) \right)^{2} \nonumber \\
& \quad + 3 \left( \frac{\left( f_{\theta^{t}}(s,a) - f_{\theta^{0}}(s,a) \right)^{2}}{2 \sigma_{t}^{2}} - \frac{\left( f_{\theta^{t+1}_{n}}(s,a) - f_{\theta^{0}}(s,a) \right)^{2}}{2 \sigma_{t+1}^{2}} \right)^{2}. \nonumber
\end{alignat}
\end{lemma}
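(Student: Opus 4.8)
The plan is to prove the bound by establishing an exact algebraic identity and then applying the elementary inequality $(x+y+z)^2 \le 3(x^2+y^2+z^2)$. Concretely, I claim that the quantity inside the square on the left-hand side decomposes as a sum of exactly the three quantities that are squared on the right-hand side. Writing $f^{t+1} = f_{\theta^{t+1}_n}(s,a)$, $f^{t} = f_{\theta^t}(s,a)$, and $f^0 = f_{\theta^0}(s,a)$, and recalling from $\upsilon_t(s,a) = \sigma_t^2/(a - f_{\theta^0}(s,a))$ that $\upsilon_t^{-1}(s,a) = (a - f^0)/\sigma_t^2$, I set
\[
A_1 = \frac{f^{t+1}}{\upsilon_{t+1}(s,a)} - \frac{f^t}{\upsilon_t(s,a)} - X, \qquad A_2 = \log\frac{\sigma_t}{\sigma_{t+1}} + \frac{1}{2}\Bigl(\frac{1}{\sigma_{t+1}^2} - \frac{1}{\sigma_t^2}\Bigr)\bigl((f^0)^2 - a^2\bigr),
\]
\[
A_3 = \frac{(f^t - f^0)^2}{2\sigma_t^2} - \frac{(f^{t+1} - f^0)^2}{2\sigma_{t+1}^2},
\]
and I aim to show that the bracketed left-hand side equals $A_1 + A_2 + A_3$.

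The first step is to match the trivial pieces: the $-X$ term sits inside $A_1$, and the difference $-\log\sigma_{t+1} + \log\sigma_t = \log(\sigma_t/\sigma_{t+1})$ sits inside $A_2$. What then remains to verify is the quadratic identity
\[
-\frac{(a - f^{t+1})^2}{2\sigma_{t+1}^2} + \frac{(a - f^t)^2}{2\sigma_t^2} = \frac{f^{t+1}(a - f^0)}{\sigma_{t+1}^2} - \frac{f^t(a - f^0)}{\sigma_t^2} + \frac{(f^0)^2 - a^2}{2}\Bigl(\frac{1}{\sigma_{t+1}^2} - \frac{1}{\sigma_t^2}\Bigr) + \frac{(f^t - f^0)^2}{2\sigma_t^2} - \frac{(f^{t+1} - f^0)^2}{2\sigma_{t+1}^2}.
\]
I would verify this by collecting coefficients of $1/\sigma_{t+1}^2$ and $1/\sigma_t^2$ separately. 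For the $1/\sigma_{t+1}^2$ coefficient, expanding $(f^{t+1} - f^0)^2$ shows that the cross terms $f^{t+1} f^0$ cancel and the grouping $f^{t+1}(a - f^0) + \tfrac12((f^0)^2 - a^2) - \tfrac12(f^{t+1} - f^0)^2$ collapses exactly to $-\tfrac12(a - f^{t+1})^2$; the $1/\sigma_t^2$ coefficient works out identically with the opposite sign. This confirms that the left-hand side equals $A_1 + A_2 + A_3$.

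Finally, applying the power-mean (Cauchy--Schwarz) inequality $(A_1 + A_2 + A_3)^2 \le 3(A_1^2 + A_2^2 + A_3^2)$ yields precisely the stated bound, since $A_1^2$, $A_2^2$, and $A_3^2$ are exactly the three squared terms on the right-hand side. I do not expect a genuine obstacle here: the only real content is spotting the decomposition, which is dictated by the Gaussian policy-improvement error in Lemma \ref{lemma:minimizer_pre} (the term $A_1$ is constructed to match $f^{t+1}/\upsilon_{t+1} - f^t/\upsilon_t - X$), while the rest is routine bookkeeping. The one place to be careful is the sign and grouping in the completion of the square, i.e., ensuring the pure $a^2$ and $(f^0)^2$ terms are absorbed into $A_2$ rather than $A_1$ or $A_3$ and that the $f^{t+1} f^0$ cross terms cancel cleanly.
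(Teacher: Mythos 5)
Your proposal is correct and follows essentially the same route as the paper: the paper also establishes the exact algebraic decomposition of the bracketed quantity into the three terms (by expanding $(a-f_{\theta^{t}})^{2}$ and $(a-f_{\theta^{t+1}_{n}})^{2}$ around $f_{\theta^{0}}$, which is exact for quadratics) and then applies $(x+y+z)^{2}\le 3(x^{2}+y^{2}+z^{2})$. Your coefficient-matching verification of the identity is just a different bookkeeping of the same expansion, so there is no substantive difference.
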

\begin{proof}
By Taylor expansion, we have
\begin{alignat}{1}
& \left( - \frac{(a - f_{\theta_{n}^{t+1}}(s,a) )^{2}}{2 \sigma_{t+1}^{2}} - \log \sigma_{t+1} - X + \frac{(a - f_{\theta^{t}}(s,a) )^{2}}{2 \sigma_{t}^{2}} + \log \sigma_{t} \right)^{2} \nonumber \\
&= \left( \frac{(a - f_{\theta^{0}}(s,a) )^{2}}{2 \sigma_{t}^{2}} + \frac{( f_{\theta^{0}}(s,a) - a ) ( f_{\theta^{t}}(s,a) - f_{\theta^{0}}(s,a) )}{\sigma_{t}^{2}} + \frac{( f_{\theta^{t}}(s,a) - f_{\theta^{0}}(s,a) )^{2}}{2 \sigma_{t}^{2}} + \log \frac{\sigma_{t}}{\sigma_{t+1}} \right. \nonumber \\
& \quad \left. - \frac{(a - f_{\theta^{0}}(s,a) )^{2}}{2 \sigma_{t+1}^{2}} - \frac{( f_{\theta^{0}}(s,a) - a ) ( f_{\theta_{n}^{t+1}}(s,a) - f_{\theta^{0}}(s,a) )}{\sigma_{t+1}^{2}} - \frac{( f_{\theta^{t+1}_{n}}(s,a) - f_{\theta^{0}}(s,a) )^{2}}{2 \sigma_{t+1}^{2}} - X \right)^{2}. \nonumber
\end{alignat}
Rearranging these terms, we have
\begin{alignat}{1}
& \left( - \frac{(a - f_{\theta_{n}^{t+1}}(s,a) )^{2}}{2 \sigma_{t+1}^{2}} - \log \sigma_{t+1} - X + \frac{(a - f_{\theta^{t}}(s,a) )^{2}}{2 \sigma_{t}^{2}} + \log \sigma_{t} \right)^{2} \nonumber \\
&= \left( \frac{f_{\theta^{t+1}_{n}}(s,a)}{\upsilon_{t+1}(s,a)} - \frac{f_{\theta^{t}}(s,a)}{\upsilon_{t}(s,a)} + \frac{( f_{\theta^{t}}(s,a) - f_{\theta^{0}}(s,a) )^{2}}{2 \sigma_{t}^{2}} - \frac{( f_{\theta^{t+1}_{n}}(s,a) - f_{\theta^{0}}(s,a) )^{2}}{2 \sigma_{t+1}^{2}} + \log \frac{\sigma_{t}}{\sigma_{t+1}} \right. \nonumber \\
& \quad \left. + \left( \frac{f_{\theta^{0}}(s,a)}{\sigma_{t+1}^{2}} - \frac{f_{\theta^{0}}(s,a)}{\sigma_{t}^{2}} - \frac{f_{\theta^{0}}(s,a) - a}{2 \sigma
_{t+1}^{2}} + \frac{f_{\theta^{0}}(s,a) - a}{2 \sigma_{t}^{2}} \right) ( f_{\theta^{0}}(s,a) - a ) - X \right)^{2} \nonumber \\
&\le 3 \left( \frac{f_{\theta^{t+1}_{n}}(s,a)}{\upsilon_{t+1}(s,a) } - \frac{f_{\theta^{t}}(s,a)}{\upsilon_{t}(s,a)} - X \right)^{2} + 3 \left( \log \frac{\sigma_{t}}{\sigma_{t+1}} + \frac{1}{2} \left( \frac{1}{\sigma_{t+1}^{2}} - \frac{1}{\sigma_{t}^{2}} \right) ( f_{\theta^{0}}^{2}(s,a) - a^{2} ) \right)^{2} \nonumber \\
& \quad + 3 \left( \frac{\left( f_{\theta^{t}}(s,a) - f_{\theta^{0}}(s,a) \right)^{2}}{2 \sigma_{t}^{2}} - \frac{\left( f_{\theta^{t+1}_{n}}(s,a) - f_{\theta^{0}}(s,a) \right)^{2}}{2 \sigma_{t+1}^{2}} \right)^{2}, \nonumber
\end{alignat}
where the last inequality follows from $(a + b + c)^{2} \le 3a^{2} + 3b^{2} + 3c^{2}$.
\end{proof}

Next, we prove Lemma \ref{lemma:error_propagation}.
\begin{proof}
First, we focus on the policy improvement error of FedRAC. By Lemma \ref{lemma:taylor_expansion}, we have
\begin{alignat}{1}
& \mathbb{E}_{s \thicksim \rho_{\pi^{\theta^{t}},n}, a \thicksim \pi^{\theta^{t}}} \left[ \left( - \frac{ (a - f_{\theta_{n}^{t+1}}(s,a) )^{2}}{2 \sigma_{t+1}^{2}} - \beta_{t}^{-1} Q^{w}(s,a) + \frac{ (a - f_{\theta^{t}}(s,a) )^{2}}{2 \sigma_{t}^{2}} + \log \frac{\sigma_{t}}{\sigma_{t+1}} \right)^{2} \right] \nonumber \\
&\le \mathbb{E}_{s \thicksim \rho_{\pi^{\theta^{t}},n}, a \thicksim \pi^{\theta^{t}}} \left[ 3 \left( \upsilon_{t+1}^{-1}(s,a) f_{\theta^{t+1}_{n}}(s,a) - \upsilon_{t}^{-1}(s,a) f_{\theta^{t}} - \beta_{t}^{-1} Q^{w}(s,a) \right)^{2} \right. \nonumber \\
& \quad \left. + 3 \left( \frac{1}{2 \sigma_{t}^{2}} ( f_{\theta^{t}}(s,a) - f_{\theta^{0}}(s,a) )^{2} - \frac{1}{2 \sigma_{t+1}^{2}} ( f_{\theta^{t+1}_{n}}(s,a) - f_{\theta^{0}}(s,a) )^{2} \right)^{2} \right. \nonumber \\
& \quad \left. + 3 \left( \log \frac{\sigma_{t}}{\sigma_{t+1}} + \frac{1}{2} \left( \frac{1}{\sigma_{t+1}^{2}} - \frac{1}{\sigma_{t}^{2}} \right) ( f_{\theta^{0}}^{2}(s,a) - a^{2} ) \right)^{2} \right]. \label{eq:expected_taylor_expansion}
\end{alignat}
By Lemma \ref{lemma:bounded_by_r}, since the standard deviation $\sigma_{t}$ is monotonically decreasing, we have
\begin{alignat}{1}
3 \left( \frac{1}{2 \sigma_{t}^{2}} ( f_{\theta^{t}}(s,a) - f_{\theta^{0}}(s,a) )^{2} - \frac{1}{2 \sigma_{t+1}^{2}} ( f_{\theta^{t+1}_{n}}(s,a) - f_{\theta^{0}}(s,a) )^{2} \right)^{2} \le \frac{3 R_{\theta}^{4}}{4 \sigma_{t+1}^{4}}. \label{eq:square_sigma_r}
\end{alignat}
By substituting (\ref{eq:square_sigma_r}) into (\ref{eq:expected_taylor_expansion}), we can obtain
\begin{alignat}{1}
& \mathbb{E}_{s \thicksim \rho_{\pi^{\theta^{t}},n}, a \thicksim \pi^{\theta^{t}}} \left[ \left( - \frac{ (a - f_{\theta_{n}^{t+1}}(s,a) )^{2}}{2 \sigma_{t+1}^{2}} - \beta_{t}^{-1} Q^{w}(s,a) + \frac{ (a - f_{\theta^{t}}(s,a) )^{2}}{2 \sigma_{t}^{2}} + \log \frac{\sigma_{t}}{\sigma_{t+1}} \right)^{2} \right] \nonumber \\
&\le 3 \mathbb{E}_{s \thicksim \rho_{\pi^{\theta^{t}},n}, a \thicksim \pi^{\theta^{t}}} \left[ \left( \upsilon_{t+1}^{-1}(s,a) f_{\theta^{t+1}_{n}}(s,a) - \left( \beta_{t}^{-1} Q^{w}(s,a) + \upsilon_{t}^{-1}(s,a) f_{\theta^{t}}(s,a) \right) \right)^{2} \right] \nonumber \\
& \quad + 3 \mathbb{E}_{s \thicksim \rho_{\pi^{\theta^{t}},n}, a \thicksim \pi^{\theta^{t}}} \left[ \left( \log \frac{\sigma_{t}}{\sigma_{t+1}} + \frac{1}{2} \left( \frac{1}{\sigma_{t+1}^{2}} - \frac{1}{\sigma_{t}^{2}} \right) ( f_{\theta^{0}}^{2}(s,a) - a^{2} ) \right)^{2} \right] + \frac{3 R_{\theta}^{4}}{4 \sigma_{t+1}^{4}}. \nonumber
\end{alignat}
Therefore,
\begin{alignat}{1}
& \mathbb{E}_{s \thicksim \rho_{\pi^{\theta^{t}},n}, a \thicksim \pi^{\theta^{t}}} \left[ \left( \upsilon_{t+1}^{-1}(s,a) f_{\theta^{t+1}_{n}}(s,a) - \left( \beta_{t}^{-1} Q^{w}(s,a) + \upsilon_{t}^{-1}(s,a) f_{\theta^{t}}(s,a) \right) \right)^{2} \right] \nonumber \\
&\le \frac{\left( \sup \left\vert \upsilon_{t+1}^{-1}(s,a) \right\vert \epsilon_{n}^{t+1} \right)^{2}}{3} - \phi^{t+1}_{n} - \frac{R_{\theta}^{4}}{4 \sigma_{t+1}^{4}} \label{eq:proof:policy_improvement_error_fedrac0}
\end{alignat}
indicates
\begin{alignat}{1}
&\mathbb{E}_{s \thicksim \rho_{\pi^{\theta^{t}},n}, a \thicksim \pi^{\theta^{t}}} \left[ \left( - \frac{(a - f_{\theta_{n}^{t+1}}(s,a))^{2}}{2 \sigma_{t+1}^{2}} - \beta_{t}^{-1} Q^{w}(s,a) + \frac{(a - f_{\theta^{t}}(s,a))^{2}}{2 \sigma_{t}^{2}} + \log \frac{\sigma_{t}}{\sigma_{t+1}} \right)^{2} \right] \nonumber \\
&\le \left( \sup \left\vert \upsilon_{t+1}^{-1}(s,a) \right\vert \epsilon_{n}^{t+1} \right)^{2}. \label{eq:proof:policy_improvement_error_fedrac}
\end{alignat}
(\ref{eq:proof:policy_improvement_error_fedrac0}) and (\ref{eq:proof:policy_improvement_error_fedrac}) is crucial for linking FedRAC with the baseline, as it illustrates the accuracy of the policy improvement step required by FedRAC to achieve comparable accuracy with the baseline. Next, we turn to the policy evaluation error.
\begin{alignat}{1}
& \mathbb{E}_{s \thicksim \rho_{\pi^{\theta^{t}},n}, a \thicksim \pi^{\theta^{t}}} \left[ \left( Q^{w^{t}}(s,a) - \sum_{i} \frac{q_{i}\rho_{\pi^{\ast},i}(s)}{Z_{\pi^{\ast}}(s)} Q^{\pi^{\theta^{t}}}_{i}(s,a) \right)^{2} \right] \nonumber \\
&\le \mathbb{E}_{s \thicksim \rho_{\pi^{\theta^{t}},n}, a \thicksim \pi^{\theta^{t}}} \left[ 2 \left( Q^{w^{t}}(s,a) - \sum_{j} q_{j} Q^{w_{j}^{t}}(s,a) \right)^{2} \right] \nonumber \\
& \quad + \mathbb{E}_{s \thicksim \rho_{\pi^{\theta^{t}},n}, a \thicksim \pi^{\theta^{t}}} \left[ 2 \left( \sum_{j} q_{j} Q^{w_{j}^{t}}(s,a) - \sum_{i} \frac{q_{i}\rho_{\pi^{\ast},i}(s)}{Z_{\pi^{\ast}}(s)} Q^{\pi^{\theta^{t}}}_{i}(s,a) \right)^{2} \right], \label{eq:policy_evaluation_error_fedrac0}
\end{alignat}
where we have used the fact that $(a + b)^{2} \le 2a^{2} + 2b^{2}$. The first term in the RHS of (\ref{eq:policy_evaluation_error_fedrac0}) can be bounded by Lemma \ref{lemma:bounded_by_r} and Jensen's inequality as
\begin{alignat}{1}
& 2 \left( Q^{w^{t}}(s,a) - \sum_{j} q_{j} Q^{w_{j}^{t}}(s,a) \right)^{2} \nonumber \\
&\le 4 \left( Q^{w^{t}}(s,a) - Q^{w^{0}}(s,a) \right)^{2} + 4 \sum_{j} q_{j} \left( Q^{w^{0}}(s,a) - Q^{w_{j}^{t}}(s,a) \right)^{2} \nonumber \\
&\le 8 R_{w}^{2}. \label{eq:qet-qet0}
\end{alignat}
Therefore, it remains to bound the second term in the RHS of (\ref{eq:policy_evaluation_error_fedrac0}). By Jensen's inequality, we have
\begin{alignat}{1}
\left( \sum_{j} q_{j} Q^{w_{j}^{t}}(s,a) - \sum_{i} \frac{q_{i}\rho_{\pi^{\ast},i}(s)}{Z_{\pi^{\ast}}(s)} Q^{\pi^{\theta^{t}}}_{i}(s,a) \right)^{2} &\le \sum_{j} q_{j} \left( Q^{w_{j}^{t}}(s,a) - \frac{\rho_{\pi^{\ast},j}(s)}{Z_{\pi^{\ast}}(s)} Q^{\pi^{\theta^{t}}}_{j}(s,a) \right)^{2} \nonumber \\
&= \mathbb{E}_{i \thicksim \mathcal{C}} \left[ \left( Q^{w_{i}^{t}}(s,a) - \frac{\rho_{\pi^{\ast},i}(s)}{Z_{\pi^{\ast}}(s)} Q^{\pi^{\theta^{t}}}_{i}(s,a) \right)^{2} \right]. \nonumber
\end{alignat}
By the fact that $(a + b)^{2} \le 2a^{2} + 2b^{2}$ and $ab - cd = a(b-d) + d(a-c)$, we have
\begin{alignat}{1}
& \mathbb{E}_{s \thicksim \rho_{\pi^{\theta^{t}},n}, a \thicksim \pi^{\theta^{t}}} \left[ 2 \sum_{i} q_{i} \left( Q^{w_{i}^{t}}(s,a) - \frac{\rho_{\pi^{\ast},i}(s)}{Z_{\pi^{\ast}}(s)} Q^{\pi^{\theta^{t}}}_{i}(s,a) \right)^{2} \right] \nonumber \\
&\le 4 \mathbb{E}_{s \thicksim \rho_{\pi^{\theta^{t}},n}, a \thicksim \pi^{\theta^{t}}, i \thicksim \mathcal{C}} \left[ \left\vert Q^{\pi^{\theta^{t}}}_{i}(s,a) \right\vert^{2} \left\vert 1 - \frac{\rho_{\pi^{\ast},i}(s)}{Z_{\pi^{\ast}}(s)} \right\vert^{2} + \left\vert \frac{\rho_{\pi^{\ast},i}(s)}{Z_{\pi^{\ast}}(s)} \right\vert^{2} \left\vert Q^{w^{t}_{i}}(s,a) - Q_{i}^{\pi^{\theta^{t}}}(s,a) \right\vert^{2} \right]. \label{eq:policy_evaluation_error_fedrac1}
\end{alignat}
By Holder's inequality and (\ref{eq:policy_evaluation_error_fedrac1}), we have 
\begin{alignat}{1}
& \mathbb{E}_{s \thicksim \rho_{\pi^{\theta^{t}},n}, a \thicksim \pi^{\theta^{t}}} \left[ 2 \sum_{i} q_{i} \left( Q^{w_{i}^{t}}(s,a) - \frac{\rho_{\pi^{\ast},i}(s)}{Z_{\pi^{\ast}}(s)} Q^{\pi^{\theta^{t}}}_{i}(s,a) \right)^{2} \right] \nonumber \\
&\le 4 \mathbb{E}_{s \thicksim \rho_{\pi^{\theta^{t}},i}, a \thicksim \pi^{\theta^{t}}, i \thicksim \mathcal{C}} \left[ \left\vert \frac{\rho_{\pi^{\ast},n}(s)}{\rho_{\pi^{\ast},i}(s)} \right\vert^{2} \right]^{1/2} \cdot \left( \mathbb{E}_{s \thicksim \rho_{\pi^{\theta^{t}},i}, a \thicksim \pi^{\theta^{t}}, i \thicksim \mathcal{C}} \left[ \left\vert Q^{\pi^{\theta^{t}}}_{i}(s,a) \right\vert^{4} \left\vert 1 - \frac{\rho_{\pi^{\ast},i}(s)}{Z_{\pi^{\ast}}(s)} \right\vert^{4} \right]^{1/2} \right. \nonumber \\
& \quad \left. + \mathbb{E}_{s \thicksim \rho_{\pi^{\theta^{t}},i}, a \thicksim \pi^{\theta^{t}}, i \thicksim \mathcal{C}} \left[ \left\vert \frac{\rho_{\pi^{\ast},i}(s)}{Z_{\pi^{\ast}}(s)} \right\vert^{4} \left\vert Q^{w^{t}_{i}}(s,a) - Q_{i}^{\pi^{\theta^{t}}}(s,a) \right\vert^{4} \right]^{1/2} \right). \nonumber
\end{alignat}
By applying Holder's inequality to $\mathbb{E}_{s \thicksim \rho_{\pi^{\theta^{t}},i}, a \thicksim \pi^{\theta^{t}}, i \thicksim \mathcal{C}} \left[ \left\vert \frac{\rho_{\pi^{\ast},i}(s)}{Z_{\pi^{\ast}}(s)} \right\vert^{4} \left\vert Q^{w^{t}_{i}} - Q_{i}^{\pi^{\theta^{t}}} \right\vert^{4} \right]^{1/2}$, we have
\begin{alignat}{1}
& \mathbb{E}_{s \thicksim \rho_{\pi^{\theta^{t}},n}, a \thicksim \pi^{\theta^{t}}} \left[ 2 \sum_{i} q_{i} \left( Q^{w_{i}^{t}}(s,a) - \frac{\rho_{\pi^{\ast},i}(s)}{Z_{\pi^{\ast}}(s)} Q^{\pi^{\theta^{t}}}_{i}(s,a) \right)^{2} \right] \nonumber \\
&\le 4 \mathbb{E}_{s \thicksim \rho_{\pi^{\theta^{t}},i}, a \thicksim \pi^{\theta^{t}}, i \thicksim \mathcal{C}} \left[ \left\vert Q^{\pi^{\theta^{t}}}_{i}(s,a) \right\vert^{4} \left\vert 1 - \frac{\rho_{\pi^{\ast},i}(s)}{Z_{\pi^{\ast}}(s)} \right\vert^{4} \right]^{1/2} \cdot \mathbb{E}_{s \thicksim \rho_{\pi^{\theta^{t}},i}, a \thicksim \pi^{\theta^{t}}, i \thicksim \mathcal{C}} \left[ \left\vert \frac{\rho_{\pi^{\ast},n}(s)}{\rho_{\pi^{\ast},i}(s)} \right\vert^{2} \right]^{1/2} \nonumber \\
& \quad + 4 \mathbb{E}_{s \thicksim \rho_{\pi^{\theta^{t}},i}, a \thicksim \pi^{\theta^{t}}, i \thicksim \mathcal{C}} \left[ \left\vert Q^{w^{t}_{i}}(s,a) - Q_{i}^{\pi^{\theta^{t}}}(s,a) \right\vert^{8} \right]^{1/4} \cdot \mathbb{E}_{s \thicksim \rho_{\pi^{\theta^{t}},i}, a \thicksim \pi^{\theta^{t}}, i \thicksim \mathcal{C}} \left[ \left\vert \frac{\rho_{\pi^{\ast},i}(s)}{Z_{\pi^{\ast}}(s)} \right\vert^{8} \right]^{1/4} \nonumber \\
& \quad \cdot \mathbb{E}_{s \thicksim \rho_{\pi^{\theta^{t}},i}, a \thicksim \pi^{\theta^{t}}, i \thicksim \mathcal{C}} \left[ \left\vert \frac{\rho_{\pi^{\ast},n}(s)}{\rho_{\pi^{\ast},i}(s)} \right\vert^{2} \right]^{1/2}. \label{eq:policy_evaluation_error_fedrac2}
\end{alignat}
By substituting (\ref{eq:qet-qet0}) and (\ref{eq:policy_evaluation_error_fedrac2}) into (\ref{eq:policy_evaluation_error_fedrac0}), we have
\begin{alignat}{1}
& \mathbb{E}_{s \thicksim \rho_{\pi^{\theta^{t}},n}, a \thicksim \pi^{\theta^{t}}} \left[ \left( Q^{w^{t}}(s,a) - \sum_{i} \frac{q_{i}\rho_{\pi^{\ast},i}(s)}{Z_{\pi^{\ast}}(s)} Q^{\pi^{\theta^{t}}}_{i}(s,a) \right)^{2} \right] \nonumber \\
&\le 4 \mathbb{E}_{s \thicksim \rho_{\pi^{\theta^{t}},i}, a \thicksim \pi^{\theta^{t}}, i \thicksim \mathcal{C}} \left[ \left\vert Q^{\pi^{\theta^{t}}}_{i}(s,a) \right\vert^{4} \left\vert 1 - \frac{\rho_{\pi^{\ast},i}(s)}{Z_{\pi^{\ast}}(s)} \right\vert^{4} \right]^{1/2} \cdot \mathbb{E}_{s \thicksim \rho_{\pi^{\theta^{t}},i}, a \thicksim \pi^{\theta^{t}}, i \thicksim \mathcal{C}} \left[ \left\vert \frac{\rho_{\pi^{\ast},n}(s)}{\rho_{\pi^{\ast},i}(s)} \right\vert^{2} \right]^{1/2} \nonumber \\
& \quad + 4 \mathbb{E}_{s \thicksim \rho_{\pi^{\theta^{t}},i}, a \thicksim \pi^{\theta^{t}}, i \thicksim \mathcal{C}} \left[ \left\vert Q^{w^{t}_{i}}(s,a) - Q_{i}^{\pi^{\theta^{t}}}(s,a) \right\vert^{8} \right]^{1/4} \cdot \mathbb{E}_{s \thicksim \rho_{\pi^{\theta^{t}},i}, a \thicksim \pi^{\theta^{t}}, i \thicksim \mathcal{C}} \left[ \left\vert \frac{\rho_{\pi^{\ast},i}(s)}{Z_{\pi^{\ast}}(s)} \right\vert^{8} \right]^{1/4} \nonumber \\
& \quad \cdot \mathbb{E}_{s \thicksim \rho_{\pi^{\theta^{t}},i}, a \thicksim \pi^{\theta^{t}}, i \thicksim \mathcal{C}} \left[ \left\vert \frac{\rho_{\pi^{\ast},n}(s)}{\rho_{\pi^{\ast},i}(s)} \right\vert^{2} \right]^{1/2} + 8 R_{w}^{2}. \nonumber
\end{alignat}
Therefore,
\begin{alignat}{1}
& \mathbb{E}_{s \thicksim \rho_{\pi^{\theta^{t}},i}, a \thicksim \pi^{\theta^{t}}} \left[ \left( Q^{w^{t}_{i}}(s,a) - Q^{\pi^{\theta_{t}}}_{i}(s,a) \right)^{8} \right]^{1/4} \nonumber \\
&\le \frac{(\dot{\epsilon}_{n}^{t+1})^{2} - 8 R_{w}^{2}}{\mathbb{E}_{i \thicksim \mathcal{C},s \thicksim \rho_{\pi^{\theta^{t}},i}} \left[ \left\vert \frac{\rho_{\pi^{\ast},n}(s)}{\rho_{\pi^{\ast},i}(s)} \right\vert^{2} \right]^{1/2} 4 \mathbb{E}_{i \thicksim \mathcal{C},s \thicksim \rho_{\pi^{\theta^{t}},i}} \left[ \left\vert \frac{\rho_{\pi^{\ast},i}(s)}{Z_{\pi^{\ast}}(s)} \right\vert^{8} \right]^{1/4} } \nonumber \\
& \quad - \frac{\mathbb{E}_{i \thicksim \mathcal{C},s \thicksim \rho_{\pi^{\theta^{t}},i},a \thicksim \pi^{\theta^{t}}} \left[ \left\vert Q^{\pi^{\theta^{t}}}_{i}(s,a) \right\vert^{4} \left\vert 1 -\frac{\rho_{\pi^{\ast},i}(s)}{Z_{\pi^{\ast}}(s)} \right\vert^{4} \right]^{1/2} }{ \mathbb{E}_{i \thicksim \mathcal{C},s \thicksim \rho_{\pi^{\theta^{t}},i}} \left[ \left\vert \frac{\rho_{\pi^{\ast},i}(s)}{Z_{\pi^{\ast}}(s)} \right\vert^{8} \right]^{1/4} }, \forall i \in [N] \nonumber
\end{alignat}
indicates
\begin{alignat}{1}
\mathbb{E}_{s \thicksim \rho_{\pi^{\theta^{t}},n}, a \thicksim \pi^{\theta^{t}}} \left[ \left( Q^{w^{t}}(s,a) - \sum_{i} \frac{q_{i}\rho_{\pi^{\ast},i}(s)}{Z_{\pi^{\ast}}(s)} Q^{\pi^{\theta^{t}}}_{i}(s,a) \right)^{2} \right] \le \left( \dot{\epsilon}_{n}^{t+1} \right)^{2}. \nonumber
\end{alignat}
Now we can prove (\ref{eq:lemma:error_propagation}) on a case-by-case basis. We first prove the case for the baseline (\ref{eq:policy_evaluation}), and the result for FedRAC (\ref{eq:policy_evaluation_fedrac}) is similar. For concrete representation, we denote $\log \pi^{\theta^{t+1}_{n}}(\cdot \vert s) - \left( \beta_{t}^{-1} Q^{w^{t}}(s,\cdot) + \log \pi^{\theta^{t}}(\cdot \vert s) \right)$ by $y_{1}(n,t,s)$, denote $\log \pi^{\theta^{t+1}_{n}}(a \vert s) - \left( \beta_{t}^{-1} Q^{w^{t}}(s,a) + \log \pi^{\theta^{t}}(a \vert s) \right)$ by $y_{2}(n,t,s,a)$, denote $\beta_{t}^{-1} Q^{\pi^{\theta^{t}}}(s,\cdot) - \beta_{t}^{-1} Q^{w^{t}}(s,\cdot)$ by $y_{3}(t,s)$, and denote $\beta_{t}^{-1} Q^{\pi^{\theta^{t}}}(s,a) - \beta_{t}^{-1} Q^{w^{t}}(s,a)$ by $y_{4}(t,s,a)$. By definition and the triangle inequality, we have
\begin{alignat}{1}
& \left\vert \mathbb{E}_{s \thicksim \rho_{\pi^{\ast},n}} \left[ \left< \log \left( \pi^{\theta^{t+1}_{n}}(\cdot \vert s) / \pi^{t+1}_{n}(\cdot \vert s) \right), \pi^{\ast}(\cdot \vert s) - \pi^{\theta^{t}}(\cdot \vert s) \right> \right] \right\vert \nonumber \\
&= \left\vert \mathbb{E}_{s \thicksim \rho_{\pi^{\ast},n}} \left[ \left< \log \pi^{\theta^{t+1}_{n}}(\cdot \vert s) - \left( \beta_{t}^{-1} Q^{\pi^{\theta^{t}}}(s,\cdot) + \log \pi^{\theta^{t}}(\cdot \vert s) \right), \pi^{\ast}(\cdot \vert s) - \pi^{\theta^{t}}(\cdot \vert s) \right> \right] \right\vert \nonumber \\
&\le \left\vert \mathbb{E}_{s \thicksim \rho_{\pi^{\ast},n}} \left[ \left< y_{1}(n,t,s), \pi^{\ast}(\cdot \vert s) - \pi^{\theta^{t}}(\cdot \vert s) \right> \right] \right\vert + \left\vert \mathbb{E}_{s \thicksim \rho_{\pi^{\ast},n}} \left[ \left< y_{3}(t,s), \pi^{\ast}(\cdot \vert s) - \pi^{\theta^{t}}(\cdot \vert s) \right> \right] \right\vert. \label{eq:policy_evaluation_error_fedrac3}
\end{alignat}
First, we bound the first term on the RHS of (\ref{eq:policy_evaluation_error_fedrac3}). By the Cauchy-Schwarz inequality, we have
\begin{alignat}{1}
& \left\vert \mathbb{E}_{s \thicksim \rho_{\pi^{\ast},n}} \left[ \left< y_{1}(n,t,s), \pi^{\ast}(\cdot \vert s) - \pi^{\theta^{t}}(\cdot \vert s) \right> \right] \right\vert \nonumber \\
&= \left\vert \int_{\mathcal{S}} \left< y_{1}(n,t,s), \pi^{\theta^{t}}(\cdot \vert s) \rho_{\pi^{\theta^{t}},n}(s) \left( \frac{\pi^{\ast}(\cdot \vert s)}{\pi^{\theta^{t}}(\cdot \vert s)} - \frac{\pi^{\theta^{t}}(\cdot \vert s)}{\pi^{\theta^{t}}(\cdot \vert s)} \right) \right> \cdot \frac{\rho_{\pi^{\ast},n}(s)}{\rho_{\pi^{\theta^{t}},n}(s)} \mathrm{d} s \right\vert \nonumber \\
&= \left\vert \int_{\mathcal{S} \times \mathcal{A}} y_{2}(n,t,s,a) \cdot \left( \frac{\rho_{\pi^{\ast},n}(s) \pi^{\ast}(a \vert s)}{\rho_{\pi^{\theta^{t}},n}(s) \pi^{\theta^{t}}(a \vert s)} - \frac{\rho_{\pi^{\ast},n}(s)}{\rho_{\pi^{\theta^{t}},n}(s)} \right) \mathrm{d} \rho_{\pi^{\theta^{t}}}(s) \pi^{\theta^{t}}(a \vert s) \right\vert \nonumber \\
&\le \mathbb{E}_{s \thicksim \rho_{\pi^{\theta^{t},n}, a \thicksim \pi^{\theta^{t}}}} \left[ y_{2}^{2}(n,t,s,a) \right]^{1/2} \cdot \mathbb{E}_{s \thicksim \rho_{\pi^{\theta^{t},n}, a \thicksim \pi^{\theta^{t}}}} \left[ \left( \frac{\rho_{\pi^{\ast},n}(s) \pi^{\ast}(a \vert s)}{\rho_{\pi^{\theta^{t}},n}(s) \pi^{\theta^{t}}(a \vert s)} - \frac{\rho_{\pi^{\ast},n}(s)}{\rho_{\pi^{\theta^{t}},n}(s)} \right)^{2} \right]^{1/2}. \label{eq:policy_evaluation_error_fedrac4}
\end{alignat}
Next, we bound the second term on the RHS of (\ref{eq:policy_evaluation_error_fedrac3}). By the Cauchy-Schwarz inequality, we have
\begin{alignat}{1}
& \left\vert \mathbb{E}_{s \thicksim \rho_{\pi^{\ast},n}} \left[ \left< y_{3}(t,s), \pi^{\ast}(\cdot \vert s) - \pi^{\theta^{t}}(\cdot \vert s) \right> \right] \right\vert \nonumber \\
&= \left\vert \int_{\mathcal{S}} \left< y_{3}(t,s), \pi^{\theta^{t}}(\cdot \vert s) \rho_{\pi^{\theta^{t}},n}(s) \left( \frac{\pi^{\ast}(\cdot \vert s)}{\pi^{\theta^{t}}(\cdot \vert s)} - \frac{\pi^{\theta^{t}}(\cdot \vert s)}{\pi^{\theta^{t}}(\cdot \vert s)} \right) \right> \cdot \frac{\rho_{\pi^{\ast},n}(s)}{\rho_{\pi^{\theta^{t}},n}(s)} \mathrm{d} s \right\vert \nonumber \\
&= \left\vert \int_{\mathcal{S} \times \mathcal{A}} y_{4}(t,s,a) \cdot \left( \frac{\rho_{\pi^{\ast},n}(s) \pi^{\ast}(\cdot \vert s)}{\rho_{\pi^{\theta^{t}},n}(s) \pi^{\theta^{t}}(\cdot \vert s)} - \frac{\rho_{\pi^{\ast},n}(s)}{\rho_{\pi^{\theta^{t}},n}(s)} \right) \mathrm{d} \rho_{\pi^{\theta^{t}}}(s) \pi^{\theta^{t}}(a \vert s) \right\vert \nonumber \\
&\le \mathbb{E}_{s \thicksim \rho_{\pi^{\theta^{t},n}, a \thicksim \pi^{\theta^{t}}}} \left[ y_{4}^{2}(t,s,a) \right]^{1/2} \cdot \mathbb{E}_{s \thicksim \rho_{\pi^{\theta^{t},n}, a \thicksim \pi^{\theta^{t}}}} \left[ \left( \frac{\rho_{\pi^{\ast},n}(s) \pi^{\ast}(a \vert s)}{\rho_{\pi^{\theta^{t}},n}(s) \pi^{\theta^{t}}(a \vert s)} - \frac{\rho_{\pi^{\ast},n}(s)}{\rho_{\pi^{\theta^{t}},n}(s)} \right)^{2} \right]^{1/2}. \label{eq:policy_evaluation_error_fedrac5}
\end{alignat}
By substituting (\ref{eq:policy_evaluation_error_fedrac4}) and (\ref{eq:policy_evaluation_error_fedrac5}) into (\ref{eq:policy_evaluation_error_fedrac3}), we can obtain
\begin{alignat}{1}
& \left\vert \mathbb{E}_{s \thicksim \rho_{\pi^{\ast},n}} \left[ \left< \log \left( \pi^{\theta^{t+1}_{n}}(\cdot \vert s) / \pi^{t+1}_{n}(\cdot \vert s) \right), \pi^{\ast}(\cdot \vert s) - \pi^{\theta^{t}}(\cdot \vert s) \right> \right] \right\vert \nonumber \\
&\le \sup \left\vert c_{t+1}^{-1}(s,a) \right\vert \epsilon^{t+1}_{n} \psi^{t}_{n} + \beta_{t+1}^{-1} \dot{\epsilon}^{t+1}_{n} \psi^{t}_{n}, \nonumber
\end{alignat}
where $c_{t}(s,a) = \tau_{t}$ for Softmax policy and $c_{t}(s,a) = \upsilon_{t}(s,a)$ for Gaussian policy. The proof for FedRAC (\ref{eq:policy_evaluation_fedrac}) can be obtained by replacing $Q^{\pi^{\theta^{t}}}$ with $\sum_{i} \frac{q_{i}\rho_{\pi^{\ast},i}(s)}{Z_{\pi^{\ast}}(s)} Q^{\pi^{\theta^{t}}}_{i}$ in (\ref{eq:policy_evaluation_error_fedrac3}).
\end{proof}

\section{Proof of Lemma \ref{lemma:stepwise_energy_difference}\label{proof:lemma:stepwise_energy_difference}}

\begin{proof}
We first prove the case with the Softmax policy. For all $n \in [N]$, we have
\begin{alignat}{1}
& \mathbb{E}_{\text{init}, s \thicksim \rho_{\pi^{\ast},n}} \left[ \left\Vert \log \pi^{\theta^{t+1}_{n}}(\cdot \vert s) / \pi^{\theta^{t}}(\cdot \vert s) \right\Vert_{\infty}^{2} \right] \nonumber \\
&= \mathbb{E}_{\text{init}, s \thicksim \rho_{\pi^{\ast},n}} \left[ \max_{a \in \mathcal{A}} \left( \tau_{t+1}^{-1} f_{\theta^{t+1}_{n}}(s,a) - \tau_{t}^{-1} f_{\theta^{t}}(s,a) + \log \frac{\sum_{a^{\prime}} \exp \left( \tau^{-1}_{t+1} f_{\theta_{n}^{t+1}}(s,a^{\prime}) \right)}{\sum_{a^{\prime}} \exp \left( \tau^{-1}_{t} f_{\theta^{t}}(s,a^{\prime}) \right)} \right)^{2} \right] \nonumber \\
&\le 2 \mathbb{E}_{\text{init}, s \thicksim \rho_{\pi^{\ast},n}} \left[ \max_{a \in \mathcal{A}} \left( \tau_{t+1}^{-1} f_{\theta^{t+1}_{n}}(s,a) - \tau_{t}^{-1} f_{\theta^{t}}(s,a) \right)^{2} \right] \nonumber \\
& \quad + 2 \mathbb{E}_{\text{init}, s \thicksim \rho_{\pi^{\ast},n}} \left[ \left( \log \sum_{a^{\prime}} \exp \left( \tau^{-1}_{t+1} f_{\theta_{n}^{t+1}}(s,a^{\prime}) \right) - \log \sum_{a^{\prime}} \exp \left( \tau^{-1}_{t} f_{\theta^{t}}(s,a^{\prime}) \right) \right)^{2} \right], \label{eq:proof:lemma:stepwise_energy_difference0}
\end{alignat}
due to the fact that $(a + b)^{2} \le 2a^{2} + 2b^{2}$. The first term on the RHS of (\ref{eq:proof:lemma:stepwise_energy_difference0}) can be bounded as
\begin{alignat}{1}
& 2 \mathbb{E}_{\text{init}, s \thicksim \rho_{\pi^{\ast},n}} \left[ \max_{a \in \mathcal{A}} \left( \tau_{t+1}^{-1} f_{\theta^{t+1}_{n}}(s,a) - \tau_{t}^{-1} f_{\theta^{t}}(s,a) \right)^{2} \right] \nonumber \\
&\le 2 \mathbb{E}_{\text{init}, s \thicksim \rho_{\pi^{\ast},n}} \left[ \max_{a \in \mathcal{A}} \left( \vert \tau_{t+1}^{-1} \vert \vert f_{\theta^{t+1}_{n}}(s,a) - f_{\theta^{t}}(s,a) \vert + \vert f_{\theta^{t}}(s,a) \vert \vert \tau_{t+1}^{-1} - \tau_{t}^{-1} \vert \right)^{2} \right] \nonumber \\
&\le 2 \mathbb{E}_{\text{init}, s \thicksim \rho_{\pi^{\ast},n}} \left[ \max_{a \in \mathcal{A}} \left( 2 \vert \tau_{t+1}^{-1} \vert^{2} \vert f_{\theta^{t+1}_{n}}(s,a) - f_{\theta^{t}}(s,a) \vert^{2} + 2 \vert f_{\theta^{t}}(s,a) \vert^{2} \vert \tau_{t+1}^{-1} - \tau_{t}^{-1} \vert^{2} \right) \right] \nonumber \\
&\le 2 \mathbb{E}_{\text{init}, s \thicksim \rho_{\pi^{\ast},n}} \left[ 8 ( \tau_{t+1}^{-1} )^{2} R_{\theta}^{2} + 4 \left( \max_{a \in \mathcal{A}} f_{\theta^{0}}^{2}(s,a) + R_{\theta}^{2} \right) ( \tau_{t+1}^{-1} - \tau_{t}^{-1} )^{2} \right], \nonumber
\end{alignat}
where $a_{\max}(s) = \arg\max_{a \in \mathcal{A}} \tau_{t+1}^{-1} f_{\theta^{t+1}_{n}}(s,a) - \tau_{t}^{-1} f_{\theta^{t}}(s,a)$, the first and the second inequality follow from the fact that $ab - cd = a(b-d) + d(a-c)$ and $(a + b)^{2} \le 2a^{2} + 2b^{2}$, respectively. The second inequality follows from Lemma \ref{lemma:bounded_by_r} and the fact that $f^{2}_{\theta^{t}}(s,a) \le 2 f^{2}_{\theta^{0}}(s,a) +2 R_{\theta}^{2}$, which is due to the 1-Lipschitz continuity of $f_{\theta}$ w.r.t. $\theta$.

It remains to bound the second term on the RHS of (\ref{eq:proof:lemma:stepwise_energy_difference0}). We consider states $s$ with $\log \sum_{a^{\prime}} \exp \left( \tau^{-1}_{t+1} f_{\theta_{n}^{t+1}}(s,a^{\prime}) \right) \ge \log \sum_{a^{\prime}} \exp \left( \tau^{-1}_{t} f_{\theta^{t}}(s,a^{\prime}) \right)$ and the case for the other side is similar. By the log-sum inequality, the log-sum-exp trick, and the fact that $ab - cd = a(b-d) + d(a-c)$, we can obtain
\begin{alignat}{1}
& 2 \mathbb{E}_{\text{init}} \left[ \left( \log \sum_{a^{\prime}} \exp \left( \tau^{-1}_{t+1} f_{\theta_{n}^{t+1}}(s,a^{\prime}) \right) - \log \sum_{a^{\prime}} \exp \left( \tau^{-1}_{t} f_{\theta^{t}}(s,a^{\prime}) \right) \right)^{2} \right] \nonumber \\
&\le 2 \mathbb{E}_{\text{init}} \left[ \left( \max_{a \in \mathcal{A}} \tau^{-1}_{t+1} f_{\theta_{n}^{t+1}}(s,a) + \log \left\vert \mathcal{A} \right\vert - \frac{1}{\left\vert \mathcal{A} \right\vert} \sum_{a^{\prime}} \tau^{-1}_{t} f_{\theta^{t}}(s,a^{\prime}) - \log \left\vert \mathcal{A} \right\vert \right)^{2} \right] \nonumber \\ 
&\le 2 \mathbb{E}_{\text{init}} \left[ \left( \tau^{-1}_{t+1} \left( \max_{a \in \mathcal{A}} f_{\theta_{n}^{t+1}}(s,a) - \frac{1}{\left\vert \mathcal{A} \right\vert} \sum_{a^{\prime}} f_{\theta^{t}}(s,a^{\prime}) \right) + \frac{1}{\left\vert \mathcal{A} \right\vert} \sum_{a^{\prime}} f_{\theta^{t}}(s,a^{\prime}) \left( \tau^{-1}_{t+1} - \tau^{-1}_{t} \right) \right)^{2} \right] \nonumber \\
&\le 4 \mathbb{E}_{\text{init}} \left[ \left( \tau^{-1}_{t+1} \left( \max_{a \in \mathcal{A}} f_{\theta_{n}^{t+1}}(s,a) - \sum_{a^{\prime}} \frac{f_{\theta^{t}}(s,a^{\prime})}{\left\vert \mathcal{A} \right\vert} \right) \right)^{2} + \left( \sum_{a^{\prime}} \frac{f_{\theta^{t}}(s,a^{\prime})}{\left\vert \mathcal{A} \right\vert} \left( \tau^{-1}_{t+1} - \tau^{-1}_{t} \right) \right)^{2} \right]. \nonumber
\end{alignat}
By Lemma \ref{lemma:changing_action_error} and the fact that $f^{2}_{\theta^{t}}(s,a) \le 2 f^{2}_{\theta^{0}}(s,a) +2 R_{\theta}^{2}$, we can obtain
\begin{alignat}{1}
& 2 \mathbb{E}_{\text{init}} \left[ \left( \log \sum_{a^{\prime}} \exp \left( \tau^{-1}_{t+1} f_{\theta_{n}^{t+1}}(s,a^{\prime}) \right) - \log \sum_{a^{\prime}} \exp \left( \tau^{-1}_{t} f_{\theta^{t}}(s,a^{\prime}) \right) \right)^{2} \right] \nonumber \\
&\le 4 \mathbb{E}_{\text{init}} \left[ \max_{a^{\prime} \in \mathcal{A}} \left( \tau^{-1}_{t+1} \left( \max_{a \in \mathcal{A}} f_{\theta_{n}^{t+1}}(s,a) - f_{\theta^{t}}(s,a^{\prime}) \right) \right)^{2} + \left( \max_{a^{\prime} \in \mathcal{A}} f_{\theta^{t}}(s,a^{\prime}) \left( \tau^{-1}_{t+1} - \tau^{-1}_{t} \right) \right)^{2} \right] \nonumber \\
&\le 24 ( \tau_{t+1}^{-1} )^{2} R_{\theta}^{2} + 48 ( \tau_{t+1}^{-1} )^{2} + \mathbb{E}_{\text{init}} \left[ 8 \left( \max_{a \in \mathcal{A}} f_{\theta^{0}}^{2}(s,a) + R_{\theta}^{2} \right) ( \tau_{t+1}^{-1} - \tau_{t}^{-1} )^{2} \right]. \label{eq:proof:lemma:stepwise_energy_difference1}
\end{alignat}
The upper bound for states $s$ with $\log \sum_{a^{\prime}} \exp \left( \tau^{-1}_{t+1} f_{\theta_{n}^{t+1}}(s,a^{\prime}) \right) \le \log \sum_{a^{\prime}} \exp \left( \tau^{-1}_{t} f_{\theta^{t}}(s,a^{\prime}) \right)$ is the same. Therefore, we have
\begin{alignat}{1}
&\mathbb{E}_{\text{init}, s \thicksim \rho_{\pi^{\ast},n}} \left[ \left\Vert \log \pi^{\theta^{t+1}_{n}}(\cdot \vert s) / \pi^{\theta^{t}}(\cdot \vert s) \right\Vert_{\infty}^{2} \right] \nonumber \\
&\le \mathbb{E}_{\text{init}, s \thicksim \rho_{\pi^{\ast},n}} \left[ ( \tau_{t+1}^{-1} )^{2} \left( 40 R_{\theta}^{2} + 48 \right) + 16 \left( \max_{a \in \mathcal{A}} f_{\theta^{0}}^{2}(s,a) + R_{\theta}^{2} \right) ( \tau_{t+1}^{-1} - \tau_{t}^{-1} )^{2} \right]. \nonumber
\end{alignat}

Next, we prove the case with the Gaussian policy. By Lemma \ref{lemma:taylor_expansion}, we have
\begin{alignat}{1}
& \mathbb{E}_{\text{init}, s \thicksim \rho_{\pi^{\ast},n}} \left[ \max_{a \in \mathcal{A}} \left( - \frac{(a - f_{\theta_{n}^{t+1}}(s,a))^{2}}{2 \sigma_{t+1}^{2}} - \log \sigma_{t+1} + \frac{(a - f_{\theta^{t}}(s,a))^{2}}{2 \sigma_{t}^{2}} + \log \sigma_{t} \right)^{2} \right] \nonumber \\
&\le \mathbb{E}_{\text{init}, s \thicksim \rho_{\pi^{\ast},n}} \left[ \max_{a \in \mathcal{A}} \left( 3 \left( \upsilon_{t+1}^{-1}(s,a) f_{\theta^{t+1}_{n}}(s,a) - \upsilon_{t}^{-1}(s,a) f_{\theta^{t}}(s,a) \right)^{2} \right. \right. \nonumber \\
& \quad \left. \left. + 3 \left( \frac{1}{2 \sigma_{t}^{2}} ( f_{\theta^{t}}(s,a) - f_{\theta^{0}}(s,a) )^{2} - \frac{1}{2 \sigma_{t+1}^{2}} ( f_{\theta^{t+1}_{n}}(s,a) - f_{\theta^{0}}(s,a) )^{2} \right)^{2} \right. \right. \nonumber \\
& \quad \left. \left. + 3 \left( \log \frac{\sigma_{t}}{\sigma_{t+1}} + \frac{1}{2} \left( \frac{1}{\sigma_{t+1}^{2}} - \frac{1}{\sigma_{t}^{2}} \right) ( f_{\theta^{0}}^{2}(s,a) - a^{2} ) \right)^{2} \right) \right]. \nonumber
\end{alignat}
By Lemma \ref{lemma:bounded_by_r} and the discussion of (\ref{eq:proof:lemma:stepwise_energy_difference0}), we have
\begin{alignat}{1}
& \mathbb{E}_{\text{init}, s \thicksim \rho_{\pi^{\ast},n}} \left[ \max_{a \in \mathcal{A}} \left( - \frac{(a - f_{\theta_{n}^{t+1}}(s,a))^{2}}{2 \sigma_{t+1}^{2}} - \log \sigma_{t+1} + \frac{(a - f_{\theta^{t}}(s,a))^{2}}{2 \sigma_{t}^{2}} + \log \sigma_{t} \right)^{2} \right] \nonumber \\
&\le 3 \mathbb{E}_{\text{init}, s \thicksim \rho_{\pi^{\ast},n}} \left[ \max_{a \in \mathcal{A}} \left( \upsilon_{t+1}^{-1}(s,a) f_{\theta^{t+1}_{n}}(s,a) - \upsilon_{t}^{-1}(s,a) f_{\theta^{t}}(s,a) \right)^{2} \right] \nonumber \\
& \quad + 3 \mathbb{E}_{\text{init}, s \thicksim \rho_{\pi^{\ast},n}} \left[ \max_{a \in \mathcal{A}} \left( \log \frac{\sigma_{t}}{\sigma_{t+1}} + \frac{1}{2} \left( \frac{1}{\sigma_{t+1}^{2}} - \frac{1}{\sigma_{t}^{2}} \right) ( f_{\theta^{0}}^{2}(s,a) - a^{2} ) \right)^{2} \right] + \frac{3 R_{\theta}^{4}}{4 \sigma_{t+1}^{4}} \nonumber \\
&\le \mathbb{E}_{\text{init}, s \thicksim \rho_{\pi^{\ast},n}} \left[ \max_{a \in \mathcal{A}} \left( 24 ( \upsilon_{t+1}^{-1}(s, a) )^{2} R_{\theta}^{2} + 12 \left( f_{\theta^{0}}^{2}(s,a) + R_{\theta}^{2} \right) ( \upsilon_{t+1}^{-1}(s, a) - \upsilon_{t}^{-1}(s, a) )^{2} \right) \right] \nonumber \\
& \quad + 3 \mathbb{E}_{\text{init}, s \thicksim \rho_{\pi^{\ast},n}} \left[ \max_{a \in \mathcal{A}} \left( \log \frac{\sigma_{t}}{\sigma_{t+1}} + \frac{1}{2} \left( \frac{1}{\sigma_{t+1}^{2}} - \frac{1}{\sigma_{t}^{2}} \right) ( f_{\theta^{0}}^{2}(s,a) - a^{2} ) \right)^{2} \right] + \frac{3 R_{\theta}^{4}}{4 \sigma_{t+1}^{4}}. \nonumber
\end{alignat}
\end{proof}

\section{Proof of Lemma \ref{lemma:aggregation_error}: Aggregation Error in Neural Network\label{proof:lemma:aggregation_error}}

The following lemma characterizes the aggregation error in the network output.
\begin{lemma}
\label{lemma:aggregation_error}
(Aggregation Error in Neural Network). Suppose that Assumption \ref{assumption:concentrability_coefficient} holds,
we have
\begin{alignat}{1}
& \mathbb{E}_{\text{init}, n \thicksim \mathcal{C}, s \thicksim \rho_{\pi^{\ast},n}, a \thicksim \pi^{\ast}} \left[ \left\vert c_{t+1}^{-1}(s,a) f_{\theta_{n}^{t+1}}(s,a) - c_{t+1}^{-1}(s,a) f_{\theta^{t+1}}(s,a) \right\vert \right] \nonumber \\
&\le p \beta_{t}^{-1} \kappa + 2 \mathbb{E}_{n \thicksim \mathcal{C}} \left[ \sup \left\vert c_{t+1}^{-1}(s,a) \right\vert e^{t+1}_{n} \xi^{t+1}_{n} \right] + \mathcal{O} \left( \sup \left\vert c_{t+1}^{-1}(s,a) \right\vert R_{\theta}^{6/5} m^{-1/10} \hat{R}_{\theta}^{2/5} \right), \nonumber
\end{alignat}
where $c_{t+1}(s,a) = \tau_{t+1}, e_{n}^{t+1} = \epsilon_{n}^{t+1}$ for Softmax policies, $c_{t+1}(s,a) = \upsilon_{t+1}(s,a), e_{n}^{t+1} = \tilde{\epsilon}_{n}^{t+1}$ for Gaussian policies, $p = 1$ for the baseline, and $p = 0$ for FedRAC.
\end{lemma}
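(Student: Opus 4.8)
The plan is to exploit the fact that the aggregation $\theta^{t+1} = \sum_i q_i \theta_i^{t+1}$ is \emph{linear} in the parameters, whereas $f_\theta$ is nonlinear through the ReLU activation; the gap between the two is precisely the linearization error controlled by Lemma \ref{lemma:linearization_error}. Writing $g_n := c_{t+1}^{-1}(s,a) f_{\theta_n^{t+1}}(s,a)$ and letting $g_n^0$ denote the same quantity with $f$ replaced by its initialization-linearization $f^0$, I would first insert $g_n^0$ and the aggregated linearization $g^0 := c_{t+1}^{-1} f^0_{\theta^{t+1}}$ via the triangle inequality:
\[
\left\vert g_n - c_{t+1}^{-1} f_{\theta^{t+1}} \right\vert \le \left\vert g_n - g_n^0 \right\vert + \left\vert g_n^0 - g^0 \right\vert + \left\vert g^0 - c_{t+1}^{-1} f_{\theta^{t+1}} \right\vert.
\]
The first and third terms are bounded in expectation by Lemma \ref{lemma:linearization_error} (applied with $\vartheta = \theta_n^{t+1}$, resp.\ the aggregated $\theta^{t+1}$, policy $\pi^{\ast}$, and client $n$; both lie in $\mathcal{B}_{R_\theta}$ by projection), producing the $\mathcal{O}(\sup\vert c_{t+1}^{-1}\vert R_\theta^{6/5} m^{-1/10} \hat R_\theta^{2/5})$ term after multiplying by $\sup\vert c_{t+1}^{-1}\vert$.

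The crux is the middle term. Because $f^0$ is linear in $\theta$ (the indicators are frozen at $\theta^0$), aggregation commutes with linearization, $f^0_{\theta^{t+1}} = \sum_i q_i f^0_{\theta_i^{t+1}}$, hence $g^0 = \sum_i q_i g_i^0$ and $g_n^0 - g^0 = \sum_i q_i (g_n^0 - g_i^0)$. I would then return each $g^0$ to its nonlinear $g$ (again at the cost of a linearization error absorbed into the $\mathcal{O}$ term) and apply the policy-improvement accuracy hypothesis of Lemma \ref{lemma:error_propagation}, namely $c_{t+1}^{-1} f_{\theta_n^{t+1}} \approx \beta_t^{-1} Q^{w} + c_t^{-1} f_{\theta^t}$ up to $L^2$ error $c_{t+1}^{-1} e_n^{t+1}$. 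In the difference $g_n - g_i$ the common drift $c_t^{-1} f_{\theta^t}$ cancels (all clients synchronize to the same $\theta^t$ at the start of the round), leaving $\beta_t^{-1}(Q^{w_{(n)}} - Q^{w_{(i)}})$ plus the two approximation errors. Here the update order enters decisively: for the baseline each actor is improved with its own local critic $Q^{w_n^t}$, so averaging $\sum_i q_i \vert Q^{w_n^t} - Q^{w_i^t} \vert$ over $n \sim \mathcal{C}$, $s \sim \rho_{\pi^{\ast},n}$, $a \sim \pi^{\ast}$ reproduces exactly the heterogeneity $\kappa$ of Definition \ref{def:impacthete}, giving $p = 1$ with \emph{no} change of measure required (its sampling distribution already matches the target); for FedRAC every actor is improved with the same global critic $Q^{w^t}$, so this difference vanishes identically and $p = 0$.

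It remains to transport the two policy-improvement errors from the training measure $\rho_{\pi^{\theta^t}} \times \pi^{\theta^t}$, under which the hypothesis of Lemma \ref{lemma:error_propagation} holds, to the evaluation measure $\rho_{\pi^{\ast},n} \times \pi^{\ast}$ of the statement. I would apply Cauchy-Schwarz with the Radon-Nikodym derivative $\mathrm{d}(\rho_{\pi^{\ast},n}\pi^{\ast})/\mathrm{d}(\rho_{\pi^{\theta^t},n}\pi^{\theta^t})$, whose $L^2$ norm is the finite concentrability factor $\xi_n^{t+1}$ (finite by Assumption \ref{assumption:concentrability_coefficient}); this bounds the $n$-indexed error by $\sup\vert c_{t+1}^{-1}\vert e_n^{t+1} \xi_n^{t+1}$. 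Since both $n$ and $i$ are drawn from $\mathcal{C}$, the two symmetric error contributions coincide after taking expectations and combine into the factor-of-two term $2\,\mathbb{E}_{n \sim \mathcal{C}}[\sup\vert c_{t+1}^{-1}\vert e_n^{t+1} \xi_n^{t+1}]$.

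I expect the main obstacle to be the bookkeeping around the nested triangle inequalities and the change of measure: one must verify that every plugged-in parameter vector stays inside $\mathcal{B}_{R_\theta}$ so Lemma \ref{lemma:linearization_error} applies uniformly, track which error contributions are measured under the target distribution (the $\kappa$ term) versus which require the concentrability change of measure (the $e_n^{t+1}\xi_n^{t+1}$ terms), and treat Softmax and Gaussian uniformly through the placeholders $c_t$ and $e_n^{t+1}$ — invoking, in the Gaussian case, the Taylor-expansion estimate of Lemma \ref{lemma:taylor_expansion} to recast the squared-log-density improvement error into the $\upsilon_{t+1}^{-1} f$ form required by the argument above.
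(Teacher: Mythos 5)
Your proposal is correct and follows essentially the same route as the paper's proof: both exploit that the linearization $f^{0}$ commutes with parameter aggregation (via Lemma \ref{lemma:linearization_error}), then insert the policy-improvement targets $\beta_{t}^{-1}Q + c_{t}^{-1}f_{\theta^{t}}$ so that the common drift cancels, leaving the critic-dissimilarity term $\beta_{t}^{-1}\kappa$ for the baseline (local critics $Q^{w^{t}_{n}}$) and nothing for FedRAC (shared $Q^{w^{t}}$), and finally transport the two improvement errors to the target measure via Cauchy--Schwarz with the concentrability factor $\xi_{n}^{t+1}$, giving the factor-of-two term. The only cosmetic difference is the order of the triangle-inequality chain (you linearize the single client first and convert back, incurring extra linearization errors absorbed into the same $\mathcal{O}$ term; the paper passes through $\sum_{i}q_{i}f_{\theta_{i}^{t+1}}$ directly), which does not change the argument.
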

\begin{proof}
We begin by deriving the common component for the Softmax policy and subsequently extend this derivation to the baseline and FedRAC individually. By the triangle inequality and Lemma \ref{lemma:linearization_error}, we have
\begin{alignat}{1}
& \mathbb{E}_{\text{init}, n \thicksim \mathcal{C}, s \thicksim \rho_{\pi^{\ast},n}, a \thicksim \pi^{\ast}} \left[ \left\vert \tau_{t+1}^{-1} f_{\theta_{n}^{t+1}}(s,a) - \tau_{t+1}^{-1} f_{\theta^{t+1}}(s,a) \right\vert \right] \nonumber \\
&\le \mathbb{E}_{\text{init}, n \thicksim \mathcal{C}, s \thicksim \rho_{\pi^{\ast},n}, a \thicksim \pi^{\ast}} \left[ \left\vert \tau_{t+1}^{-1} f_{\theta_{n}^{t+1}}(s,a) - \sum_{i} q_{i} \tau_{t+1}^{-1} f_{\theta^{t+1}_{i}}(s,a) \right\vert \right. 
\nonumber \\
& \quad \left. + \left\vert \sum_{i} q_{i} \tau_{t+1}^{-1} f_{\theta^{t+1}_{i}}(s,a) - \sum_{i} q_{i} \tau_{t+1}^{-1} f_{\theta^{t+1}_{i}}^{0}(s,a) \right\vert + \left\vert \tau_{t+1}^{-1} f_{\theta^{t+1}}^{0}(s,a) - \tau_{t+1}^{-1} f_{\theta^{t+1}}(s,a) \right\vert \right] \nonumber \\
&\le \mathbb{E}_{\text{init}, n \thicksim \mathcal{C}, s \thicksim \rho_{\pi^{\ast},n}, a \thicksim \pi^{\ast}} \left[ \left\vert \tau_{t+1}^{-1} f_{\theta_{n}^{t+1}}(s,a) - \sum_{i} q_{i} \tau_{t+1}^{-1} f_{\theta^{t+1}_{i}}(s,a) \right\vert \right] + \mathcal{O} \left( \tau_{t+1}^{-1} R_{\theta}^{6/5} m^{-1/10} \hat{R}_{\theta}^{2/5} \right), \nonumber
\end{alignat}
where we have decomposed the global parameter into local components by linearizing $f_{\theta}(s,a)$. Next, we split the discussion into two directions, one for the baseline with action-value function $Q^{w^{t}_{n}}$ and the other for FedRAC with action-value function $Q^{w^{t}}$. For the baseline, we have
\begin{alignat}{1}
& \mathbb{E}_{\text{init}, n \thicksim \mathcal{C}, s \thicksim \rho_{\pi^{\ast},n}, a \thicksim \pi^{\ast}} \left[ \left\vert \tau_{t+1}^{-1} f_{\theta_{n}^{t+1}}(s,a) - \tau_{t+1}^{-1} f_{\theta^{t+1}}(s,a) \right\vert \right] \nonumber \\
&\le \mathbb{E}_{\text{init}, n \thicksim \mathcal{C}, s \thicksim \rho_{\pi^{\ast},n}, a \thicksim \pi^{\ast}} \left[ \left\vert \tau_{t+1}^{-1} f_{\theta_{n}^{t+1}}(s,a) - \left( \beta_{t}^{-1} Q^{w^{t}_{n}}(s,a) + \tau_{t}^{-1} f_{\theta^{t}}(s,a) \right) \right\vert \right. \nonumber \\
& \quad \left. + \sum_{i} q_{i} \left\vert \left( \beta_{t}^{-1} Q^{w^{t}_{n}}(s,a) + \tau_{t}^{-1} f_{\theta^{t}}(s,a) \right) - \left( \beta_{t}^{-1} Q^{w^{t}_{i}}(s,a) + \tau_{t}^{-1} f_{\theta^{t}}(s,a) \right) \right\vert \right. \nonumber \\
& \quad \left. + \sum_{i} q_{i} \left\vert \tau_{t+1}^{-1} f_{\theta_{i}^{t+1}}(s,a) - \left( \beta_{t}^{-1} Q^{w^{t}_{i}}(s,a) + \tau_{t}^{-1} f_{\theta^{t}}(s,a) \right) \right\vert \right] + \mathcal{O} \left( \tau_{t+1}^{-1} R_{\theta}^{6/5} m^{-1/10} \hat{R}_{\theta}^{2/5} \right) \nonumber \\
&\le 2 \mathbb{E}_{n \thicksim \mathcal{C}} \left[ \tau_{t+1}^{-1} \epsilon^{t+1}_{n} \xi^{t+1}_{n} \right] + \beta_{t}^{-1} \mathbb{E}_{n \thicksim \mathcal{C}, s \thicksim \rho_{\pi^{\ast},n}, a \thicksim \pi^{\ast}} \left[ \sum_{i} q_{i} \left\vert Q^{w^{t}_{n}}(s,a) - Q^{w^{t}_{i}}(s,a) \right\vert \right] \nonumber \\
& \quad + \mathcal{O} \left( \tau_{t+1}^{-1} R_{\theta}^{6/5} m^{-1/10} \hat{R}_{\theta}^{2/5} \right). \nonumber
\end{alignat}
For FedRAC, we have
\begin{alignat}{1}
& \mathbb{E}_{\text{init}, n \thicksim \mathcal{C}, s \thicksim \rho_{\pi^{\ast},n}, a \thicksim \pi^{\ast}} \left[ \left\vert \tau_{t+1}^{-1} f_{\theta_{n}^{t+1}}(s,a) - \tau_{t+1}^{-1} f_{\theta^{t+1}}(s,a) \right\vert \right] \nonumber \\
&\le \mathbb{E}_{\text{init}, n \thicksim \mathcal{C}, s \thicksim \rho_{\pi^{\ast},n}, a \thicksim \pi^{\ast}} \left[ \left\vert \tau_{t+1}^{-1} f_{\theta_{n}^{t+1}}(s,a) - \left( \beta_{t}^{-1} Q^{w^{t}}(s,a) + \tau_{t}^{-1} f_{\theta^{t}}(s,a) \right) \right\vert \right. \nonumber \\
& \quad \left. + \sum_{i} q_{i} \left\vert \left( \beta_{t}^{-1} Q^{w^{t}}(s,a) + \tau_{t}^{-1} f_{\theta^{t}}(s,a) \right) - \left( \beta_{t}^{-1} Q^{w^{t}}(s,a) + \tau_{t}^{-1} f_{\theta^{t}}(s,a) \right) \right\vert \right. \nonumber \\
& \quad \left. + \sum_{i} q_{i} \left\vert \tau_{t+1}^{-1} f_{\theta_{i}^{t+1}}(s,a) - \left( \beta_{t}^{-1} Q^{w^{t}}(s,a) + \tau_{t}^{-1} f_{\theta^{t}}(s,a) \right) \right\vert \right] + \mathcal{O} \left( \tau_{t+1}^{-1} R_{\theta}^{6/5} m^{-1/10} \hat{R}_{\theta}^{2/5} \right) \nonumber \\
&\le 2 \mathbb{E}_{n \thicksim \mathcal{C}} \left[ \tau_{t+1}^{-1} \epsilon^{t+1}_{n} \xi^{t+1}_{n} \right] + \mathcal{O} \left( \tau_{t+1}^{-1} R_{\theta}^{6/5} m^{-1/10} \hat{R}_{\theta}^{2/5} \right). \nonumber
\end{alignat}
Similarly, we begin by deriving the common component for the Gaussian policy and subsequently extend this derivation to the baseline and FedRAC individually. By the triangle inequality and Lemma \ref{lemma:linearization_error}, we have
\begin{alignat}{1}
& \mathbb{E}_{\text{init}, n \thicksim \mathcal{C}, s \thicksim \rho_{\pi^{\ast},n}, a \thicksim \pi^{\ast}} \left[ \left\vert \upsilon_{t+1}^{-1}(s,a) f_{\theta_{n}^{t+1}}(s,a) - \upsilon_{t+1}^{-1}(s,a) f_{\theta^{t+1}}(s,a) \right\vert \right] \nonumber \\
&\le \mathbb{E}_{\text{init}, n \thicksim \mathcal{C}, s \thicksim \rho_{\pi^{\ast},n}, a \thicksim \pi^{\ast}} \left[ \left\vert \frac{f_{\theta_{n}^{t+1}}(s,a)}{\upsilon_{t+1}(s,a)} - \sum_{i} q_{i} \frac{f_{\theta^{t+1}_{i}}(s,a)}{\upsilon_{t+1}(s,a)} \right\vert \right. \nonumber \\
& \quad + \left. \left\vert \sum_{i} q_{i} \frac{f_{\theta^{t+1}_{i}}(s,a)}{\upsilon_{t+1}(s,a)} - \sum_{i} q_{i} \frac{f_{\theta^{t+1}_{i}}^{0}(s,a)}{\upsilon_{t+1}(s,a)} \right\vert + \left\vert \frac{f_{\theta^{t+1}}^{0}(s,a)}{\upsilon_{t+1}(s,a)} - \frac{f_{\theta^{t+1}}(s,a)}{\upsilon_{t+1}(s,a)} \right\vert \right] \nonumber \\
&\le \mathbb{E}_{\text{init}, n \thicksim \mathcal{C}, s \thicksim \rho_{\pi^{\ast},n}, a \thicksim \pi^{\ast}} \left[ \left\vert \frac{f_{\theta_{n}^{t+1}}(s,a)}{\upsilon_{t+1}(s,a)} - \sum_{i} q_{i} \frac{f_{\theta^{t+1}_{i}}(s,a)}{\upsilon_{t+1}(s,a)} \right\vert \right] + \mathcal{O} \left( \sup \left\vert \upsilon_{t+1}^{-1}(s,a) \right\vert R_{\theta}^{6/5} m^{-1/10} \hat{R}_{\theta}^{2/5} \right), \nonumber
\end{alignat}
where we have decomposed the global parameter into local components by linearizing $f_{\theta}(s,a)$ as well. Next, we split the discussion into two directions, one for the baseline with action-value function $Q^{w^{t}_{n}}$ and the other for FedRAC with action-value function $Q^{w^{t}}$. For the baseline, we have
\begin{alignat}{1}
& \mathbb{E}_{\text{init}, n \thicksim \mathcal{C}, s \thicksim \rho_{\pi^{\ast},n}, a \thicksim \pi^{\ast}} \left[ \left\vert \upsilon_{t+1}^{-1}(s,a) f_{\theta_{n}^{t+1}}(s,a) - \upsilon_{t+1}^{-1}(s,a) f_{\theta^{t+1}}(s,a) \right\vert \right] \nonumber \\
&\le \mathbb{E}_{\text{init}, n \thicksim \mathcal{C}, s \thicksim \rho_{\pi^{\ast},n}, a \thicksim \pi^{\ast}} \left[ \left\vert \upsilon_{t+1}^{-1}(s,a) f_{\theta_{n}^{t+1}}(s,a) - \left( \beta_{t}^{-1} Q^{w^{t}_{n}}(s,a) + \upsilon_{t}(s,a)^{-1} f_{\theta^{t}}(s,a) \right) \right\vert \right. \nonumber \\
& \quad \left. + \sum_{i} q_{i} \left\vert \left( \beta_{t}^{-1} Q^{w^{t}_{n}}(s,a) + \upsilon_{t}^{-1}(s,a) f_{\theta^{t}}(s,a) \right) - \left( \beta_{t}^{-1} Q^{w^{t}_{i}}(s,a) + \upsilon_{t}^{-1}(s,a) f_{\theta^{t}}(s,a) \right) \right\vert \right. \nonumber \\
& \quad \left. + \sum_{i} q_{i} \left\vert \frac{f_{\theta_{i}^{t+1}}(s,a)}{\upsilon_{t+1}(s,a)} - \left( \frac{Q^{w^{t}_{i}}(s,a)}{\beta_{t}} + 
\frac{f_{\theta^{t}}(s,a)}{\upsilon_{t}(s,a)} \right) \right\vert \right] + \mathcal{O} \left( \sup \left\vert \upsilon_{t+1}^{-1}(s,a) \right\vert R_{\theta}^{6/5} m^{-1/10} \hat{R}_{\theta}^{2/5} \right) \nonumber \\
&\le 2 \mathbb{E}_{n \thicksim \mathcal{C}} \left[ \sup \left\vert \upsilon_{t+1}^{-1}(s,a) \right\vert \tilde{\epsilon}_{n}^{t+1} \xi^{t+1}_{n} \right] + \beta_{t}^{-1} \mathbb{E}_{n \thicksim \mathcal{C}, s \thicksim \rho_{\pi^{\ast},n}, a \thicksim \pi^{\ast}} \left[ \sum_{i} q_{i} \left\vert Q^{w^{t}_{n}}(s,a) - Q^{w^{t}_{i}}(s,a) \right\vert \right] \nonumber \\
& \quad + \mathcal{O} \left( \sup \left\vert \upsilon_{t+1}^{-1}(s,a) \right\vert R_{\theta}^{6/5} m^{-1/10} \hat{R}_{\theta}^{2/5}\right). \nonumber
\end{alignat}
For FedRAC, we have
\begin{alignat}{1}
& \mathbb{E}_{\text{init}, n \thicksim \mathcal{C}, s \thicksim \rho_{\pi^{\ast},n}, a \thicksim \pi^{\ast}} \left[ \left\vert \upsilon_{t+1}^{-1}(s,a) f_{\theta_{n}^{t+1}}(s,a) - \upsilon_{t+1}^{-1}(s,a) f_{\theta^{t+1}}(s,a) \right\vert \right] \nonumber \\
&\le \mathbb{E}_{\text{init}, n \thicksim \mathcal{C}, s \thicksim \rho_{\pi^{\ast},n}, a \thicksim \pi^{\ast}} \left[ \left\vert \upsilon_{t+1}^{-1}(s,a) f_{\theta_{n}^{t+1}}(s,a) - \left( \beta_{t}^{-1} Q^{w^{t}}(s,a) + \upsilon_{t}^{-1}(s,a) f_{\theta^{t}}(s,a) \right) \right\vert \right. \nonumber \\
& \quad \left. + \sum_{i} q_{i} \left\vert \frac{f_{\theta_{i}^{t+1}}(s,a)}{\upsilon_{t+1}(s,a)} - \left( \frac{Q^{w^{t}}(s,a)}{\beta_{t}} + \frac{f_{\theta^{t}}(s,a)}{\upsilon_{t}(s,a)} \right) \right\vert \right] + \mathcal{O} \left( \sup \left\vert \upsilon_{t+1}^{-1}(s,a) \right\vert R_{\theta}^{6/5} m^{-1/10} \hat{R}_{\theta}^{2/5} \right) \nonumber \\
&\le 2 \mathbb{E}_{n \thicksim \mathcal{C}} \left[ \sup \left\vert \upsilon_{t+1}^{-1}(s,a) \right\vert \tilde{\epsilon}_{n}^{t+1} \xi^{t+1}_{n} \right] + \mathcal{O} \left( \sup \left\vert \upsilon_{t+1}^{-1}(s,a) \right\vert R_{\theta}^{6/5} m^{-1/10} \hat{R}_{\theta}^{2/5} \right). \nonumber
\end{alignat}
\end{proof}

\begin{remark}
\label{remark:log-linear}
Lemma \ref{lemma:aggregation_error} reflects the impacts of the non-linearity of parameterization and over-parameterization via $\mathcal{O} \left( R_{\theta}^{6/5} m^{-1/10} \hat{R}_{\theta}^{2/5} \right)$. This impact is twofold: First, as $m \to \infty$, this term will eventually disappear, and the aggregation error will be independent of the network structure. Second, as suggested by the proof in Appendix \ref{proof:lemma:aggregation_error}, this term is due to the non-linearity of the network structure, and a log-linear policy does not suffer from this issue.
\end{remark}

\section{Proof of Lemma \ref{lemma:omega}\label{proof:lemma:omega}}

We need the following lemma for proving Lemma \ref{lemma:omega}.
\begin{lemma}
\label{lemma:changing_action_error}
Let $h_{\vartheta}$ be the placeholder for $f_{\theta}$ and $h_{w}$. For parameters $\vartheta, \vartheta^{\prime} \in \mathcal{B}_{R_{\vartheta}}$, state $s \in \mathcal{S}$, and any pair of action $a$ and $a^{\prime}$, we have
\begin{alignat}{1}
\mathbb{E}_{\text{init}} \left[ \left\vert h_{\vartheta}(s,a) - h_{\vartheta^{\prime}}(s,a^{\prime}) \right\vert \right] &\le \sqrt{6 R_{\vartheta}^{2} + 12} \le \sqrt{6} R_{\vartheta} + \mathcal{O} \left( 1 \right). \label{eq:lemma:changing_action_error_0}
\end{alignat}
\end{lemma}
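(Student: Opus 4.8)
The plan is to control the squared quantity $\mathbb{E}_{\text{init}}\!\left[(h_{\vartheta}(s,a) - h_{\vartheta^{\prime}}(s,a^{\prime}))^{2}\right]$ and then pass to the absolute value through Jensen's inequality, since $\mathbb{E}_{\text{init}}[|Y|] \le (\mathbb{E}_{\text{init}}[Y^{2}])^{1/2}$. First I would insert the initialization network $h_{\vartheta^{0}}$ and separate the dependence on the parameter from the dependence on the action, writing
\begin{alignat}{1}
h_{\vartheta}(s,a) - h_{\vartheta^{\prime}}(s,a^{\prime}) &= \left( h_{\vartheta}(s,a) - h_{\vartheta^{0}}(s,a) \right) + \left( h_{\vartheta^{0}}(s,a) - h_{\vartheta^{0}}(s,a^{\prime}) \right) \nonumber \\
&\quad + \left( h_{\vartheta^{0}}(s,a^{\prime}) - h_{\vartheta^{\prime}}(s,a^{\prime}) \right). \nonumber
\end{alignat}
Applying $(x+y+z)^{2} \le 3x^{2} + 3y^{2} + 3z^{2}$ and taking $\mathbb{E}_{\text{init}}$ reduces the task to bounding three pieces, two of which are deterministic and one of which is genuinely random.

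The first and third pieces are handled immediately by Lemma \ref{lemma:bounded_by_r}, which gives $(h_{\vartheta}(s,a) - h_{\vartheta^{0}}(s,a))^{2} \le R_{\vartheta}^{2}$ and $(h_{\vartheta^{\prime}}(s,a^{\prime}) - h_{\vartheta^{0}}(s,a^{\prime}))^{2} \le R_{\vartheta}^{2}$ for every $\vartheta, \vartheta^{\prime} \in \mathcal{B}_{R_{\vartheta}}$; after the factor of $3$ these contribute $6 R_{\vartheta}^{2}$. The middle piece $\mathbb{E}_{\text{init}}[(h_{\vartheta^{0}}(s,a) - h_{\vartheta^{0}}(s,a^{\prime}))^{2}]$ carries the substance of the argument. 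Writing $h_{\vartheta^{0}}(s,a) - h_{\vartheta^{0}}(s,a^{\prime}) = \frac{1}{\sqrt{m}} \sum_{i} b_{i}\, g_{i}$ with $g_{i} = \sigma((\vartheta_{i}^{0})^{T}(s,a)) - \sigma((\vartheta_{i}^{0})^{T}(s,a^{\prime}))$, I would use that the output signs $b_{i}$ are independent, mean-zero, and independent of the input weights $\vartheta_{i}^{0}$ (hence of the $g_{i}$). This annihilates every cross term and leaves $\mathbb{E}_{\text{init}}[(h_{\vartheta^{0}}(s,a) - h_{\vartheta^{0}}(s,a^{\prime}))^{2}] = \frac{1}{m} \sum_{i} \mathbb{E}_{\text{init}}[g_{i}^{2}]$. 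Since the ReLU activation $\sigma$ is $1$-Lipschitz and $\|(s,a)\|_{2}, \|(s,a^{\prime})\|_{2} \le 1$, I bound $g_{i}^{2} \le ((\vartheta_{i}^{0})^{T}((s,a)-(s,a^{\prime})))^{2} \le \|\vartheta_{i}^{0}\|_{2}^{2}\, \|(s,a)-(s,a^{\prime})\|_{2}^{2} \le 4 \|\vartheta_{i}^{0}\|_{2}^{2}$, and then the initialization moments $\mathbb{E}_{\text{init}}[\|\vartheta_{i}^{0}\|_{2}^{2}] = \sum_{j=1}^{d_{\vartheta}} \mathbb{E}_{\text{init}}[(\vartheta_{i,j}^{0})^{2}] = d_{\vartheta} \cdot \frac{1}{d_{\vartheta} m} = \frac{1}{m}$ give $\mathbb{E}_{\text{init}}[(h_{\vartheta^{0}}(s,a) - h_{\vartheta^{0}}(s,a^{\prime}))^{2}] \le 4/m \le 4$.

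Combining the three pieces yields $\mathbb{E}_{\text{init}}[(h_{\vartheta}(s,a) - h_{\vartheta^{\prime}}(s,a^{\prime}))^{2}] \le 6 R_{\vartheta}^{2} + 12$, and Jensen's inequality followed by subadditivity of the square root, $\sqrt{6 R_{\vartheta}^{2} + 12} \le \sqrt{6}\, R_{\vartheta} + \sqrt{12}$, delivers both stated inequalities in (\ref{eq:lemma:changing_action_error_0}). The main obstacle is the bookkeeping in the middle term: one must be explicit that the randomness in $b_{i}$ is independent of that in $\vartheta_{i}^{0}$ so the off-diagonal terms vanish, and one must track that the crude constant $4$ (from $\|(s,a)-(s,a^{\prime})\|_{2} \le 2$ squared) together with the factor $3$ from the quadratic split produces exactly the additive $12$. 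Everything else is routine given Lemma \ref{lemma:bounded_by_r} and the $1$-Lipschitz property of $\sigma$.
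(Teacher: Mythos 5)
Your proof is correct and follows the paper's skeleton exactly: the same three-term decomposition through $h_{\vartheta^{0}}$, the same $(x+y+z)^{2}\le 3x^{2}+3y^{2}+3z^{2}$ split, the same appeal to Lemma \ref{lemma:bounded_by_r} for the two parameter-displacement terms, and Jensen at the front. Where you genuinely diverge is the middle (initialization) term. The paper bounds $\left\vert h_{\vartheta^{0}}(s,a)-h_{\vartheta^{0}}(s,a^{\prime})\right\vert^{2}$ crudely and deterministically: it splits into $2\vert h_{\vartheta^{0}}(s,a)\vert^{2}+2\vert h_{\vartheta^{0}}(s,a^{\prime})\vert^{2}$, drops the signs and indicators via the triangle inequality ($\vert b_{i}\vert=1$), and applies Cauchy--Schwarz twice to get $\mathbb{E}_{\text{init}}\left[\vert h_{\vartheta^{0}}(s,a)\vert^{2}\right]\le\mathbb{E}_{\text{init}}\left[\sum_{i}\Vert\vartheta_{i}^{0}\Vert_{2}^{2}\right]\le 1$, hence the additive $12$. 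You instead exploit probabilistic cancellation: the Rademacher signs $b_{i}$ kill the off-diagonal terms, and the $1$-Lipschitz property of the ReLU plus the second-moment condition gives $\mathbb{E}_{\text{init}}\left[\left( h_{\vartheta^{0}}(s,a)-h_{\vartheta^{0}}(s,a^{\prime})\right)^{2}\right]\le 4/m$, which is sharper by a factor of $m$ (immaterial here, since the lemma only claims the constant $12$). The trade-off is that your route needs hypotheses the paper never writes down in (\ref{eq:initialization}): that the $b_{i}$ are mean-zero, mutually independent, and independent of the input weights $\vartheta_{i}^{0}$. These are standard in the NTK-style literature the paper builds on, but strictly speaking they are extra assumptions, and the paper's cruder triangle-inequality argument is precisely what lets it avoid invoking them. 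If you keep your version, state that independence explicitly; otherwise both derivations land on the identical bound $6R_{\vartheta}^{2}+12$ and the same final inequality.
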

\begin{proof}
By Jensen's inequality, we have
\begin{alignat}{1}
& \mathbb{E}_{\text{init}} \left[ \left\vert h_{\vartheta}(s,a) - h_{\vartheta^{\prime}}(s,a^{\prime}) \right\vert \right]^{2} \nonumber \\
&= \mathbb{E}_{\text{init}} \left[ \left\vert h_{\vartheta}(s,a) - h_{\vartheta^{0}}(s,a) + h_{\vartheta^{0}}(s,a) - h_{\vartheta^{0}}(s,a^{\prime}) + h_{\vartheta^{0}}(s,a^{\prime}) - h_{\vartheta^{\prime}}(s,a^{\prime}) \right\vert \right]^{2} \nonumber \\
&\le \mathbb{E}_{\text{init}} \left[ \left\vert h_{\vartheta}(s,a) - h_{\vartheta^{0}}(s,a) + h_{\vartheta^{0}}(s,a) - h_{\vartheta^{0}}(s,a^{\prime}) + h_{\vartheta^{0}}(s,a^{\prime}) - h_{\vartheta^{\prime}}(s,a^{\prime}) \right\vert^{2} \right]. \nonumber
\end{alignat}
By the fact that $(a + b + c)^{2} \le 3 a^{2} + 3 b^{2} + 3 c^{2}$ and Lemma \ref{lemma:bounded_by_r}, we have
\begin{alignat}{1}
& \mathbb{E}_{\text{init}} \left\vert \left[ h_{\vartheta}(s,a) - h_{\vartheta^{\prime}}(s,a^{\prime}) \right\vert \right]^{2} \nonumber \\
&\le \mathbb{E}_{\text{init}} \left[ 3 \left\vert h_{\vartheta}(s,a) - h_{\vartheta^{0}}(s,a) \right\vert^{2} + 3 \left\vert h_{\vartheta^{0}}(s,a) - h_{\vartheta^{0}}(s,a^{\prime}) \right\vert^{2} + 3 \left\vert h_{\vartheta^{0}}(s,a^{\prime}) - h_{\vartheta^{\prime}}(s,a^{\prime}) \right\vert^{2} \right] \nonumber \\
&\le 6 R_{\vartheta}^{2} + \mathbb{E}_{\text{init}} \left[ 3 \left\vert h_{\vartheta^{0}}(s,a) - h_{\vartheta^{0}}(s,a^{\prime}) \right\vert^{2} \right]. \label{proof:eq:lemma:changing_action_error_0}
\end{alignat}
It remains to bound the second term on the RHS of (\ref{proof:eq:lemma:changing_action_error_0}). By the fact that $(a + b)^{2} \le 2 a^{2} + 2 b^{2}$, we have
\begin{alignat}{1}
3 \mathbb{E}_{\text{init}} \left[ \left\vert h_{\vartheta^{0}}(s,a) - h_{\vartheta^{0}}(s,a^{\prime}) \right\vert^{2} \right] &\le 12 \mathbb{E}_{\text{init}} \left[ \left\vert h_{\vartheta^{0}}(s,a) \right\vert^{2} \right] \nonumber \\
&= 12 \mathbb{E}_{\text{init}} \left[ \frac{1}{m} \left\vert \sum_{i}^{m} b_{i} \cdot \mathbf{1} \left\{ \left( \vartheta^{0}_{i} \right)^{T}(s,a) > 0 \right\} \left( \vartheta^{0}_{i} \right)^{T} (s,a) \right\vert^{2} \right]. \nonumber \\
&\le 12 \mathbb{E}_{\text{init}} \left[ \frac{1}{m} \left( \sum_{i}^{m} \left\vert \left( \vartheta^{0}_{i} \right)^{T} (s,a) \right\vert \right)^{2} \right]. \nonumber
\end{alignat}
By Cauchy-Schwarz inequality, Jensen's inequality, and $\left\Vert (s,a) \right\Vert_{2} \le 1$, we have
\begin{alignat}{1}
3 \mathbb{E}_{\text{init}} \left[ \left\vert h_{\vartheta^{0}}(s,a) - h_{\vartheta^{0}}(s,a^{\prime}) \right\vert^{2} \right] &\le \mathbb{E}_{\text{init}} \left[ \frac{12}{m} \left( \sum_{i}^{m} \left\Vert \vartheta^{0}_{i} \right\Vert \right)^{2} \right] \le 12 \mathbb{E}_{\text{init}} \left[ \sum_{i}^{m} \left\Vert \vartheta^{0}_{i} \right\Vert_{2}^{2} \right] \le 12. \label{proof:eq:lemma:changing_action_error_1}
\end{alignat}
We complete the proof by substituting (\ref{proof:eq:lemma:changing_action_error_1}) into (\ref{proof:eq:lemma:changing_action_error_0}) and taking the square root of both sides.
\end{proof}

Next, we prove Lemma \ref{lemma:omega}.
\begin{proof}
For the Softmax policy, we have
\begin{alignat}{1}
\Omega^{t} &= \left\vert \mathbb{E}_{\text{init}} \left[ \sum_{s} \sum_{n} q_{n} \rho_{\pi^{\ast},n}(s) \left< \log \left( \pi^{\theta^{t+1}}(\cdot \vert s) / \pi^{\theta^{t+1}_{n}}(\cdot \vert s) \right), \pi^{\ast} \right> \right] \right\vert \nonumber \\
&= \left\vert \mathbb{E}_{\text{init}, n \thicksim \mathcal{C}, s \thicksim \rho_{\pi^{\ast},n}, a \thicksim \pi^{\ast}} \left[ \log \pi^{\theta^{t+1}}(s,a) - \log \pi^{\theta_{n}^{t+1}}(s,a) \right] \right\vert \nonumber \\
&= \left\vert \mathbb{E}_{\text{init}, n \thicksim \mathcal{C}, s \thicksim \rho_{\pi^{\ast},n}, a \thicksim \pi^{\ast}} \left[ \tau^{-1}_{t+1} f_{\theta^{t+1}}(s,a) - \tau^{-1}_{t+1} f_{\theta_{n}^{t+1}}(s,a) + \log \sum_{a^{\prime}} \exp \left( \tau^{-1}_{t+1} f_{\theta_{n}^{t+1}}(s,a^{\prime}) \right) \right. \right. \nonumber \\
& \left. \left. \quad - \log \sum_{a^{\prime}} \exp \left( \tau^{-1}_{t+1} f_{\theta^{t+1}}(s,a^{\prime}) \right) \right] \right\vert. \label{eq:proof:lemma:omega0}
\end{alignat}
The last two log-sum-exp terms on the RHS of (\ref{eq:proof:lemma:omega0}) can be bounded as follows. We consider states $s$ with $\log \sum_{a^{\prime}} \exp \left( \tau^{-1}_{t+1} f_{\theta_{n}^{t+1}}(s,a^{\prime}) \right) \ge \log \sum_{a^{\prime}} \exp \left( \tau^{-1}_{t + 1} f_{\theta^{t + 1}}(s,a^{\prime}) \right)$ and the case for the other side is similar. By the log-sum inequality, we can obtain
\begin{alignat}{1}
\log \sum_{a^{\prime}} \exp \left( \tau^{-1}_{t+1} f_{\theta_{n}^{t+1}}(s,a^{\prime}) \right) &\le \max_{a^{\prime} \in \mathcal{A}} \tau^{-1}_{t+1} f_{\theta_{n}^{t+1}}(s,a^{\prime}) + \log \left\vert \mathcal{A} \right\vert. \label{eq:log-sum}
\end{alignat}
By the log-sum-exp trick, we can obtain
\begin{alignat}{1}
- \log \sum_{a^{\prime}} \exp \left( \tau^{-1}_{t+1} f_{\theta^{t+1}}(s,a^{\prime}) \right) &\le - \frac{1}{\left\vert \mathcal{A} \right\vert} \sum_{a^{\prime}} \tau^{-1}_{t+1} f_{\theta^{t+1}}(s,a^{\prime}) - \log \left\vert \mathcal{A} \right\vert. \label{eq:log-sum-exp}
\end{alignat}
For these states $s$, we have the following bound by Lemma \ref{lemma:changing_action_error}. It is easy to verify the same bound for the other states by switching terms, so we omit the related discussion.
\begin{alignat}{1}
\mathbb{E}_{\text{init}} \left[ \left\vert \log \frac{\sum_{a^{\prime}} \exp \left( \tau^{-1}_{t+1} f_{\theta_{n}^{t+1}}(s,a^{\prime}) \right)}{\sum_{a^{\prime}} \exp \left( \tau^{-1}_{t+1} f_{\theta^{t+1}}(s,a^{\prime}) \right)} \right\vert \right] &\le \left\vert \max_{a \in \mathcal{A}} \frac{f_{\theta_{n}^{t+1}}(s,a)}{\tau_{t+1}} - \frac{\sum_{a^{\prime}} f_{\theta^{t+1}}(s,a^{\prime})}{\tau_{t+1} \left\vert \mathcal{A} \right\vert} \right\vert \nonumber \\
&\le \sqrt{6} \tau_{t+1}^{-1} R_{\vartheta} + \mathcal{O} \left( \tau_{t+1}^{-1} \right). \label{eq:proof:lemma:omega1}
\end{alignat}
By Lemma \ref{lemma:aggregation_error} and substituting (\ref{eq:proof:lemma:omega1}) into (\ref{eq:proof:lemma:omega0}), we can obtain
\begin{alignat}{1}
\Omega^{t} &\le \mathbb{E}_{\text{init}, n \thicksim \mathcal{C}, s \thicksim \rho_{\pi^{\ast},n}, a \thicksim \pi^{\ast}} \left[ \left\vert \frac{f_{\theta^{t+1}}(s,a)}{\tau_{t+1}} - \frac{f_{\theta_{n}^{t+1}}(s,a)}{\tau_{t+1}} \right\vert + \left\vert \log \frac{\sum_{a^{\prime}} \exp \left( \tau^{-1}_{t+1} f_{\theta_{n}^{t+1}}(s,a^{\prime}) \right)}{\sum_{a^{\prime}} \exp \left( \tau^{-1}_{t+1} f_{\theta^{t+1}}(s,a^{\prime}) \right)} \right\vert \right] \nonumber \\
&\le 2 \mathbb{E}_{n \thicksim \mathcal{C}} \left[ \tau_{t+1}^{-1} \epsilon^{t+1}_{n} \xi^{t+1}_{n} \right] + \beta_{t}^{-1} \mathbb{E}_{n \thicksim \mathcal{C}, s \thicksim \rho_{\pi^{\ast},n}, a \thicksim \pi^{\ast}} \left[ \sum_{i} q_{i} \left\vert Q^{w^{t}_{n}}(s,a) - Q^{w^{t}_{i}}(s,a) \right\vert \right] \nonumber \\
& \quad + \sqrt{6} \tau_{t+1}^{-1} R_{\theta} + \mathcal{O} \left( \tau_{t+1}^{-1} R_{\theta}^{6/5} m^{-1/10} \hat{R}_{\theta}^{2/5} \right) \nonumber
\end{alignat}
for the baseline and
\begin{alignat}{1}
\Omega^{t+1} &\le 2 \mathbb{E}_{n \thicksim \mathcal{C}} \left[ \tau_{t+1}^{-1} \epsilon^{t+1}_{n} \xi^{t+1}_{n} \right] + \sqrt{6} \tau_{t+1}^{-1} R_{\theta} + \mathcal{O} \left( \tau_{t+1}^{-1} R_{\theta}^{6/5} m^{-1/10} \hat{R}_{\theta}^{2/5} \right) \nonumber
\end{alignat}
for FedRAC. For the Gaussian policy, we have
\begin{alignat}{1}
\Omega^{t} &= \left\vert \mathbb{E}_{\text{init}} \left[ \sum_{s} \sum_{n} q_{n} \rho_{\pi^{\ast},n}(s) \left< \log \left( \pi^{\theta^{t+1}}(\cdot \vert s) / \pi^{\theta^{t+1}_{n}}(\cdot \vert s) \right), \pi^{\ast} \right> \right] \right\vert \nonumber \\
&= \left\vert \mathbb{E}_{\text{init}, n \thicksim \mathcal{C}, s \thicksim \rho_{\pi^{\ast},n}, a \thicksim \pi^{\ast}} \left[ \log \pi^{\theta_{n}^{t+1}} - \log \pi^{\theta^{t+1}} \right] \right\vert \nonumber \\
&= \frac{1}{2 \sigma_{t+1}^{2}} \left\vert \mathbb{E}_{n \thicksim \mathcal{C}, s \thicksim \rho_{\pi^{\ast},n}, a \thicksim \pi^{\ast}} \left[ \left( a - f_{\theta^{t+1}}(s,a) \right)^2 - \left( a - f_{\theta^{t+1}_{n}}(s,a) \right)^2 \right] \right\vert. \nonumber
\end{alignat}
By utilizing Taylor expansion, we can obtain
\begin{alignat}{1}
\Omega^{t} &= \frac{1}{2 \sigma_{t+1}^{2}} \left\vert \mathbb{E}_{\text{init}, n \thicksim \mathcal{C}, s \thicksim \rho_{\pi^{\ast},n}, a \thicksim \pi^{\ast}} \left[ \left( a - f_{\theta^{0}}(s,a) \right)^2 + 2 \left( f_{\theta^{0}}(s,a) - a \right) \left( f_{\theta^{t+1}}(s,a) - f_{\theta^{0}}(s,a) \right) \right. \right. \nonumber \\
& \quad + 2 \left( f_{\theta^{t+1}}(s,a) - f_{\theta^{0}}(s,a) \right)^{2} - \left( a - f_{\theta^{0}}(s,a) \right)^2 - 2 \left( f_{\theta^{0}}(s,a) - a \right) \left( f_{\theta^{t+1}_{n}}(s,a) - f_{\theta^{0}}(s,a) \right) \nonumber \\
& \quad \left. \left. - 2 \left( f_{\theta^{t+1}_{n}}(s,a) - f_{\theta^{0}}(s,a) \right)^2 \right] \right\vert. \nonumber
\end{alignat}
Rearranging these terms, we have
\begin{alignat}{1}
\Omega^{t} &= \frac{1}{2 \sigma_{t+1}^{2}} \left\vert \mathbb{E}_{\text{init}, n \thicksim \mathcal{C}, s \thicksim \rho_{\pi^{\ast},n}, a \thicksim \pi^{\ast}} \left[ 2 \left( f_{\theta^{0}}(s,a) - a \right) \left( f_{\theta^{t+1}}(s,a) - f_{\theta^{t+1}_{n}}(s,a) \right) \right. \right. \nonumber \\
& \quad \left. \left. + 2 \left( f_{\theta^{t+1}}(s,a) - f_{\theta^{0}}(s,a) \right)^{2} - 2 \left( f_{\theta^{t+1}_{n}}(s,a) - f_{\theta^{0}}(s,a) \right)^2 \right] \right\vert \nonumber \\
&\le \left\vert \mathbb{E}_{\text{init}, n \thicksim \mathcal{C}, s \thicksim \rho_{\pi^{\ast},n}, a \thicksim \pi^{\ast}} \left[ \frac{f_{\theta^{0}}(s,a) - a}{\sigma_{t+1}^{2}} \left( f_{\theta^{t+1}}(s,a) - f_{\theta^{t+1}_{n}}(s,a) \right) \right] \right\vert + \frac{R_{\theta}^{2}}{\sigma_{t+1}^{2}} \nonumber \\
&\le \mathbb{E}_{\text{init}, n \thicksim \mathcal{C}, s \thicksim \rho_{\pi^{\ast},n}, a \thicksim \pi^{\ast}} \left[ \left\vert \upsilon_{t+1}^{-1}(s,a) f_{\theta^{t+1}}(s,a) - \upsilon_{t+1}^{-1}(s,a) f_{\theta^{t+1}_{n}}(s,a) \right\vert \right] + \frac{R_{\theta}^{2}}{\sigma_{t+1}^{2}}. \nonumber
\end{alignat}
By Lemma \ref{lemma:aggregation_error}, we can obtain
\begin{alignat}{1}
\Omega^{t} &\le 2 \mathbb{E}_{n \thicksim \mathcal{C}} \left[ \sup \left\vert \upsilon_{t+1}^{-1}(s,a) \right\vert \tilde{\epsilon}_{n}^{t+1} \xi^{t+1}_{n} \right] + \mathcal{O} \left( \sup \left\vert \upsilon_{t+1}^{-1}(s,a) \right\vert R_{\theta}^{6/5} m^{-1/10} \hat{R}_{\theta}^{2/5} \right) \nonumber \\
& \quad + \beta_{t}^{-1} \mathbb{E}_{n \thicksim \mathcal{C}, s \thicksim \rho_{\pi^{\ast},n}, a \thicksim \pi^{\ast}} \left[ \sum_{i} q_{i} \left\vert Q^{w^{t}_{n}}(s,a) - Q^{w^{t}_{i}}(s,a) \right\vert \right] + \frac{R_{\theta}^{2}}{\sigma_{t+1}^{2}} \nonumber
\end{alignat}
for the baseline and
\begin{alignat}{1}
\Omega^{t} &\le 2 \mathbb{E}_{n \thicksim \mathcal{C}} \left[ \sup \left\vert \upsilon_{t+1}^{-1}(s,a) \right\vert \tilde{\epsilon}_{n}^{t+1} \xi^{t+1}_{n} \right] + \mathcal{O} \left( \sup \left\vert \upsilon_{t+1}^{-1}(s,a) \right\vert R_{\theta}^{6/5} m^{-1/10} \hat{R}_{\theta}^{2/5} \right) + \frac{R_{\theta}^{2}}{\sigma_{t+1}^{2}} \nonumber
\end{alignat}
for FedRAC.
\end{proof}

\section{Proof of Theorem \ref{theorem:global_convergence_rate}\label{proof:theorem:global_convergence_rate}}

\begin{proof}
For the baseline with action-value function $Q^{w^{t}_{n}}$, by adding and subtracting $D_{KL}(\pi^{\ast}(\cdot \vert s) \Vert \pi^{\theta^{t+1}_{n}}(\cdot \vert s))$, we have
\begin{alignat}{1}
& D_{KL}(\pi^{\ast}(\cdot \vert s) \Vert \pi^{\theta^{t}}(\cdot \vert s)) - D_{KL}(\pi^{\ast}(\cdot \vert s) \Vert \pi^{\theta^{t+1}}(\cdot \vert s)) \nonumber \\
&= \left< \log \left( \pi^{\theta^{t+1}}(\cdot \vert s) / \pi^{\theta^{t+1}_{n}}(\cdot \vert s) \right), \pi^{\ast}(\cdot \vert s) \right> + \beta_{t}^{-1} \left< Q_{n}^{\pi^{\theta^{t}}}(s,\cdot), \pi^{\ast}(\cdot \vert s) - \pi^{\theta^{t}}(\cdot \vert s) \right> \nonumber \\
& \quad + D_{KL}(\pi^{\theta_{n}^{t+1}}(\cdot \vert s) \Vert \pi^{\theta^{t}}(\cdot \vert s)) + \left< \log \left( \pi^{\theta^{t+1}_{n}}(\cdot \vert s) / \pi^{\theta^{t}}(\cdot \vert s) \right), \pi^{\theta_{t}}(\cdot \vert s) - \pi^{\theta^{t+1}_{n}}(\cdot \vert s) \right> \nonumber \\
& \quad + \left< \log \left( \pi^{\theta^{t+1}_{n}}(\cdot \vert s) / \pi^{\theta^{t}}(\cdot \vert s) \right) - \beta_{t}^{-1} Q_{n}^{\pi^{\theta^{t}}}(s,\cdot), \pi^{\ast}(\cdot \vert s) - \pi^{\theta^{t}}(\cdot \vert s) \right>. \nonumber
\end{alignat}
By taking expectation with respect to initialization, $n \thicksim \mathcal{C}$, and $s \thicksim \rho_{\pi^{\ast},n}$, we have
\begin{alignat}{1}
& \mathbb{E}_{\text{init}, n \thicksim \mathcal{C}, s \thicksim \rho_{\pi^{\ast},n}} \left[ D_{KL}(\pi^{\ast}(\cdot \vert s) \Vert \pi^{\theta^{t+1}}(\cdot \vert s)) - D_{KL}(\pi^{\ast}(\cdot \vert s) \Vert \pi^{\theta^{t}}(\cdot \vert s)) \right] \nonumber \\
&= \mathbb{E}_{\text{init}, n \thicksim \mathcal{C}, s \thicksim \rho_{\pi^{\ast},n}} \left[ - \left< \log \left( \pi^{\theta^{t+1}_{n}}(\cdot \vert s) / \pi^{\theta^{t}}(\cdot \vert s) \right) - \beta_{t}^{-1} Q_{n}^{\pi^{\theta^{t}}}(s,\cdot), \pi^{\ast}(\cdot \vert s) - \pi^{\theta^{t}}(\cdot \vert s) \right> \right. \nonumber \\
& \quad - \beta_{t}^{-1} \left< Q_{n}^{\pi^{\theta^{t}}}(s,\cdot), \pi^{\ast}(\cdot \vert s) - \pi^{\theta^{t}}(\cdot \vert s) \right> - \left< \log \left( \pi^{\theta^{t+1}}(\cdot \vert s) / \pi^{\theta^{t+1}_{n}}(\cdot \vert s) \right), \pi^{\ast}(\cdot \vert s) \right> \nonumber \\
& \quad \left. - D_{KL}(\pi^{\theta_{n}^{t+1}}(\cdot \vert s) \Vert \pi^{\theta^{t}}(\cdot \vert s)) - \left< \log \left( \pi^{\theta^{t+1}_{n}}(\cdot \vert s) / \pi^{\theta^{t}}(\cdot \vert s) \right), \pi^{\theta_{t}}(\cdot \vert s) - \pi^{\theta^{t+1}_{n}}(\cdot \vert s) \right> \right]. \label{eq:proof:theorem:global_convergence_rate0}
\end{alignat}

For FedRAC with action-value function $Q^{w^{t}}$, we have
\begin{alignat}{1}
& D_{KL}(\pi^{\ast}(\cdot \vert s) \Vert \pi^{\theta^{t}}(\cdot \vert s)) - D_{KL}(\pi^{\ast}(\cdot \vert s) \Vert \pi^{\theta^{t+1}}(\cdot \vert s)) \nonumber \\
&= \left< \log \left( \pi^{\theta^{t+1}_{n}}(\cdot \vert s) / \pi^{\theta^{t}}(\cdot \vert s) \right) - \beta_{t}^{-1} \sum_{i} \frac{q_{i} \rho_{\pi^{\ast},i}(s)}{Z_{\pi^{\ast}}(s)} Q_{i}^{\pi^{\theta^{t}}}(s,\cdot), \pi^{\ast}(\cdot \vert s) - \pi^{\theta^{t}}(\cdot \vert s) \right> \nonumber \\
& \quad + \beta_{t}^{-1} \left< \sum_{i} \frac{q_{i} \rho_{i}(s)}{Z_{\pi^{\ast}}(s)} Q_{i}^{\pi^{\theta^{t}}}(s,\cdot), \pi^{\ast}(\cdot \vert s) - \pi^{\theta^{t}}(\cdot \vert s) \right> + \left< \log \left( \pi^{\theta^{t+1}}(\cdot \vert s) / \pi^{\theta^{t+1}_{n}}(\cdot \vert s) \right), \pi^{\ast}(\cdot \vert s) \right> \nonumber \\
& \quad + D_{KL}(\pi^{\theta_{n}^{t+1}}(\cdot \vert s) \Vert \pi^{\theta^{t}}(\cdot \vert s)) + \left< \log \left( \pi^{\theta^{t+1}_{n}}(\cdot \vert s) / \pi^{\theta^{t}}(\cdot \vert s) \right), \pi^{\theta_{t}}(\cdot \vert s) - \pi^{\theta^{t+1}_{n}}(\cdot \vert s) \right>. \nonumber
\end{alignat}
By taking expectation with respect to initialization, $n \thicksim \mathcal{C}$, and $s \thicksim \rho_{\pi^{\ast},n}$, we have
\begin{alignat}{1}
& \mathbb{E}_{\text{init}, n \thicksim \mathcal{C}, s \thicksim \rho_{\pi^{\ast},n}} \left[ D_{KL}(\pi^{\ast}(\cdot \vert s) \Vert \pi^{\theta^{t+1}}(\cdot \vert s)) - D_{KL}(\pi^{\ast}(\cdot \vert s) \Vert \pi^{\theta^{t}}(\cdot \vert s)) \right] \nonumber \\
&= \mathbb{E}_{\text{init}, n \thicksim \mathcal{C}, s \thicksim \rho_{\pi^{\ast},n}} \left[ - D_{KL}(\pi^{\theta_{n}^{t+1}}(\cdot \vert s) \Vert \pi^{\theta^{t}}(\cdot \vert s)) - \left< \log \left( \pi^{\theta^{t+1}_{n}}(\cdot \vert s) / \pi^{\theta^{t}}(\cdot \vert s) \right), \pi^{\theta_{t}}(\cdot \vert s) - \pi^{\theta^{t+1}_{n}}(\cdot \vert s) \right> \right. \nonumber \\
& \quad - \beta_{t}^{-1} \left< \sum_{i} \frac{q_{i} \rho_{\pi^{\ast},i}(s)}{Z_{\pi^{\ast}}(s)} Q_{i}^{\pi^{\theta^{t}}}(s,\cdot), \pi^{\ast}(\cdot \vert s) - \pi^{\theta^{t}}(\cdot \vert s) \right> - \left< \log \left( \pi^{\theta^{t+1}}(\cdot \vert s) / \pi^{\theta^{t+1}_{n}}(\cdot \vert s) \right), \pi^{\ast}(\cdot \vert s) \right> \nonumber \\
& \quad \left. - \left< \log \left( \pi^{\theta^{t+1}_{n}}(\cdot \vert s) / \pi^{\theta^{t}}(\cdot \vert s) \right) - \beta_{t}^{-1} \sum_{n} \frac{q_{i} \rho_{\pi^{\ast},i}(s)}{Z_{\pi^{\ast}}(s)} Q_{i}^{\pi^{\theta^{t}}}(s,\cdot), \pi^{\ast}(\cdot \vert s) - \pi^{\theta^{t}}(\cdot \vert s) \right> \right]. \label{eq:proof:theorem:global_convergence_rate1}
\end{alignat}
By the Pinsker's inequality and rearranging (\ref{eq:proof:theorem:global_convergence_rate0}) and (\ref{eq:proof:theorem:global_convergence_rate1}), we can obtain
\begin{alignat}{1}
& \mathbb{E}_{\text{init}, n \thicksim \mathcal{C}, s \thicksim \rho_{\pi^{\ast},n}} \left[ D_{KL}(\pi^{\ast}(\cdot \vert s) \Vert \pi^{\theta^{t+1}}(\cdot \vert s)) - D_{KL}(\pi^{\ast}(\cdot \vert s) \Vert \pi^{\theta^{t}}(\cdot \vert s)) \right] \nonumber \\
&\le - \beta_{t}^{-1} \mathbb{E}_{\text{init}} \left[ (\eta(\pi^{\ast}) - \eta(\pi^{\theta^{t}})) \right] + \varepsilon^{t} + \Omega^{t} - \frac{1}{2} \mathbb{E}_{\text{init}} \left\Vert \pi^{\theta_{n}^{t+1}}(\cdot \vert s) - \pi^{\theta^{t}}(\cdot \vert s) \right\Vert_{1}^{2} \nonumber \\
& \quad + \mathbb{E}_{\text{init}} \left[ \left\Vert \log \left( \pi^{\theta^{t+1}_{n}}(\cdot \vert s) \right) - \log \left( \pi^{\theta^{t}}(\cdot \vert s) \right) \right\Vert_{\infty} \cdot \left\Vert \pi^{\theta_{t}}(\cdot \vert s) - \pi^{\theta^{t+1}_{n}}(\cdot \vert s) \right\Vert_{1} \right] \nonumber \\
&\le - \beta_{t}^{-1} \mathbb{E}_{\text{init}} \left[ (\eta(\pi^{\ast}) - \eta(\pi^{\theta^{t}})) \right] + \varepsilon^{t} + \Omega^{t} + \frac{1}{2} \mathbb{E}_{\text{init}} \left\Vert \log \left( \pi^{\theta^{t+1}_{n}}(\cdot \vert s) / \pi^{\theta^{t}}(\cdot \vert s) \right) \right\Vert_{\infty}^{2}, \label{eq:proof:theorem:global_convergence_rate2}
\end{alignat}
where we have used the fact that $2xy - y^{2} \le x^{2}$. Rearranging and telescoping (\ref{eq:proof:theorem:global_convergence_rate2}), we finally obtain
\begin{alignat}{1}
\min_{0 \le t \le T} \mathbb{E}_{\text{init}} \left[ \eta(\pi^{\ast}) - \eta(\pi^{\theta^{t}}) \right] \le \frac{D_{KL}^{\ast,0} + \sum_{t=1}^{T}(\varepsilon^{t} + M^{t} + \Omega^{t})}{\sum_{t=1}^{T}\beta_{t}^{-1}}. \nonumber
\end{alignat}
\end{proof}

\section{Derivation of the Optimal Convergence Rate\label{proof:eq:best_convergence_rate}}

\begin{proof}
We consider the case with a Softmax policy for ease of illustration. By Lemmas \ref{lemma:error_propagation}, \ref{lemma:stepwise_energy_difference}, \ref{lemma:omega}, and Theorem \ref{theorem:global_convergence_rate}, we have
\begin{alignat}{1}
& \min_{0 \le t \le T} \mathbb{E}_{\text{init}} \left[ \eta(\pi^{\ast}) - \eta(\pi^{\theta^{t}}) \right] \nonumber \\
&\le \frac{D_{KL}^{\ast,0} + \sum_{t=1}^{T} \left( \varepsilon^{t} + M^{t} + \Omega^{t} \right)}{\sum_{t=1}^{T}\beta_{t}^{-1}} \nonumber \\
&= \frac{\log \left\vert \mathcal{A} \right\vert + \sum_{t=1}^{T} \frac{20 R_{\theta}^{2} + 24}{\tau_{t+1}^{2}} + \sum_{t=1}^{T} \frac{\mathcal{O} \left( R_{\theta}^{6/5} m^{-1/10} \hat{R}_{\theta}^{2/5} \right) }{\tau_{t+1}} + \sum_{t=1}^{T} \frac{\epsilon^{t} \psi^{t}}{\tau_{t+1}} + \sum_{t=1}^{T} \beta^{-1}_{t} \dot{\epsilon}^{t} \psi^{t}}{\sum_{t=1}^{T}\beta_{t}^{-1}} \nonumber \\
& \quad + \frac{\sum_{t=1}^{T} \frac{2 \epsilon^{t} \xi^{t}}{\tau_{t+1}} + \sum_{t=1}^{T} p \beta_{t}^{-1} \kappa + \sum_{t=1}^{T} 8 \left( f_{\theta^{0}}^{2} + R_{\theta}^{2} \right) \left( \tau_{t+1}^{-1} - \tau_{t}^{-1} \right)^{2} + \sum_{t=1}^{T} \frac{\sqrt{6} R_{\theta}}{\tau_{t+1}}}{\sum_{t=1}^{T}\beta_{t}^{-1}}. \nonumber
\end{alignat}
By setting $\beta_{t} = \beta \sqrt
T$ and $\tau_{t+1} = \frac{\beta T^{2}}{t + 1}$, we have
\begin{alignat}{1}
& \min_{0 \le t \le T} \mathbb{E}_{\text{init}} \left[ \eta(\pi^{\ast}) - \eta(\pi^{\theta^{t}}) \right] \nonumber \\
&\le \frac{\log \left\vert \mathcal{A} \right\vert + \sum_{t=1}^{T} \frac{20 R_{\theta}^{2} + 24}{\tau_{t+1}^{2}} + \sum_{t=1}^{T} \frac{\mathcal{O} \left( R_{\theta}^{6/5} m^{-1/10} \hat{R}_{\theta}^{2/5} \right) }{\tau_{t+1}} + \sum_{t=1}^{T} \frac{\epsilon^{t} \psi^{t}}{\tau_{t+1}} + \sum_{t=1}^{T} \beta^{-1}_{t} \dot{\epsilon}^{t} \psi^{t}}{\beta^{-1}\sqrt{T}} \nonumber \\
& \quad + \frac{\sum_{t=1}^{T} \frac{2 \epsilon^{t} \xi^{t}}{\tau_{t+1}} + \sum_{t=1}^{T} p \beta_{t}^{-1} \kappa + \sum_{t=1}^{T} 8 \left( f_{\theta^{0}}^{2} + R_{\theta}^{2} \right) \beta^{-2} T^{-4} + \sum_{t=1}^{T} \frac{\sqrt{6} R_{\theta}}{\tau_{t+1}}}{\beta^{-1}\sqrt{T}} \nonumber \\
&= \frac{\log \left\vert \mathcal{A} \right\vert + \left( 20 R_{\theta}^{2} + 24 \right) \beta^{-2} \sum_{t=1}^{T} \frac{(t+1)^2}{T^4}}{\beta^{-1}\sqrt{T}} + \mathcal{O} \left( \frac{R_{\theta}^{6/5} m^{-1/10} \hat{R}_{\theta}^{2/5}}{\sqrt{T}} \right) \nonumber \\
& \quad + \frac{\sum^{T}_{t=1} \sum^{N}_{n=1} q_{n} \epsilon_{n}^{t+1} \left( \psi^{t}_{n} + 2 \xi^{t}_{n} \right)}{T^{\frac{3}{2}}} + \frac{\sum^{T}_{t=1} \sum^{N}_{n=1} q_{n} \dot{\epsilon}_{n}^{t} \psi^{t}_{n}}{T} + \mathcal{O} \left( \frac{1}{T^{3}} \right) + p \kappa + \frac{\sqrt{6} R_{\theta}}{\sqrt{T}} \nonumber \\
&\le \frac{\log \left\vert \mathcal{A} \right\vert + \left( 20 R_{\theta}^{2} + 24 \right) \beta^{-2} T^{-1}}{\beta^{-1}\sqrt{T}} + \mathcal{O} \left( \frac{R_{\theta}^{6/5} m^{-1/10} \hat{R}_{\theta}^{2/5}}{\sqrt{T}} + \frac{R_{\theta}}{\sqrt{T}} \right) \nonumber \\
& \quad + \frac{\sum^{T}_{t=1} \sum^{N}_{n=1} q_{n} \epsilon_{n}^{t+1} \left( \psi^{t}_{n} + 2 \xi^{t}_{n} \right)}{T^{\frac{3}{2}}} + \frac{\sum^{T}_{t=1} \sum^{N}_{n=1} q_{n} \dot{\epsilon}_{n}^{t} \psi^{t}_{n}}{T} + \mathcal{O} \left( \frac{1}{T^{3}} \right) + p \kappa. \nonumber
\end{alignat}
By the optimal condition, we have $\beta = 2 \sqrt{\frac{ 5 R_{\theta}^{2} + 6 }{T \log \left\vert \mathcal{A} \right\vert}}$, and the rate follows.
\end{proof}

\end{document}